\documentclass{article}

\PassOptionsToPackage{numbers,sort}{natbib}

\usepackage[final]{neurips_2023}

\usepackage[utf8]{inputenc} %
\usepackage[T1]{fontenc}    %
\usepackage{amsfonts}       %
\usepackage{nicefrac}       %
\usepackage{wrapfig}

\usepackage{microtype}
\usepackage{graphicx}
\usepackage{subfigure}
\usepackage{booktabs} %
\usepackage{arydshln} %
\usepackage{tikz}
\usetikzlibrary{bayesnet} %
\usetikzlibrary{arrows}
\usepackage{amsmath}
\usepackage{amssymb}
\usepackage{mathtools}
\usepackage{amsthm}

\usepackage{enumitem,multirow,adjustbox}

\usepackage{hyperref}
\usepackage{url}
\usepackage{xcolor}		%
\definecolor{darkblue}{rgb}{0, 0, 0.5}
\definecolor{beaublue}{rgb}{0.74, 0.83, 0.9}
\definecolor{gainsboro}{rgb}{0.86, 0.86, 0.86}
\definecolor{kleinblue}{rgb}{0,0.18,0.65}
\hypersetup{colorlinks=true,citecolor=kleinblue, linkcolor=kleinblue, urlcolor=kleinblue}

\usepackage{pifont}
\newcommand{\cmark}{\ding{51}}%
\newcommand{\xmark}{\ding{55}}%
\newtheorem{theorem}{Theorem}[section]
\newtheorem{proposition}[theorem]{Proposition}

\newtheorem{corollary}[theorem]{Corollary}
\newtheorem{definition}[theorem]{Definition}
\newtheorem{assumption}[theorem]{Assumption}

\usepackage{amsmath,amsfonts,bm}

\newcommand{\ind}{\perp\!\!\!\!\perp}

\def\eqref#1{equation~\ref{#1}}

\def\1{\bm{1}}
\newcommand{\train}{\mathcal{D_{\mathrm{tr}}}}

\DeclareMathAlphabet{\mathsfit}{\encodingdefault}{\sfdefault}{m}{sl}
\SetMathAlphabet{\mathsfit}{bold}{\encodingdefault}{\sfdefault}{bx}{n}

\def\gE{{\mathcal{E}}}

\def\gG{{\mathcal{G}}}

\def\gY{{\mathcal{Y}}}

\def\sP{{\mathbb{P}}}

\newcommand{\R}{\mathbb{R}}

\DeclareMathOperator*{\argmax}{arg\,max}

\usepackage{enumitem,algorithm,algorithmic}

    \usepackage{xspace}
\newcommand{\ours}[0]{\texttt{GALA}\xspace}
\newcommand{\ourst}[0]{\text{GALA}\xspace}	%
\newcommand{\oursfull}[0]{\textbf{G}raph inv\textbf{A}riant \textbf{L}earning \textbf{A}ssistant\xspace}

\newcommand{\dataset}{{\cal D}}
\newcommand{\trainvirtual}{\mathcal{D}^v_{\mathrm{tr}}}

\newcommand{\env}{{\gE}}
\newcommand{\envtrain}{{\gE_{\text{tr}}}}
\newcommand{\envmix}{{\gE^{\text{mix}}_{\text{tr}}}}

\newcommand{\envall}{{\gE_{\text{all}}}}

\newcommand{\rad}{{\text{Rad}}}
\newcommand{\gen}{{{\text{gen}}}}

\newcommand{\std}[1]{{$\scriptstyle\pm#1$}}

\makeatletter
\newcommand*\rel@kern[1]{\kern#1\dimexpr\macc@kerna}
\newcommand*\widebar[1]{%
  \begingroup
  \def\mathaccent##1##2{%
    \rel@kern{0.8}%
    \overline{\rel@kern{-0.8}\macc@nucleus\rel@kern{0.2}}%
    \rel@kern{-0.2}%
  }%
  \macc@depth\@ne
  \let\math@bgroup\@empty \let\math@egroup\macc@set@skewchar
  \mathsurround\z@ \frozen@everymath{\mathgroup\macc@group\relax}%
  \macc@set@skewchar\relax
  \let\mathaccentV\macc@nested@a
  \macc@nested@a\relax111{#1}%
  \endgroup
}
\makeatother
\newcommand{\pred}[1]{\widehat{#1}\xspace}
\newcommand{\pa}[0]{\triangle\xspace}
\newcommand{\pc}[0]{\widebar{\triangle}\xspace}

\newenvironment{myquotation}{\setlength{\leftmargini}{0em}\quotation}{\endquotation}
\usepackage[hyperpageref]{backref}

\renewcommand*{\backrefalt}[4]{
  \ifcase #1 \relax
  \or
    (Cited on page #2)
  \else
    (Cited on pages #2)
  \fi
}

\definecolor{Gray}{gray}{0.9}

\graphicspath{{./fig/}}

\newcommand{\dir}[0]{\texttt{DIR}\xspace}
\newcommand{\grea}[0]{\texttt{GREA}\xspace}

\newcommand{\ciga}[0]{\texttt{CIGA}\xspace}
\newcommand{\disc}[0]{\texttt{DisC}\xspace}
\newcommand{\mole}[0]{\texttt{MoleOOD}\xspace}
\newcommand{\gil}[0]{\texttt{GIL}\xspace}

\usepackage{etoc}
\etocdepthtag.toc{mtchapter}
\etocsettagdepth{mtchapter}{subsection}
\etocsettagdepth{mtappendix}{none}

\definecolor{purple}{HTML}{7D2882}
\definecolor{lightskyblue}{HTML}{87CEFA}
\newcommand{\revision}[1]{\textcolor{blue}{#1}}
\usepackage{colortbl}

\title{Does Invariant Graph Learning via Environment Augmentation Learn Invariance?}
\author{Yongqiang Chen$^{1}$\thanks{Work done during an internship at Tencent AI Lab.}, Yatao Bian$^2$, Kaiwen Zhou$^1$\\
	 $^1$The Chinese University of Hong Kong  $^2$Tencent AI Lab\\
	 \texttt{\{yqchen\!,kwzhou\}@cse\!.\!cuhk\!.\!edu\!.\!hk}\ \ 
	\texttt{yatao\!.\!bian@gmail\!.\!com}\\\vspace{-0.2in}
	 \AND
	 Binghui Xie$^1$, Bo Han$^3$, James Cheng$^1$ \\
	 $^3$Hong Kong Baptist University\\ 
    \texttt{bhanml@comp\!.\!hkbu\!.\!edu\!.\!hk} \ \texttt{\{bhxie21\!,jcheng\}@cse\!.\!cuhk\!.\!edu\!.\!hk}\\
\setcounter{footnote}{0}
}

\begin{document}
\maketitle

\begin{abstract}
  Invariant graph representation learning aims to learn the invariance among data from different environments for out-of-distribution generalization on graphs.
  As the graph environment partitions are usually expensive to obtain,
  augmenting the environment information has become the \emph{de facto} approach.
  However, the \emph{usefulness} of the augmented environment information has never been verified.
  In this work, we find that it is fundamentally \emph{impossible} to learn invariant graph representations via environment augmentation without additional assumptions.
  Therefore, we develop a set of \emph{minimal assumptions},
  including variation sufficiency and variation consistency, for feasible invariant graph learning.
  We then propose a new framework \oursfull (\ours). \ours incorporates an assistant model that needs to be sensitive to graph environment changes or distribution shifts.
  The correctness of the proxy predictions by the assistant model hence can differentiate the variations in spurious subgraphs.
  We show that extracting the maximally invariant subgraph to the proxy predictions provably identifies the underlying invariant subgraph for successful OOD generalization under the established minimal assumptions.
  Extensive experiments on $12$ datasets including DrugOOD with various graph distribution shifts confirm the effectiveness of \ours\footnote{Code is available at \url{https://github.com/LFhase/GALA}.}.
\end{abstract}

\section{Introduction}
Graph representation learning with graph neural networks (GNNs) has proven to be highly successful in tasks involving relational information~\citep{gcn,sage,gat,jknet,gin}.
However, it assumes that the training and test graphs are independently drawn from the identical distribution (iid.), which can hardly hold for many graph applications such as in Social Network, and Drug Discovery~\citep{ogb,wilds,TDS,ai4sci,ood_kinetics,gdl_ds}.
The performance of GNNs could be seriously degenerated by \emph{graph distribution shifts}, i.e., mismatches between the training and test graph distributions caused by some underlying environmental factors during the graph data collection process~\citep{causaladv,closer_look_ood,drugood,good_bench,ood_kinetics,gdl_ds}.
To overcome the Out-of-Distribution (OOD) generalization failure, recently there has been a growing surge of interest in incorporating the invariance principle from causality~\citep{inv_principle} into GNNs~\citep{eerm,dir,ciga,gsat,dps,grea,gil,disc,moleood}.
The rationale of the invariant graph learning approaches is to identify the underlying \emph{invariant subgraph} of the input graph, which shares an invariant correlation with the target labels across multiple graph distributions from different environments~\citep{eerm,ciga}.
Thus, the predictions made merely based on the invariant subgraphs can be generalized to OOD graphs that come from a new environment~\citep{inv_principle}.

As the environment labels or partitions on graphs are often expensive to obtain~\citep{ciga}, augmenting the environment information, such as generating new environments~\citep{eerm,dir,grea} and inferring the environment labels~\citep{gil,moleood}, has become the \emph{de facto} approach for invariant graph learning.
However, little attention has been paid to verifying the \emph{fidelity} (or \emph{faithfulness}\footnote{The \emph{fidelity} or \emph{faithfulness} refers to whether the augmented environment information can actually improve the OOD generalization on graphs.}) of the augmented environment information. For example, if the generated environments or inferred environment labels induce a higher bias or noise, it would make the learning of graph invariance even harder.
Although it looks appealing to \emph{learn both}
the environment information and the graph invariance,
the existing approaches could easily run into the ``no free lunch'' dilemma~\citep{no_free_lunch}.
In fact, \citet{zin} found that there exist negative cases in the Euclidean regime where it is impossible to identify the invariant features without environment partitions.
When it comes to the graph regime where the OOD generalization
is fundamentally more difficult~\citep{ciga} than the Euclidean regime, it raises a challenging research question:
\begin{myquotation}
  \emph{When and how could one learn graph invariance without the environment labels?}
\end{myquotation}
In this work, we present a theoretical investigation of the problem and seek a set of \emph{minimal assumptions} on the underlying environments for feasible invariant graph learning.
Based on a family of simple graph examples (Def.~\ref{def:twobit_graph}),
we show that existing environment generation approaches can fail to generate faithful environments,
when the underlying environments are not sufficient to uncover all the variations of the spurious subgraphs (Prop.~\ref{thm:env_gen_fail}).
On the contrary, incorporating the generated environments can even lead to a worse OOD performance. The failure of faithful environment generation implies the necessity of \emph{variation sufficiency} (Assumption~\ref{assump:var_sufficiency}).
Moreover, even with sufficient environments, inferring faithful environment labels remains impossible.
Since invariant and spurious subgraphs can have an arbitrary degree of correlation with labels, there exist multiple sets of training environments that have the same joint distribution of $P(G, Y)$ but  different invariant subgraphs. Any invariant graph learning algorithms will inevitably fail to identify the invariant subgraph in at least one set of training environments (Prop.~\ref{thm:env_infer_fail}).
Therefore, we need to additionally ensure the \emph{variation consistency} (Assumption.~\ref{assump:var_consistency}), that is, the invariant and spurious subgraphs should have a consistent relationship in the correlation strengths with the labels.

\begin{figure}[t]\label{fig:illustration}
  \vspace{-0.15in}
  \centering
  \includegraphics[width=0.9\columnwidth]{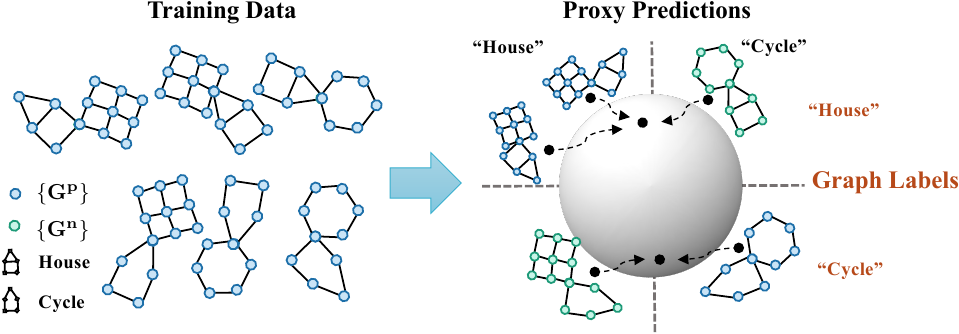}
  \vspace{-0.05in}
  \caption{An illustration of \ours with the task of classifying graphs according to whether there exists a ``House'' or ``Cycle'' motif.
    Given the training data where the ``House'' subgraph often co-occurs with a ``Grid'' and the ``Cycle'' subgraph often co-occurs with a ``Hexagon''.
    An ERM trained environment assistant model will fit the spurious subgraph and therefore yield proxy predictions ``House'' or ``Cycle'' for any graphs containing a ``Grid'' (left half) or ``Hexagon'' (right half), respectively.
    \ours first separates the samples according to the correctness of the proxy predictions into the sets of positive graphs $\{G^p\}$ (correct, in blue) and negative graphs $\{G^n\}$ (incorrect, in green).
    Then, \ours extracts the maximally invariant subgraph among $\{G^p\}$ and $\{G^n\}$, i.e., pulling graphs with the same graph label but from $\{G^p\}$ and $\{G^n\}$ closer in the latent space, hence identifies the invariant subgraph.}
  \vskip -0.1in
\end{figure}

To resolve the OOD generalization challenge under the established assumptions, we propose a new framework
\oursfull (\ours). \ours incorporates an additional assistant model
that needs to be prone to distribution shifts, to generate proxy predictions of the training samples.
Different from previous environment inferring approaches~\citep{moleood,gil}, \ours does not require explicit environment labels but merely proxy predictions to differentiate the variations in the spurious subgraphs.
As shown in Fig.~\ref{fig:illustration}, we first fit an environment assistant model to the training distribution and then divide the training graphs into a positive set $\{G^p\}$ and a negative $\{G^n\}$, according to whether the proxy predictions are correct or not, respectively.
As spurious correlations tend to vary more easily than invariant correlations, the variations in spurious subgraphs are further differentiated and increased between $\{G^p\}$ and $\{G^n\}$.
Then, only the invariant subgraph holds an invariant correlation with the label among $\{G^p\}$ and $\{G^n\}$,
and hence can be identified by extracting the subgraphs that maximize the intra-class subgraph mutual information among $\{G^p\}$ and $\{G^n\}$ (Theorem~\ref{thm:gala_success}).

We conduct extensive experiments to validate the effectiveness of \ours using $12$ datasets with various graph distribution shifts.
Notably, \ours brings improvements up to $30\%$ in multiple graph datasets.

Our contributions can be summarized as follows:
\begin{itemize}[leftmargin=*]
  \item We identify failure cases of existing invariant graph learning approaches and establish the minimal assumptions for feasible invariant graph learning;
  \item We develop a novel framework \ours with provable identifiability of the invariant subgraph for OOD generalization on graphs under the assumptions;
  \item We conduct extensive experiments to verify both our theoretical results and the superiority of \ours;
\end{itemize}
Notably, both our theory and solution differ from~\citet{zin} fundamentally, as we do not rely on the auxiliary information and are compatible with the existing interpretable and generalizable GNN architecture for OOD generalization on graphs. Meanwhile, we provide a new theoretical framework that resolves the counterexample in~\citet{zin} while enjoying provable identifiability.

\section{Background and Preliminaries}
\label{sec:prelim}
We begin by introducing the key concepts and backgrounds
of invariant graph learning, and leave more details in Appendix~\ref{sec:prelim_appdx}. The notations used in the paper are given in Appendix~\ref{sec:notations_appdx}.

\begin{wrapfigure}{r}{0.5\textwidth}
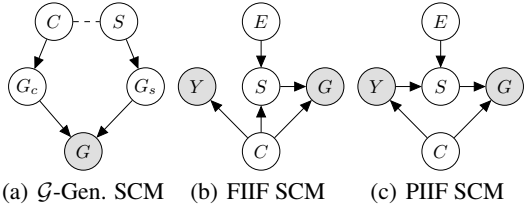

    \vspace{-0.15in}
    \centering
    \subfigure[$\gG$-Gen. SCM]{\label{fig:graph_gen}
        \resizebox{!}{0.16\textwidth}{\tikz{
                \node[latent] (S) {$S$};%
                \node[latent,left=of S,xshift=0.5cm] (C) {$C$};%
                \node[latent,below=of C,xshift=-0.5cm,yshift=0.5cm] (GC) {$G_c$}; %
                \node[latent,below=of S,xshift=0.5cm,yshift=0.5cm] (GS) {$G_s$}; %
                \node[obs,below=of GC,xshift=1.05cm,yshift=0.5cm] (G) {$G$}; %
                \edge[dashed,-] {C} {S}
                \edge {C} {GC}
                \edge {S} {GS}
                \edge {GC,GS} {G}
            }}}
    \subfigure[FIIF SCM]{\label{fig:scm_fiif}
        \resizebox{!}{0.16\textwidth}{\tikz{
                \node[latent] (E) {$E$};%
                \node[latent,below=of E,yshift=0.5cm] (S) {$S$}; %
                \node[obs,below=of E,xshift=-1.2cm,yshift=0.5cm] (Y) {$Y$}; %
                \node[obs,below=of E,xshift=1.2cm,yshift=0.5cm] (G) {$G$}; %
                \node[latent,below=of Y,xshift=1.2cm,yshift=0.5cm] (C) {$C$}; %
                \edge {E} {S}
                \edge {C} {Y,G}
                \edge {S} {G}
                \edge {C} {S}
            }}}
    \subfigure[PIIF SCM]{\label{fig:scm_piif}
        \resizebox{!}{0.16\textwidth}{\tikz{
                \node[latent] (E) {$E$};%
                \node[latent,below=of E,yshift=0.5cm] (S) {$S$}; %
                \node[obs,below=of E,xshift=-1.2cm,yshift=0.5cm] (Y) {$Y$}; %
                \node[obs,below=of E,xshift=1.2cm,yshift=0.5cm] (G) {$G$}; %
                \node[latent,below=of Y,xshift=1.2cm,yshift=0.5cm] (C) {$C$}; %
                \edge {E} {S}
                \edge {C} {Y,G}
                \edge {S} {G}
                \edge {Y} {S}
            }}}
    \vspace{-0.1in}
    \caption{SCMs on graph distribution shifts~\citep{ciga}.}
    \label{fig:scm}
    \vspace{-0.15in}
\end{wrapfigure}

\paragraph{OOD generalization on graphs.}
This work focuses on graph classification, while the results generalize
to node classification as well using the same setting as in~\citet{eerm}.
Specifically,
we are given a set of graph datasets $\dataset=\{\dataset_e\}_{e\in\envall}$ collected from multiple environments $\envall$.
Samples $(G^e_i, Y^e_i)\in \dataset^e$ from the 
environment $e$ are drawn independently from an identical distribution $\sP^e$.
The goal of OOD generalization on graphs is to find a GNN $f$ that minimizes the maximal loss among all environments, i.e., to minimize $\max_{e\in\envall} R^e$, where $R^e$ is the risk of $f$ under environment $e$.
We consider the same graph generation process proposed by~\citet{ciga} which is inspired by real-world drug discovery task~\citep{fragment} and covers a broad case of graph distribution shifts.
As shown in Fig.~\ref{fig:scm}, the generation of the observed graphs $G$ and labels $Y$ are controlled by a latent causal variable $C$ and a spurious variable $S$.
$C$ and $S$ control $Y$ and $G$ by controlling the generation of the underlying invariant subgraph $G_c$ and spurious subgraph $G_s$, respectively.
Since $S$ can be affected by the environment $E$, the correlation between $Y$ and $G_s$ can change arbitrarily when the environment changes.
Besides, the interaction among $C$, $S$ and $Y$ at the latent space can be further categorized into \emph{Full Informative Invariant Features} (\emph{FIIF})
when $Y\ind S|C$, and \emph{Partially Informative Invariant Features} (\emph{PIIF}) when $Y \not\ind S|C$.

To tackle the OOD generalization challenge on graphs from Fig.~\ref{fig:scm}, the existing invariant graph learning approaches are generically designed to identify the underlying invariant subgraph $G_c$ to predict the label $Y$~\citep{eerm,ciga}.
Specifically, the goal of OOD generalization on graphs
is to learn an \emph{invariant GNN} $f\coloneqq f_c\circ g$,
which is composed of:
a) a featurizer $g:\gG\rightarrow\gG_c$ that estimates the invariant subgraph $\revision{\widehat{G}_c}$;
b) a classifier $f_c:\gG_c\rightarrow\gY$ that predicts the label $Y$ based on the extracted $\widehat{G}_c$,
where $\gG_c$ refers to the space of subgraphs of $\gG$.
The learning objectives of $f_c$ and $g$ are formulated as
\begin{equation}
    \label{eq:inv_cond}
    \text{$\max$}_{f_c, \; g} \ I(\widehat{G}_{c};Y), \ \text{s.t.}\ \widehat{G}_{c}\ind E,\ \widehat{G}_{c}=g(G).
\end{equation}
Since $E$ is not observed, many strategies are proposed to
impose the independence of $\widehat{G}_c$ and $E$.
A prevalent approach is to augment the environment information.
Based on the estimated invariant subgraphs $\widehat{G}_c$ and spurious subgraphs $\widehat{G}_s$,
\citet{dir,grea,eerm} propose to generate new environments, while \citet{moleood,gil} propose to infer the underlying environment labels.
However, we show that they all fail to augment faithful environment information in Sec.~\ref{sec:env_aug_failure}.

Besides, \citet{gib,vgib,gsat,dps,lri} adopt graph information bottleneck to tackle FIIF graph shifts, but they cannot generalize to PIIF shifts, while Our work focuses on PIIF shifts as it is more challenging when without environment labels~\citep{zin}.
\citet{disc} generalize~\citep{ldd} to tackle severe graph biases, i.e., when $H(S|Y)< H(C|Y)$. \citet{ciga} propose a contrastive framework to tackle both FIIF and PIFF graph shifts, but is limited to $H(S|Y)> H(C|Y)$.
In practice, as it is usually unknown which correlation is stronger, we need a unified solution to tackle both cases.

\textbf{Invariant learning without environment labels.}
In the Euclidean regime, there are plentiful studies in invariant learning without environment labels.
\citet{eiil} propose a minmax formulation to infer the environment labels.
\citet{hrm} propose a self-boosting framework based on the estimated invariant and variant features.
\citet{jtt,cnc,pde,xrm} propose to infer labels based on the failures of an ERM model.
However, \citet{zin} find failure cases of the aforementioned approaches that it is impossible to identify the invariant features without given environment labels in Euclidean data, and propose a solution that leverages auxiliary environment information for invariant learning.
As the OOD generalization on graphs poses more challenges~\citep{ciga}, whether it is feasible to learn invariant graph representations without any auxiliary environment information remains elusive.

\section{Pitfalls of Environment Augmentation}\label{sec:env_aug_failure}
Given only the mixed training data without environment partitions,
is it possible to learn to generate faithful environments or
infer the underlying environment labels that facilitate OOD generalization on graphs?
In the discussion below, we adopt the two-piece graphs to instantiate the problem, which is the simplistic version of the PIIF distribution shifts in Fig.~\ref{fig:scm_piif}, motivated by~\citet{irm_aistats}.
\begin{definition}[Two-piece graphs]
    \label{def:twobit_graph}
    Each environment $e$ is defined with two parameters, $\alpha_e,\beta_e\in[0,1]$,  and the dataset $(G^e,Y^e)\in\dataset_e$ is generated as follows:
    \begin{enumerate}[label=(\alph*),leftmargin=*]
        \item Sample $Y^e\in\{-1,1\}$ uniformly;
        \item Generate $G_c$ and $G_s$ via :
              $G_c\coloneqq f_\gen^{G_c}(Y^e\cdot\rad(\alpha_e)),\ G_s\coloneqq f_\gen^{G_s}(Y^e\cdot\rad(\beta_e)),$
              where $f_\gen^{G_c},f_\gen^{G_s}$ map the input $\{-1,1\}$ to a corresponding graph selected from a given set,
              and $\rad(\alpha)$ is a random variable taking value $-1$ with probability $\alpha$ and $+1$
              with $1-\alpha$;
        \item Synthesize $G^e$ by randomly assembling $G_c$ and $G_s$:
              $G^e\coloneqq f_\gen^{G}(G_c,G_s).$
    \end{enumerate}
\end{definition}

We denote an environment $e$ with $(\alpha,\beta_e)$ for simplicity.
Different environments will have a different $\beta_e$,
thus $P(Y|G_s)$ will change across different environments, while $P(Y|G_c)$ remains invariant.

\subsection{Pitfalls of environment generation}
\label{sec:var_sufficiency}
We begin by discussing the cases where there are few environments,
and generating new environments is necessary~\citep{eerm,dir,grea}.
Environment generation aims to provide some additional ``virtual'' environments $\env_v$ such that the invariant subgraph can be identified via applying an OOD risk to the joint dataset with the augmented data $\trainvirtual=\{\dataset_e|e\in\envtrain\cup\env_v\}$.

The generation of ``virtual'' environments is primarily based on the intermediate estimation of the invariant and spurious subgraphs, denoted as $\widehat{G}_c$ and $\widehat{G}_s$, respectively.
\citet{dir,grea} propose \dir and \grea to construct new graphs
by assembling $\widehat{G}_c$ and $\widehat{G}_s$ from different graphs.
Specifically, given $n$ samples $\{G^i,Y^i\}_{i=1}^n$,\footnote{We slightly abuse the superscript and subscript when denoting the $i$th sample to avoid confusion of double superscripts or subscripts.}
the new graph samples in $\env_v$ is generated as follows:
\[
    G^{i,j}=f_\gen^G(\widehat{G}_c^i,\widehat{G}_s^j),\ \forall i,j \in\{1...n\},\ Y^{i,j}=Y^i,
\]
which generates a new environment $\env_v$ with $n^2$ samples.
Although both \dir and \grea gain some empirical success,
the faithfulness of $\env_v$ remains questionable,
as the generation is merely based on \emph{inaccurate} estimations of the invariant and spurious subgraphs.
Specifically, when $\widehat{G}_c$ contains parts of $G_s$, assigning the same labels to the generated graph is more likely to \textit{strengthen} the spurious correlation between $G_s$ and $Y$.
For example, when the model yields a reversed estimation, i.e., $\widehat{G}_c=G_s$ and $\widehat{G}_s=G_c$,
the generated environment will destroy the invariant correlations.
\begin{proposition}\label{thm:env_gen_fail}
    Consider the two-piece graph dataset $\envtrain=\{(\alpha,\beta_1),(\alpha,\beta_2)\}$ with $\alpha\geq\beta_1,\beta_2$
    (e.g., $\envtrain=\{(0.25,0.1),(0.25,0.2)\}$),
    and its corresponding mixed environment $\envmix=\{(\alpha,(\beta_1+\beta_2)/2)\}$ (e.g., $\envmix=\{(0.25,0.15)\}$).
    When $\widehat{G}_c=G_s$ and $\widehat{G}_s=G_c$, it holds that the augmented environment $\env_v$ is also a two-piece graph dataset with
    \[
        \mathcal{E}_v = \{(0.5,(\beta_1 + \beta_2)/2)\}\text{ (e.g., $\mathcal{E}_v = \{(0.5,0.15)\}$)}.
    \]
\end{proposition}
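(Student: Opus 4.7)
The plan is to show directly that the joint distribution of samples in the augmented environment $\env_v$ matches the law prescribed by Definition~\ref{def:twobit_graph} with parameters $(0.5,(\beta_1+\beta_2)/2)$. First I would fix notation for the augmented samples: assuming a balanced training set with $n/2$ samples from each of $(\alpha,\beta_1)$ and $(\alpha,\beta_2)$, each augmented datapoint is of the form $(G^{i,j},Y^{i,j})=(f_\gen^G(G_s^i,G_c^j),Y^i)$ for $i,j\in\{1,\dots,n\}$. I would identify the two constituent pieces of $G^{i,j}$ as the candidate ``invariant'' slot (the piece generated by $f_\gen^{G_c}$, namely $G_c^j$) and the candidate ``spurious'' slot (the piece generated by $f_\gen^{G_s}$, namely $G_s^i$), and verify the three defining properties of Definition~\ref{def:twobit_graph} in turn.

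Next I would verify the label distribution: since $Y^i$ is uniform in $\{-1,+1\}$ within every source environment and the source environments contribute equally, $Y^{i,j}=Y^i$ is uniform, matching step (a). For step (b), I would compute the two conditional correlations between $Y^{i,j}$ and each piece. For the piece $G_s^i$, conditional on $Y^i$ we have $G_s^i=f_\gen^{G_s}(Y^i\cdot\rad(\beta_{e_i}))$; averaging over the source environment index with equal weight $1/2$ gives an effective mixture $\rad((\beta_1+\beta_2)/2)$, so this piece obeys the $G_s$-rule of a two-piece graph with parameter $(\beta_1+\beta_2)/2$. For the piece $G_c^j$, since $j$ is drawn independently of $i$ and $Y^j$ is uniform and independent of $Y^i$, the signed latent $Y^j\cdot\rad(\alpha)$ becomes a symmetric sign relative to $Y^i$; a short calculation via $P(Y^j\cdot\rad(\alpha)=Y^i)=\tfrac{1}{2}(1-\alpha)+\tfrac{1}{2}\alpha=\tfrac{1}{2}$ shows this piece obeys the $G_c$-rule with parameter $0.5$. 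Finally, conditional independence of the two pieces given $Y^{i,j}$ follows from the independence of indices $i$ and $j$ and the independence of the Rademacher noises in the original generation, so step (b) is complete. Step (c) holds by construction since $G^{i,j}=f_\gen^G(G_s^i,G_c^j)$ has exactly the assembly form $f_\gen^G(\cdot,\cdot)$.

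The main obstacle I anticipate is notational rather than technical: one must be careful that the pieces $G_s^i$ and $G_c^j$ play the roles of ``invariant'' and ``spurious'' with respect to the \emph{label} $Y^{i,j}=Y^i$, not with respect to their original labels in the source data. In particular the claim is really that after reversal, the piece $f_\gen^{G_s}$ outputs becomes the piece with the stronger (spurious-style) correlation to the new label, while the piece $f_\gen^{G_c}$ outputs becomes effectively noise. Once this relabeling is made explicit and the two conditional-correlation computations above are carried out, matching the generative law of a two-piece graph with parameters $(0.5,(\beta_1+\beta_2)/2)$ is immediate, yielding the stated $\env_v$.
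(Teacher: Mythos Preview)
Your proposal is correct and follows essentially the same route as the paper: both directly verify that, under the reversed estimation $\widehat{G}_c=G_s$, $\widehat{G}_s=G_c$, the augmented distribution matches a two-piece graph with parameters $(0.5,(\beta_1+\beta_2)/2)$ by checking that the $f_\gen^{G_s}$-piece keeps its correlation $\bar{\beta}$ with the retained label $Y^i$, while the $f_\gen^{G_c}$-piece becomes uniform (parameter $0.5$) because the index $j$ decouples $G_c^j$ from $Y^i$. The only cosmetic difference is that the paper carries out this verification via explicit $2\times 2$ count tables over the four motif types $G_A,G_B,G_C,G_D$, whereas you phrase the same two computations probabilistically; the key calculation $P(Y^j\cdot\rad(\alpha)=Y^i)=\tfrac{1}{2}$ is identical to the paper's observation that $G_A$ and $G_B$ each occupy exactly half of the $\widehat{G}_s$-slots after pairing.
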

The proof is given in Appendix~\ref{proof:env_gen_fail_appdx}.
This also extends to the adversarial augmentation~\citep{eerm,dps}, which will destroy the actual $\widehat{G}_c$.
As both \dir and \grea adopt the same environment generation procedure,
we verify the failures of environment generation with \grea in Table~\ref{table:syn_graph} of Sec.~\ref{sec:exp}, where \grea can perform comparably with ERM.
In fact, when the underlying environments are insufficient to differentiate the variations of the spurious features,
it is fundamentally impossible to identify the underlying invariant graph from the spurious subgraph. 
More formally, if $\exists G_s$, such that $P^{e_1}(Y|G_s)=P^{e_2}(Y|G_s)$ for any $e_1,e_2\in \envtrain$, where $P^e(Y|G_s)$ is the conditional distribution $P(Y|G_s)$ under environment $e\in \envall$, it is impossible for any graph learning algorithm to identify $G_c$.
We provide a formal discussion in Appendix~\ref{proof:var_sufficiency_appdx}. The failure implies a fundamental requirement that $\envtrain$ should uncover all the potential variations in the spurious subgraph.
\begin{assumption}(Variation sufficiency)\label{assump:var_sufficiency}
    For graphs generated following Fig.~\ref{fig:scm},
    for any $G_s$, $\exists e_1,e_2\in \envtrain$,
    such that $P^{e_1}(Y|G_s)\neq P^{e_2}(Y|G_s)$, and $P^{e_1}(Y|G_c) = P^{e_2}(Y|G_c)$.
\end{assumption}
Assumption~\ref{assump:var_sufficiency} aligns with the definition of invariance~\citep{irm_aistats,ciga} that the invariant subgraph $G_c$ is expected to satisfy $P^{e_1}(Y|G_c) = P^{e_2}(Y|G_c)$ for $e_1,e_2\in \envall$. If there exists $G_s$ satisfying the invariance condition as well, then it is impossible to tell $G_c$ from $G_s$ even with environment labels.

\subsection{Pitfalls of environment inferring}
\label{sec:var_consistency}
Although environment sufficiency (Assumption~\ref{assump:var_sufficiency}) relieves the need for generating new environments,
is it possible to infer the underlying environment labels via approaches such as \mole~\citep{moleood} and \gil~\citep{gil}, to facilitate invariant graph learning?
Unfortunately, we find a negative answer.

Considering the two-piece graph examples $\envtrain=\{(0.2,0.1),(0.2,0.3)\}$, when given the underlying environment labels, it is easy to identify the invariant subgraphs from spurious subgraphs.
However, when the environment labels are not available, we have the mixed data as $\envtrain=\{(0.2,0.2)\}$, where $P(Y|G_c)=P(Y|G_s)$. 
The identifiability of $G_s$ is \emph{ill-posed}, as it does not affect the $\envtrain$ even if we swap $G_c$ and $G_s$. 
More formally, considering the environment mixed from two two-piece graph environments $\{(\alpha,\beta_1)\}$ and $\{(\alpha,\beta_2)\}$, then we have $\envtrain=\{(\alpha,(\beta_1+\beta_2)/2\}$.
For each $\envtrain$, we can also find a corresponding $\envtrain'=\{((\beta'_1+\beta'_1)/2,\alpha')\}$ with $\{(\beta'_1,\alpha')\}$ and $\{(\beta'_2,\alpha')\}$. Then, let
\begin{equation}\label{eq:env_infer_fail}
    \alpha=(\beta'_1+\beta'_1)/2=\alpha'=(\beta_1+\beta_2)/2.
\end{equation}
We now obtain $\envtrain$ and $\envtrain'$ which share the same joint distribution $P(Y, G)$ while the underlying $G_c$ is completely different.
More generally, we have the following proposition.
\begin{proposition}\label{thm:env_infer_fail}
    There exist $2$ two-piece graph training environments $\envtrain$ and $\envtrain'$ that share the same joint distribution $P(Y, G)$. Any learning algorithm will fail in either $\envtrain$ or $\envtrain'$.
\end{proposition}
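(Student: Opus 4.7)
The plan is to make rigorous the construction sketched just before the proposition: exhibit two training pools $\envtrain$ and $\envtrain'$ coming from two different two-piece SCMs in which the invariant and spurious subgraphs play \emph{swapped} roles, choose their parameters so that the induced observable distributions $P(Y,G)$ agree exactly, and then argue that any algorithm, seeing only $P(Y,G)$, must fail on at least one of them.

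Step 1 (construction). I would fix parameters $(\alpha,\beta_1,\beta_2)$ and $(\alpha',\beta_1',\beta_2')$ obeying
\[
\alpha = \tfrac{\beta_1'+\beta_2'}{2}, \qquad \alpha' = \tfrac{\beta_1+\beta_2}{2},
\]
e.g.\ $\alpha=\alpha'=0.2$, $\beta_1=\beta_1'=0.1$, $\beta_2=\beta_2'=0.3$. Take $\envtrain$ to be the $50/50$ pool of the two two-piece sub-environments $(\alpha,\beta_1)$ and $(\alpha,\beta_2)$ under the SCM of Definition~\ref{def:twobit_graph}, so that the invariant flip rate $\alpha$ is constant and only the spurious flip rate varies across sub-environments. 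Take $\envtrain'$ to be the analogous $50/50$ pool of $(\beta_1',\alpha')$ and $(\beta_2',\alpha')$ but under the SCM in which the roles of $G_c$ and $G_s$ are exchanged: the subgraph drawn from the original $G_s$-set plays the invariant role (constant flip rate $\alpha'$) and the subgraph drawn from the original $G_c$-set plays the spurious role (flip rates $\beta_1',\beta_2'$).

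Step 2 (distributional equivalence). I would verify $P_\envtrain(Y,G)=P_{\envtrain'}(Y,G)$ by writing $G$ as the pair $(G^{(A)},G^{(B)})$ of its two observable subgraph components---the one from the $G_c$-set and the one from the $G_s$-set---and identifying each with $\pm 1$ according to which motif in its two-element set appears. Since $Y$ is uniform on $\{-1,+1\}$ in both pools, it suffices to match $P(G^{(A)},G^{(B)}\mid Y)$. Within a single two-piece sub-environment one has $G^{(A)}\ind G^{(B)}\mid Y$; and because in each pool only \emph{one} of the two flip rates varies across sub-environments, this conditional independence is preserved by the mixing. The marginal flip rate of $G^{(A)}$ relative to $Y$ is then $\alpha$ in $\envtrain$ and $(\beta_1'+\beta_2')/2$ in $\envtrain'$, while that of $G^{(B)}$ is $(\beta_1+\beta_2)/2$ in $\envtrain$ and $\alpha'$ in $\envtrain'$; the matching conditions chosen in Step~1 force both pairs to agree, so the joint laws coincide.

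Step 3 (failure). A (possibly randomized) learning algorithm's output depends on the training data only through its distribution; by Step~2 it therefore returns the same predictor $\widehat f$ (in law) on $\envtrain$ and $\envtrain'$. However, the true invariant subgraph is $G^{(A)}$ in $\envtrain$ but $G^{(B)}$ in $\envtrain'$, so the unique invariant-optimal predictor reads different components of $G$ in the two cases. For any fixed $\widehat f$ I would then push the spurious flip rate to an adversarial value (e.g.\ $1$) in whichever SCM governs the component $\widehat f$ actually depends on, producing a test environment in which $\widehat f$ is strictly worse than the invariant optimum---hence failure in at least one of $\envtrain,\envtrain'$. The main subtlety is the conditional-independence bookkeeping in Step~2: one must carefully track that mixing preserves $G_c\ind G_s\mid Y$ precisely because only one flip rate varies in each pool; once that is in hand, the rest is a direct consequence of the indistinguishability of the two joints.
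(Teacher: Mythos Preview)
Your proposal is correct and follows essentially the same route as the paper: construct two pooled environments whose mixtures coincide as single two-piece environments with parameters $(\alpha,(\beta_1+\beta_2)/2)$, then argue indistinguishability forces the same output while the invariant subgraph differs. You are more explicit than the paper about why mixing preserves $G_c\ind G_s\mid Y$ (because only one flip rate varies per pool) and about operationalizing ``failure'' via an adversarial test environment, but these are refinements of exposition rather than a different argument.
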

The proof is given in Appendix~\ref{proof:env_infer_fail_appdx}. 
The experiments in Sec.~\ref{sec:exp} validate that both \mole and \gil fail to infer faithful environment labels and even underperform ERM.
It implies that whenever it allows the existence of an identical training distribution by mixing the environments, invariant graph learning is impossible.
Therefore, we need an additional assumption that excludes the unidentifiable case.
We propose to constrain the relationship between $\alpha$ (i.e., $H(Y|G_c)$ ) and $\beta_e$ (i.e., $H(Y|G_s)$).

\begin{assumption}(Variation consistency)\label{assump:var_consistency}
    For all environments in $\envtrain$, $H(C|Y)\neq H(S|Y)$.
\end{assumption}
Intuitively, Assumption~\ref{assump:var_consistency} imposes the consistency requirement on the correlation strengths between invariant and spurious subgraphs with labels. 
For two-piece graphs with consistent variations, mixing up the environments will yield a new environment with the same variation strength relationships. Thus, Assumption~\ref{assump:var_consistency} gets rid of the previous unidentifiable cases.
Moreover, Assumption~\ref{assump:var_consistency} also aligns with many realistic cases. For example, the relation of a specific functional group (e.g., -OH) with a molecule can hardly be reversed to that held upon the scaffold of the molecule, due to the data collection process.
Therefore, Assumption~\ref{assump:var_consistency} also resolves the counterexample proposed by~\citet{zin}.
Different from our work, \citet{zin} propose to incorporate additional auxiliary information that satisfies certain requirements to mitigate the unidentifiable case. However, such auxiliary information is often unavailable and expensive to obtain on graphs.
More importantly, the requirements are also unverifiable without more assumptions, which motivates us to consider the relaxed case implied by Assumption~\ref{assump:var_consistency}.

\subsection{Challenges of environment augmentation}
\label{sec:env_aug_challenge}

To summarize, the two assumptions constitute the minimal assumptions for feasible invariant graph learning. Failing to satisfy either one of them while lacking additional inductive biases will result in the ``no free lunch'' dilemma~\citep{no_free_lunch} and suffer from the unidentifiability issue.

\begin{corollary}(No Free Graph OOD Lunch)\label{thm:var_consistency}
    Without Assumption~\ref{assump:var_sufficiency} or Assumption~\ref{assump:var_consistency},
    there does not exist a learning algorithm that captures the invariance of the two-piece graph environments.
\end{corollary}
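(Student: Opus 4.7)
The plan is to establish the corollary by a case analysis on which of the two minimal assumptions fails; in each case I would exhibit an explicit unidentifiability that no learning algorithm can resolve. The key observation is that Propositions~\ref{thm:env_gen_fail} and~\ref{thm:env_infer_fail} already supply essentially all the technical content, so the corollary mainly packages them as a single impossibility statement for the two-piece graph family of Definition~\ref{def:twobit_graph}.

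For the first case, suppose Assumption~\ref{assump:var_sufficiency} is violated. By the formal discussion following that assumption, there exists a candidate spurious component $\widetilde{G}_s$ such that $P^{e_1}(Y|\widetilde{G}_s)=P^{e_2}(Y|\widetilde{G}_s)$ for every $e_1,e_2\in\envtrain$, while the true invariant subgraph $G_c$ satisfies the same stability by definition. I would then construct two instances of the two-piece graph generating process that differ only in which of $\{G_c,\widetilde{G}_s\}$ is designated as the invariant piece. Both instances induce identical observable distributions over $(G,Y)$ on $\envtrain$, so any algorithm whose output is a function of the training distribution must return the same estimator on both instances, and can therefore recover the true invariant subgraph on at most one of them.

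For the second case, suppose Assumption~\ref{assump:var_consistency} is violated, so that on some environment of $\envtrain$ one has $H(C|Y)=H(S|Y)$. Here I would replay the construction underlying Proposition~\ref{thm:env_infer_fail}: choose two-piece parameters satisfying the algebraic identity in~\eqref{eq:env_infer_fail}, yielding a pair of training sets $\envtrain$ and $\envtrain'$ that share the same joint distribution $P(Y,G)$ but in which the roles of $G_c$ and $G_s$ are exchanged. Any learning algorithm is a function of the observed distribution only, so its output coincides on $\envtrain$ and $\envtrain'$, forcing it to be incorrect on at least one of the two.

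Combining the two cases yields the claimed no-free-lunch statement. The main obstacle I anticipate is pinning down an operational meaning for \emph{captures the invariance}: the argument requires, for instance, that success be measured by agreement with the true $G_c$ in the OOD sense of~\eqref{eq:inv_cond}, so that the symmetric generative pairs in each case genuinely witness inconsistency rather than two mutually acceptable answers. Once this convention is fixed, the remaining work is purely bookkeeping on top of the two propositions, together with a brief verification that the constructions in each case indeed fall within the two-piece graph family.
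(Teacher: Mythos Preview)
Your proposal is correct and follows essentially the same route as the paper: a two-case analysis on which assumption fails, with each case reduced to a swap argument showing two generative instances with identical observable training distribution but exchanged roles of $G_c$ and $G_s$. One minor discrepancy: you cite Proposition~\ref{thm:env_gen_fail} as supplying the content for the first case, but that proposition concerns the behavior of environment \emph{generation} under a misspecified featurizer, not a direct impossibility; the relevant statement is the appendix proposition on variation sufficiency (the ``formal discussion'' you in fact invoke in your Case~1 argument), and your actual argument there is the right one.
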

\begin{wraptable}{r}{0.5\textwidth}
    \begin{center}
        \vskip -0.3in
        \caption{Remaining challenges of invariant graph learning: no existing works can handle both cases.}
        \scalebox{0.75}{
            \begin{tabular}{c|c|c}
                \hline
                             & $H(S|Y)<H(C|Y)$ &
                $H(S|Y)>H(C|Y)$
                \\\hline
                \disc        & \cmark          & \xmark \\\hline
                \ciga        & \xmark          & \cmark \\\hline
                \ours (Ours) & \cmark          & \cmark \\
                \hline
            \end{tabular}}
        \label{tab:env_challenge}
    \end{center}
    \vspace{-0.1in}
\end{wraptable}
Corollary~\ref{thm:var_consistency} is a natural conclusion from the previous discussion. The proof is straightforward and  given in Appendix~\ref{proof:var_consistency_appdx}.
Assumption~\ref{assump:var_sufficiency} and Assumption~\ref{assump:var_consistency}
establish the minimal premises for identifying the underlying invariant subgraphs.
However, it also raises new challenges, as shown in Table.~\ref{tab:env_challenge}.
\citet{ciga} propose \ciga to maximize the intra-class mutual information
of the estimated invariant subgraphs to tackle the case when $H(C|Y)< H(S|Y)$.
While for the case when $H(S|Y)< H(C|Y)$,
\citet{disc} propose \disc that adopts GCE loss~\citep{ldd} to
extract the spurious subgraph with a larger learning step size
such that the left subgraph is invariant.
However, both of them can fail when there is no prior knowledge about the relations between $H(C|Y)$ and $H(S|Y)$. 
We verify the failures of \disc and \ciga in Table.~\ref{table:syn_graph}.
The failure thus raises a challenging question:
\begin{myquotation}
    \emph{Given the established minimal assumptions,
        is there a unified framework that tackles both cases when $H(C|Y)< H(S|Y)$ and $H(C|Y)> H(S|Y)$?}
\end{myquotation}

\section{Learning Invariant Graph Representations with Environment Assistant}
\label{sec:gala_sol}
We give an affirmative answer by proposing a new framework,  \ours: \oursfull,
which adopts an assistant model to provide proxy information
about the environments.

\subsection{Learning with An Environment Assistant}\label{sec:gala_der}
Intuitively, a straightforward approach to tackle the aforementioned challenge is to extend the framework of either \disc~\citep{disc} or \ciga~\citep{ciga}
to resolve the other case.
As \disc always destroys the first learned features and tends to be more difficult to extend (which is empirically verified in Sec.~\ref{sec:exp}), we are motivated to extend the framework of \ciga
to resolve the case when $H(S|Y)<H(C|Y)$.

\textbf{Understanding the success and failure of \ciga.}
The principle of \ciga lies in maximizing the intra-class mutual information
of the estimated invariant subgraphs, i.e.,
\begin{equation}
    \label{eq:cigav1_sol}
    \max_{f_c, g} \ I(\pred{G}_{c};Y), \ \text{s.t.}\
    \pred{G}_{c}\in\argmax_{\pred{G}_{c}=g(G), |\pred{G}_{c}|\leq s_c} I(\pred{G}_{c};\pred{G}_c^s|Y),
\end{equation}
where $\pred{G}_c^s=g(G^s)$ and $G^s\sim \sP(G|Y)$,
i.e., $\pred{G}$ is sampled from training graphs that share the same label $Y$ as $\pred G$.
The key reason for the success of Eq.~\ref{eq:cigav1_sol} is that, given the data generation process as in Fig.~\ref{fig:scm}
and the same $C$, the underlying invariant subgraph $G_c$ maximizes the
mutual information of subgraphs from any two environments, i.e., $\forall e_1,e_2\in\envall$,
\begin{equation}
    \label{eq:ciga_cond}
    G_c^{e_1}\in \text{$\argmax$}_{\pred{G}_c^{e_1}}\  I(\pred{G}_c^{e_1};\pred{G}_c^{e_2}|C),
\end{equation}
where $\pred{G}_c^{e_1}$ and $\pred{G}_c^{e_2}$ are the estimated
invariant subgraphs corresponding to the same latent causal variable $C=c$ under the environments $e_1, e_2$, respectively.
Since $C$ is not observable, \ciga adopts $Y$ as a proxy for $C$, as when $H(S|Y)>H(C|Y)$, $G_c$ maximizes $I(\pred{G}_c^{e_1};\pred{G}_c^{e_2}|Y)$
and thus $I(\pred{G}_c;\pred{G}_c^s|Y)$.
However, when $H(S|Y)<H(C|Y)$, the proxy no longer holds.
Given the absence of $E$, simply maximizing intra-class mutual information
favors the spurious subgraph $G_s$ instead, i.e.,
\begin{equation}\label{eq:cigav1_fail}
    G_s\in\text{$\argmax$}_{\pred{G}_c}I(\pred{G}_c;\pred{G}_c^s|Y).
\end{equation}
\textbf{Invalidating spuriousness dominance.}
To mitigate the issue, we are motivated to find a new proxy
that samples $\pred{G}_c$ for Eq.~\ref{eq:cigav1_fail},
while preserving only the $G_c$ as the solution under both cases.

To begin with, we consider the case of $H(S|Y)<H(C|Y)$. Although the correlation between $G_s$ and $Y$ dominates the intra-class mutual information, Assumption~\ref{assump:var_sufficiency} implies that
there exists a subset of training data where $P(Y|G_s)$ varies,
while $P(Y|G_c)$ remains invariant.
Therefore, the dominance of spurious correlations no longer holds for samples from the subset.
Incorporating samples from the subset into Eq.~\ref{eq:cigav1_sol} as $\pred{G}_c^s$ invalidates the dominance of $G_s$.
Denote the subset as $\{\pred{G}_{c}^n\}$, then
\begin{equation}\label{eq:gala_sol_spu}
    G_c\in\text{$\argmax$}_{\pred{G}_c^p}I(\pred{G}_c^p;\pred{G}_c^n|Y),
\end{equation}
where $\pred{G}_c^p\in \{\pred{G}_{c}^p\}$
is sampled from the subset $\{\pred{G}_{c}^p\}$ dominated by spurious correlations,
while $\pred{G}_c^n\in \{\pred{G}_{c}^n\}$
is sampled from the subset $\{\pred{G}_{c}^n\}$ where spurious correlation no long dominates, or is dominated by invariant correlations.
We prove the effectiveness of Eq.~\ref{eq:gala_sol_spu} in Theorem~\ref{thm:gala_success}.

\textbf{Environment assistant model $A$.}
To find the desired subsets $\{\pred{G}_{c}^p\}$ and $\{\pred{G}_{c}^n\}$,
inspired by the success in tackling spuriousness-dominated OOD generalization via learning from a biased predictors~\citep{lff,ldd,jtt,cnc},
we propose to incorporate an assistant model $A$ that is prone to spurious correlations.
Simply training $A$ with ERM using the spuriousness-dominated data enables $A$ to learn spurious correlations, and hence identifies the subsets where the spurious correlations hold or shift,
according to whether the predictions of $A$ are correct or not, respectively. Let $A=\argmax_{\pred{A}} I(\pred{A}(G);Y)$, we have
\begin{equation}
    \begin{aligned}
         & \{\pred{G}_{c}^p\}=\{g(G^p_i)|A(G^p_i)=Y_i\},       \
        \{\pred{G}_{c}^n\}=\{g(G^n_i)|A(G^n_i)\neq Y_i\}.
    \end{aligned}
\end{equation}

\textbf{Reducing to invariance dominance case.}
After showing that Eq.~\ref{eq:gala_sol_spu} resolves the spuriousness dominance case, we still need to show that Eq.~\ref{eq:gala_sol_spu} preserves $G_c$ as the only solution when $H(S|Y)>H(C|Y)$.
Considering training $A$ with ERM using the invariance-dominated data,
$A$ will learn both invariant correlations and spurious correlations~\citep{disc,fat}.
Therefore, $\{\pred{G}_{c}^n\}$ switches to the subset that
is dominated by spurious correlations,
while $\{\pred{G}_{c}^p\}$ switches to the subset dominated by invariant correlations.
Then, Eq.~\ref{eq:gala_sol_spu} establishes a lower bound for the intra-class mutual information, i.e.,
\begin{equation}\label{eq:gala_sol_inv}
    I(\pred{G}_c^p;\pred{G}_c^n|Y)\leq I(\pred{G}_c;\pred{G}_c^s|Y),
\end{equation}

\begin{wrapfigure}{r}{0.6\textwidth}
    \begin{minipage}{0.6\textwidth}
        \vspace{-0.3in}
        \begin{algorithm}[H]
            \caption{\textbf{\ourst}: \oursfull }
            \label{alg:gala}
            \begin{algorithmic}[1]
                \STATE \textbf{Input:} Training data $\train$;
                environment assistant $A$;
                featurizer GNN $g$; classifier GNN $f_c$;
                length of maximum training epochs $e$; batch size $b$;
                \STATE Initialize environment assistant $A$;
                \FOR{$p \in [1,\ldots, e]$}
                \STATE Sample a batch of data $\{G_i,Y_i\}_{i=1}^b$ from $\train$;
                \STATE Obtain Environment Assistant predictions $\{\hat{y}^e_i\}_{i=1}^b$;
                \FOR{each sample $G_i,y_i \in \{G_i,Y_i\}_{i=1}^b$}
                \STATE Find \emph{positive} graphs with same $y_i$ and different $\hat{y}^e_i$;
                \STATE Find \emph{negative} graphs with different $y_i$ but same environment assistant prediction $\hat{y}^e_i$;
                \STATE Calculate \ours risk via Eq.~\ref{eq:gala_sol};
                \STATE Update $f_c, g$ via gradients from \ours risk;
                \ENDFOR\ENDFOR
                \STATE \textbf{return} final model $f_c\circ g$;
            \end{algorithmic}
        \end{algorithm}
        \vspace{-0.75in}
    \end{minipage}
\end{wrapfigure}
where $\pred{G}_c^p\in\{\pred{G}_{c}^p\}, \pred{G}_c^n\in \{\pred{G}_{c}^n\}$,
and $\pred{G}_c, \pred{G}_c^s$ are the same as in Eq.~\ref{eq:cigav1_sol}.
The inequality in Eq.~\ref{eq:gala_sol_inv} holds as any subgraph maximizes the left hand side can also be incorporated in right hand side, while the sampling space of $\widehat{G}_c$ and $\widehat{G}^s_c$ in the right hand side (i.e., both $\pred{G}_c$ and $\pred{G}_c^s$ are sampled from the whole train set) is larger than that of the left hand side.
The equality is achieved by taking the ground truth $G_c$ as the solution for the featurizer $g$. We verify the correctness of Eq.~\ref{eq:gala_sol_spu} and Eq.~\ref{eq:gala_sol_inv} in Fig.~\ref{fig:corr_change}.

\subsection{Practical implementations.}
The detailed algorithm description of \ours is shown as in Algorithm~\ref{alg:gala}.
In practice, the environment assistant can have multiple implementation choices so long as it is prone to distribution shifts.
As discussed in Sec.~\ref{sec:gala_der}, ERM trained model can
serve as a reliable environment assistant, since ERM tends to learn the dominant features no matter whether the features are invariant or spurious.
For example,
when $H(S|Y)<H(C|Y)$, ERM will first learn to use spurious subgraphs $G_s$ to make predictions.
Therefore, we can obtain $\{G^p\}$ by finding samples where ERM correctly predicts the labels, and $\{G^n\}$ for samples where ERM predicts incorrect labels.
In addition to label predictions,
the clustering predictions of the hidden representations yielded by environment assistant models can also be used for sampling $\{G^p\}$ and $\{G^n\}$~\citep{cnc}.
Besides, we can also incorporate models that are easier to overfit to the first dominant features to better differentiate $\{G^p\}$ from  $\{G^n\}$.
When the number of positive or negative samples is imbalanced, we can upsample the minor group to avoid trivial solutions.
In addition, the final \ours objective is given in Eq.~\ref{eq:gala_sol} and implemented as in Eq.~\ref{eq:gala_impl}.
We provide more discussions about the implementation options in Appendix~\ref{sec:gala_impl_appdx}.

\subsection{Theoretical analysis}\label{sec:gala_theory}
In the following theorem, we show that the \ours objective derived in Sec.~\ref{sec:gala_der} can identify the underlying invariant subgraph and yields an invariant GNN defined in Sec.~\ref{sec:prelim}.
\begin{theorem}\label{thm:gala_success}
    Given i) the same data generation process as in Fig.~\ref{fig:scm};
    ii) $\train$ that satisfies variation sufficiency (Assumption~\ref{assump:var_sufficiency})
    and variation consistency (Assumption~\ref{assump:var_consistency});
    iii) $\{G^p\}$ and $\{G^n\}$ are distinct subsets of $\train$ such that
    $I(G_s^p;G_s^n|Y)=0$,
    $\forall G_s^p =\argmax_{\pred{G}_s^p}I(\pred{G}_s^p;Y)$ under $\{G^p\}$, and
    $\forall G_s^n =\argmax_{\pred{G}_s^n}I(\pred{G}_s^n;Y)$ under $\{G^n\}$;
    suppose $|G_c|=s_c,\ \forall G_c$,
    resolving the following \ours objective elicits an invariant GNN defined via Eq.~\ref{eq:inv_cond},
    \begin{equation}
        \label{eq:gala_sol}
        \max_{f_c, g} \ I(\pred{G}_{c};Y), \ \text{s.t.}\
        g\in\argmax_{\hat{g},|\pred{G}_c^p|\leq s_c}I(\pred{G}_c^p;\pred{G}_c^n|Y),
    \end{equation}
    where $\pred{G}_c^p\in \{\pred{G}_{c}^p=g({G}^p)\}$
    and $\pred{G}_c^n\in \{\pred{G}_{c}^n=g({G}^n)\}$
    are the estimated invariant subgraphs via $g$ from $\{G^p\}$ and $\{G^n\}$, respectively.
\end{theorem}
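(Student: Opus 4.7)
The plan is to reduce the claim to two assertions: (i) the inner constraint $g\in\argmax I(\widehat{G}_c^p;\widehat{G}_c^n\mid Y)$ together with the size budget $|\widehat{G}_c^p|\le s_c$ pins down $g$'s output to the ground-truth invariant subgraph $G_c$; and (ii) given $\widehat{G}_c=G_c$, the outer maximization $\max_{f_c} I(\widehat{G}_c;Y)$, combined with the independence $G_c\ind E$ dictated by the SCM of Fig.~\ref{fig:scm}, verifies the invariance condition in Eq.~\ref{eq:inv_cond}. Assumptions~\ref{assump:var_sufficiency} and \ref{assump:var_consistency} will enter through part (i), ruling out spurious subgraphs masquerading as invariant and excluding the swap-symmetric mixture identified in Prop.~\ref{thm:env_infer_fail}.

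\textbf{Step 1: analyzing the inner constraint.} Using the SCM, I would write any candidate output of $g$ as a disjoint decomposition $\widehat{G}_c=\widehat{G}_c^{(c)}\cup\widehat{G}_c^{(s)}$ with $\widehat{G}_c^{(c)}\subseteq G_c$ and $\widehat{G}_c^{(s)}\subseteq G_s$. Applying the chain rule for conditional mutual information gives
\[
I(\widehat{G}_c^p;\widehat{G}_c^n\mid Y)=I(\widehat{G}_c^{p,(c)};\widehat{G}_c^{n,(c)}\mid Y)+I(\widehat{G}_c^{p,(s)};\widehat{G}_c^{n,(s)}\mid Y,\widehat{G}_c^{(c)}).
\]
For the causal term, the invariance $P^{e_1}(Y\mid G_c)=P^{e_2}(Y\mid G_c)$ implies that $G_c$ couples identically with $Y$ across $\{G^p\}$ and $\{G^n\}$, so this term is maximized---subject to $s_c=|G_c|$---uniquely by $\widehat{G}_c^{(c)}=G_c$, attaining $H(G_c\mid Y)$. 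For the spurious term, the data-processing inequality reduces it to $I(G_s^p;G_s^n\mid Y)$ taken over the maximally $Y$-predictive spurious subgraphs, which is zero by hypothesis~(iii); together with variation sufficiency (Assumption~\ref{assump:var_sufficiency}), any residual spurious content carries no further $Y$-signal either. The unique maximizer of the inner problem is therefore $\widehat{G}_c=G_c$.

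\textbf{Step 2: the outer step and invariance certificate.} Once $g$ recovers $G_c$, maximizing $I(\widehat{G}_c;Y)$ over $f_c$ yields the Bayes predictor of $Y$ given $G_c$. Because $G_c$ depends only on $C$ in both the FIIF and PIIF SCMs of Fig.~\ref{fig:scm}, and there is no edge from $E$ to $C$, we obtain $\widehat{G}_c\ind E$. The pair $(f_c,g)$ therefore fulfills the constraints in Eq.~\ref{eq:inv_cond} and defines an invariant GNN as required.

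\textbf{Expected main obstacle.} The delicate step is uniqueness of $\widehat{G}_c=G_c$ for \emph{mixed} featurizers that return only partial causal content padded with spurious nodes up to the budget $s_c$. Showing that such solutions strictly lose on the causal term faster than they could gain on the spurious side requires the full force of variation consistency (Assumption~\ref{assump:var_consistency}), which forbids the degenerate scenario where swapping $C\leftrightarrow S$ leaves $P(G,Y)$ invariant. I also anticipate needing a careful argument that the hypothesis $I(G_s^p;G_s^n\mid Y)=0$, stated only for the argmaxes, propagates via the data-processing inequality to every sub-choice of spurious content a featurizer might select; here I would lean on the environment-assistant construction of $\{G^p\},\{G^n\}$ from Sec.~\ref{sec:gala_der} to certify the required conditional-independence structure.
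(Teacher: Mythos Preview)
Your high-level plan matches the paper's: decompose any candidate $\widehat{G}_c$ into a causal piece and a spurious piece, then argue that the spurious piece contributes nothing to $I(\widehat{G}_c^p;\widehat{G}_c^n\mid Y)$ while the causal piece is maximized only at $G_c$ under the budget $|\widehat{G}_c|\le s_c$. The paper, however, implements this via a \emph{swap argument} rather than your additive chain-rule split: it fixes a candidate, replaces a block $\triangle\widehat{G}_c^p\subseteq G_c$ by some $\triangle\widehat{G}_s^p\subseteq G_s$, expands $\triangle I(\widehat{G}_c^p;\widehat{G}_c^n\mid Y)$ as a difference of conditional entropies, and then does a case split on whether $\widehat{G}_c^n$ itself contains spurious content, invoking $C\ind S\mid Y$ (PIIF) in case~(i) and assumption~(iii) in case~(ii) to conclude $\triangle I\le 0$.

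Your route can be made to work, but as written there is a genuine gap in Step~1. The identity
\[
I(\widehat{G}_c^p;\widehat{G}_c^n\mid Y)=I(\widehat{G}_c^{p,(c)};\widehat{G}_c^{n,(c)}\mid Y)+I(\widehat{G}_c^{p,(s)};\widehat{G}_c^{n,(s)}\mid Y,\widehat{G}_c^{(c)})
\]
is \emph{not} the chain rule: a full expansion of $I(A_p,B_p;A_n,B_n\mid Y)$ produces two additional cross terms, $I(\widehat{G}_c^{p,(c)};\widehat{G}_c^{n,(s)}\mid Y,\widehat{G}_c^{n,(c)})$ and $I(\widehat{G}_c^{p,(s)};\widehat{G}_c^{n,(c)}\mid Y,\widehat{G}_c^{p,(c)})$, which you have silently dropped. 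They do vanish, but only because of the PIIF conditional independence $C\ind S\mid Y$ (so that any subgraph of $G_c$ is conditionally independent of any subgraph of $G_s$ given $Y$); the paper invokes exactly this fact at the analogous point in its swap computation. You need to state and use it explicitly, and in doing so you will see---as the paper's proof makes explicit---that the argument is for PIIF shifts. Once the cross terms are handled, your symmetric decomposition is arguably cleaner than the paper's case split on $\widehat{G}_c^n$, but the obstacle you flag about extending $I(G_s^p;G_s^n\mid Y)=0$ from the argmax spurious subgraphs to arbitrary spurious pieces remains; the paper resolves it simply by asserting assumption~(iii) applies to the displaced pieces $\triangle\widehat{G}_s^p,\triangle\widehat{G}_s^n$, so a data-processing bound of the kind you sketch would actually tighten that step.
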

The proof is given in Appendix~\ref{proof:gala_success_appdx}.
Essentially, assumption iii) in Theorem~\ref{thm:gala_success}
is an implication of the variation sufficiency (Assumption~\ref{assump:var_sufficiency}).
When given the distinct subsets $\{G^p\}$ and $\{G^n\}$
with different relations of $H(C|Y)$ and $H(S|Y)$,
since $H(C|Y)$ remains invariant across different subsets,
the variation happens mostly to the spurious correlations between $S$ and $Y$.
By differentiating spurious correlations into distinct subsets,
maximizing the intra-class mutual information helps identify the true invariance.
The fundamental rationale for why \ours resolves two seemingly conversed cases essentially relies on the commutative law of mutual information.

\section{Experiments}\label{sec:exp}%
We evaluated \ours with both synthetic and realistic graph distribution shifts.
Specifically, we are interested in the following two questions:
(a) Can \ours improve over the state-of-the-art invariant graph learning methods
when the spurious subgraph has a stronger correlation with the labels?
(b) Will \ours affect the performance when the invariant correlations are stronger?

\subsection{Datasets and experiment setup}
We prepare both synthetic and realistic graph datasets containing various distribution shifts to evaluate \ours. We will briefly introduce each dataset and leave more details in Appendix~\ref{sec:dataset_appdx}.

\textbf{Two-piece graph datasets.} We adopt BA-2motifs~\citep{pge} to implement $4$ variants of $3$-class two-piece graph (Def.~\ref{def:twobit_graph}) datasets. The datasets contain different relationships of $H(C|Y)$ and $H(S|Y)$ by controlling the $\alpha$ and $\beta$ in the mixed environment, respectively. We consider $4$ cases of $\alpha-\beta$, ranging from $\{+0.2,+0.1,-0.1,-0.2\}$, to verify our discussion in Sec.~\ref{sec:gala_theory}.

\textbf{Realistic datasets.} We also adopt datasets containing various realistic graph distribution shifts to comprehensively evaluate the OOD performance of \ours.
We adopt $6$ datasets from DrugOOD benchmark~\citep{drugood}, which focuses on the challenging real-world task of AI-aided drug affinity prediction.
The DrugOOD datasets include splits using Assay, Scaffold, and Size from the EC50 category (denoted as \textbf{EC50-*}) and the Ki category (denoted as \textbf{Ki-*}).
We also adopt graphs converted from the ColoredMNIST dataset~\citep{irmv1} using the algorithm from~\citet{understand_att}, which contains distribution shifts in node attributes (denoted as \textbf{CMNIST-sp}).
In addition, we adopt \textbf{Graph-SST2}~\citep{xgnn_tax}, where we split graphs with a larger average degree in the training set while smaller in the test set.

\textbf{Experiment setup.}
We adopt the state-of-the-art OOD methods from the Euclidean regime, including IRMv1~\citep{irmv1}, VREx~\citep{v-rex}, EIIL~\citep{env_inference} and IB-IRM~\citep{ib-irm},
and from the graph regime, including GREA~\citep{grea}, GSAT~\citep{gsat}, CAL~\citep{cal}, MoleOOD~\citep{moleood}, GIL~\citep{gil}, DisC~\citep{disc} and CIGA~\citep{ciga}.
We exclude DIR~\citep{dir} and GIB~\citep{gib} as GREA and GSAT are their sophisticated variants.
In addition to the ERM baseline that trained a vanilla GNN with ERM objective, in two-piece motif datasets, we also include XGNN to demonstrate the failures of previous approaches, which is an interpretable GNN trained with ERM.
We also exclude CIGAv2~\citep{ciga} as \ours focuses on improving the contrastive sampling via environment assistant for the objective in CIGAv1.
All methods use the same GIN backbone~\citep{gin}, the same interpretable GNN architecture as in~\citep{gsat}, and optimization protocol for fair comparisons. We tune the hyperparmeters following the common practice. Details are given in Appendix~\ref{sec:eval_appdx}.

\subsection{Experimental results and analysis}

\begin{wraptable}{r}{0.65\textwidth}
    \vspace{-0.3in}
    \center
    \caption{OOD generalization performance under various invariant and spurious correlation degrees in the two-piece graph datasets.
        Each dataset is generated from a variation of two-piece graph model, denoted as $\{a,b\}$, where $a$ refers to the invariant correlation strength and $b$ refers to the spurious correlation strength.
        The blue shadowed entries are the results with the mean-1*std larger than the mean of the corresponding second best results.}
    \vspace{-0.1in}
    \label{table:syn_graph}
    \resizebox{0.65\textwidth}{!}{
        \begin{tabular}{lccccc}
            \toprule
            \textbf{Datasets}      & $\{0.8,0.6\}$            & $\{0.8,0.7\}$            & $\{0.8,0.9\}$                                    & $\{0.7,0.9\}$                                    & Avg.           \\\midrule
            ERM                    & 77.33\std{0.47}          & 75.65\std{1.62}          & 51.37\std{1.20}                                  & 42.73\std{3.82}                                  & 61.77          \\
            IRM                    & 78.32\std{0.70}          & 75.13\std{0.77}          & 50.76\std{2.56}                                  & 41.32\std{2.50}                                  & 61.38          \\
            V-Rex                  & 77.69\std{0.38}          & 74.96\std{1.40}          & 49.47\std{3.36}                                  & 41.65\std{2.78}                                  & 60.94          \\
            IB-IRM                 & 78.00\std{0.68}          & 73.93\std{0.79}          & 50.93\std{1.87}                                  & 42.05\std{0.79}                                  & 61.23          \\
            EIIL                   & 76.98\std{1.24}          & 74.25\std{1.74}          & 51.45\std{4.92}                                  & 39.71\std{2.64}                                  & 60.60          \\
            \hline
            \rule{0pt}{12pt}XGNN   & 83.84\std{0.59}          & 83.05\std{0.20}          & 53.37\std{1.32}                                  & 38.28\std{1.71}                                  & 64.63          \\
            GREA                   & 82.86\std{0.50}          & 82.72\std{0.50}          & 50.34\std{1.74}                                  & 39.01\std{1.21}                                  & 63.72          \\
            GSAT                   & 80.54\std{0.88}          & 78.11\std{1.23}          & 48.63\std{2.18}                                  & 36.62\std{0.87}                                  & 63.32          \\
            CAL                    & 76.98\std{6.03}          & 62.95\std{8.58}          & 51.57\std{6.33}                                  & 46.23\std{3.93}                                  & 59.43          \\
            MoleOOD                & 49.93\std{2.25}          & 49.85\std{7.31}          & 38.49\std{4.25}                                  & 34.81\std{1.65}                                  & 43.27          \\
            GIL                    & 83.51\std{0.41}          & 82.67\std{1.18}          & 51.76\std{4.32}                                  & 40.07\std{2.61}                                  & 64.50          \\
            DisC                   & 60.47\std{17.9}          & 54.29\std{15.0}          & 45.06\std{7.82}                                  & 39.42\std{8.59}                                  & 50.81          \\
            CIGA                   & 84.03\std{0.53}          & 83.21\std{0.30}          & 57.87\std{3.38}                                  & 43.62\std{3.20}                                  & 67.18          \\
            \textbf{\ourst}        & \textbf{84.27\std{0.34}} & \textbf{83.65\std{0.44}} & \cellcolor{lightskyblue}\textbf{76.42\std{3.53}} & \cellcolor{lightskyblue}\textbf{72.50\std{1.06}} & \textbf{79.21} \\\hline
            \rule{0pt}{12pt}Oracle & 84.73\std{0.36}          & 85.42\std{0.25}          & 84.28\std{0.15}                                  & 78.38\std{0.19}                                                   \\
            \bottomrule
        \end{tabular}
    }
    \vspace{-0.1in}
\end{wraptable}

\textbf{Proof-of-concept study.}
The results in two-piece graph datasets are reported in Table~\ref{table:syn_graph}.
It can be found that the previous environment augmentation approaches fail either in datasets where the invariant correlations dominate or where the spurious correlations dominate, aligned with our discussions in Sec.~\ref{sec:env_aug_failure}.
In particular, \grea, \ciga and \gil achieve high performance when the invariant correlation dominates, but suffer great performance decrease when the spurious correlations are stronger.
Although \disc is expected to succeed when spurious correlations dominate, \disc fails to outperform others because of its excessive destruction of the learned information.
\mole also yields degraded performance, which could be caused by the failures to infer reliable environment labels.
In contrast, \ours achieves consistently high performance under \emph{both} cases and improves CIGA up to $30\%$ under $\{0.7,0.9\}$ and $13\%$ in average, which validates our theoretical results in Sec.~\ref{sec:gala_theory}.

\begin{table}[H]
    \center
    \caption{OOD generalization performance under realistic graph distribution shifts. The blue shadowed entries are the results with the mean-1*std larger than the mean of the respective second best results.}
    \vspace{-0.15in}
    \label{table:realistic_graph}
    \resizebox{\textwidth}{!}{
        \begin{tabular}{lccccccccc}
            \toprule
            \textbf{Datasets}      & EC50-Assay               & EC50-Sca                                         & EC50-Size                & Ki-Assay                                         & Ki-Sca                                           & Ki-Size                  & CMNIST-sp                                        & Graph-SST2               & Avg.(Rank)$^\dagger$  \\\midrule
            ERM                    & 76.42\std{1.59}          & 64.56\std{1.25}                                  & 61.61\std{1.52}          & 74.61\std{2.28}                                  & 69.38\std{1.65}                                  & 76.63\std{1.34}          & 21.56\std{5.38}                                  & 81.54\std{1.13}          & 65.79 (6.50)          \\
            IRM                    & 77.14\std{2.55}          & 64.32\std{0.42}                                  & 62.33\std{0.86}          & 75.10\std{3.38}                                  & 69.32\std{1.84}                                  & 76.25\std{0.73}          & 20.25\std{3.12}                                  & 82.52\std{0.79}          & 65.91 (6.13)          \\
            V-Rex                  & 75.57\std{2.17}          & 64.73\std{0.53}                                  & 62.80\std{0.89}          & 74.16\std{1.46}                                  & 71.40\std{2.77}                                  & 76.68\std{1.35}          & 30.71\std{11.8}                                  & 81.11\std{1.37}          & 67.15 (5.25)          \\
            IB-IRM                 & 64.70\std{2.50}          & 62.62\std{2.05}                                  & 58.28\std{0.99}          & 71.98\std{3.26}                                  & 69.55\std{1.66}                                  & 70.71\std{1.95}          & 23.58\std{7.96}                                  & 81.56\std{0.82}          & 62.87 (10.6)          \\
            EIIL                   & 64.20\std{5.40}          & 62.88\std{2.75}                                  & 59.58\std{0.96}          & 74.24\std{2.48}                                  & 69.63\std{1.46}                                  & 76.56\std{1.37}          & 23.55\std{7.68}                                  & 82.46\std{1.48}          & 64.14 (8.00)          \\\hline
            \rule{0pt}{12pt}XGNN   & 72.99\std{2.56}          & 63.62\std{1.35}                                  & 62.55\std{0.81}          & 72.40\std{3.05}                                  & 72.01\std{1.34}                                  & 73.15\std{2.83}          & 20.96\std{8.00}                                  & 82.55\std{0.65}          & 65.03 (7.13)          \\
            GREA                   & 66.87\std{7.53}          & 63.14\std{2.19}                                  & 59.20\std{1.42}          & 73.17\std{1.80}                                  & 67.82\std{4.67}                                  & 73.52\std{2.75}          & 12.77\std{1.71}                                  & 82.40\std{1.98}          & 62.36 (10.1)          \\
            GSAT                   & 76.07\std{1.95}          & 63.58\std{1.36}                                  & 61.12\std{0.66}          & 72.26\std{1.76}                                  & 70.16\std{0.80}                                  & 75.78\std{2.60}          & 15.24\std{3.72}                                  & 80.57\std{0.88}          & 64.35 (8.63)          \\
            CAL                    & 75.10\std{2.71}          & 64.79\std{1.58}                                  & 63.38\std{0.88}          & 75.22\std{1.73}                                  & 71.08\std{4.83}                                  & 72.93\std{1.71}          & 23.68\std{4.68}                                  & 82.38\std{1.01}          & 66.07 (5.38)          \\
            DisC                   & 61.94\std{7.76}          & 54.10\std{5.69}                                  & 57.64\std{1.57}          & 54.12\std{8.53}                                  & 55.35\std{10.5}                                  & 50.83\std{9.30}          & 50.26\std{0.40}                                  & 76.51\std{2.17}          & 56.59 (12.4)          \\
            MoleOOD                & 61.49\std{2.19}          & 62.12\std{1.91}                                  & 58.74\std{1.73}          & 75.10\std{0.73}                                  & 60.35\std{11.3}                                  & 73.69\std{2.29}          & 21.04\std{3.36}                                  & 81.56\std{0.35}          & 61.76 (10.0)          \\
            GIL                    & 70.56\std{4.46}          & 61.59\std{3.16}                                  & 60.46\std{1.91}          & 75.25\std{1.14}                                  & 70.07\std{4.31}                                  & 75.76\std{2.23}          & 12.55\std{1.26}                                  & 83.31\std{0.50}          & 63.69 (8.00)          \\
            CIGA                   & 75.03\std{2.47}          & 65.41\std{1.16}                                  & 64.10\std{1.08}          & 73.95\std{2.50}                                  & 71.87\std{3.32}                                  & 74.46\std{2.32}          & 15.83\std{2.56}                                  & 82.93\std{0.63}          & 65.45 (5.88)          \\
            \textbf{\ourst}        & \textbf{77.56\std{2.88}} & \cellcolor{lightskyblue}\textbf{66.28\std{0.45}} & \textbf{64.25\std{1.21}} & \cellcolor{lightskyblue}\textbf{77.92\std{2.48}} & \cellcolor{lightskyblue}\textbf{73.17\std{0.88}} & \textbf{77.40\std{2.04}} & \cellcolor{lightskyblue}\textbf{68.94\std{0.56}} & \textbf{83.60\std{0.66}} & \textbf{73.64 (1.00)} \\\hline
            \rule{0pt}{12pt}Oracle & 84.77\std{0.58}          & 82.66\std{1.19}                                  & 84.53\std{0.60}          & 91.08\std{1.43}                                  & 88.58\std{0.64}                                  & 92.50\std{0.53}          & 67.76\std{0.60}                                  & 91.40\std{0.26}          &                       \\
            \bottomrule
            \multicolumn{8}{l}{\rule{0pt}{12pt}$^\dagger$\text{\normalfont Averaged rank is also reported in the parentheses because of dataset heterogeneity. A lower rank is better.}  }
        \end{tabular}
    }
\end{table}

\textbf{OOD generalization in realistic graphs.}
The results in realistic datasets are reported in Table~\ref{table:realistic_graph}.
Aligned with our previous discussion, existing environment augmentation approaches sometimes yield better performance than ERM, such as CAL in EC50-Size, \mole in Ki-Assay, \gil in Graph-SST2, or \ciga in EC50-Size, however, inevitably fail to bring consistent improvements than ERM, due to the existence of failure cases.
\disc is suspected to work only for graph distribution shifts on node features and bring impressive improvements in CMNIST-sp, but can destroy the learned information under more challenging settings.
In contrast, \ours consistently outperform ERM by a non-trivial margin in all datasets. Notably, \ours achieves near oracle performance in CMNIST-sp and improves \ciga by $53\%$.
The consistent improvements of \ours confirm the effectiveness of \ours.

\begin{figure}[H]
    \vspace{-0.1in}
    \centering
    \subfigure[Correlation strengths]{
        \includegraphics[width=0.33\textwidth]{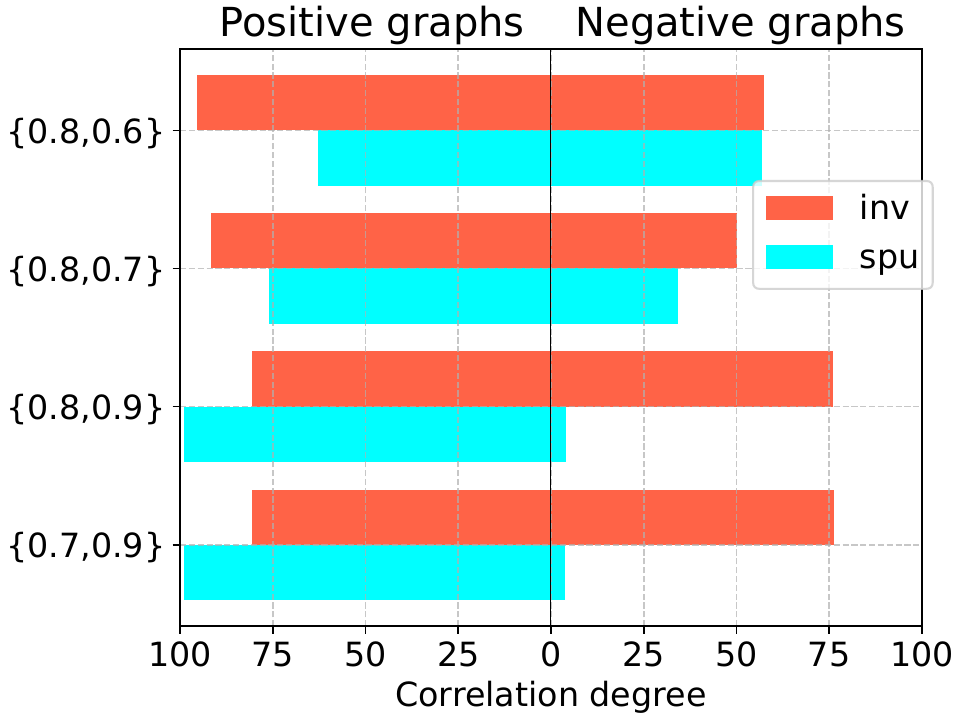}
        \label{fig:corr_change}
    }
    \subfigure[CIGAv2 compatibility]{
        \includegraphics[width=0.33\textwidth]{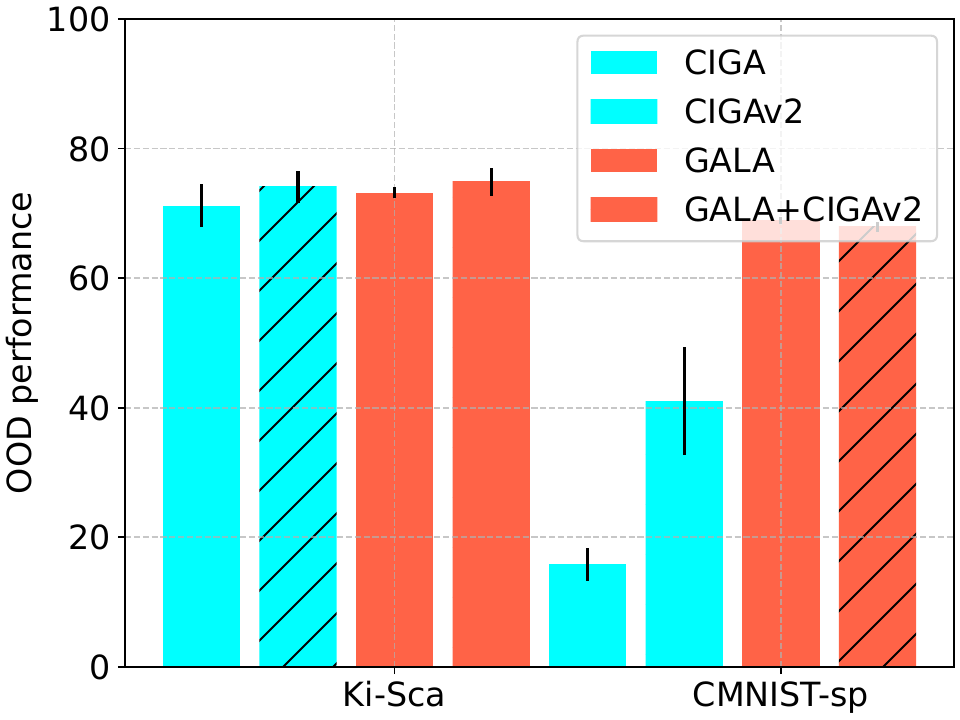}
        \label{fig:cigav2_comp}
    }
    \subfigure[Hyperparameter sensitivity]{
        \includegraphics[width=0.28\textwidth]{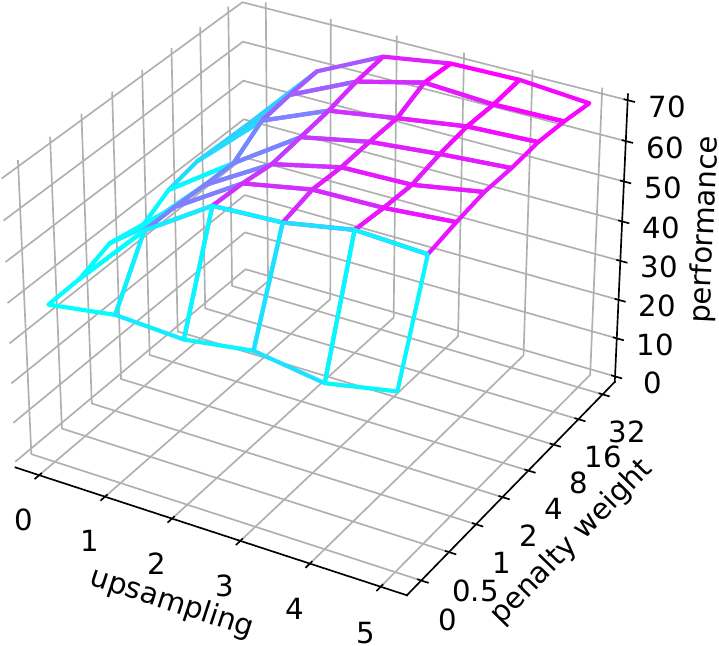}
        \label{fig:hyper_sens}
    }
    \vspace{-0.1in}
    \caption{Ablation studies.}
    \label{fig:ablation}
    \vspace{-0.15in}
\end{figure}

\textbf{Correlation strengths of $\{G^p\}$ and $\{G^n\}$.}
We conduct experiments with the two-piece graph datasets evaluated in Table~\ref{table:syn_graph} to verify the correctness of Eq.~\ref{eq:gala_sol_spu} and Eq.~\ref{eq:gala_sol_inv}.
Eq.~\ref{eq:gala_sol_spu} and Eq.~\ref{eq:gala_sol_inv} imply that the underlying invariant subgraph will be the subgraph that maximizes the mutual information among subgraphs from $\{G^p\}$ and $\{G^n\}$, no matter whether the dominant correlation is spurious or not.
We measure the invariant and spurious correlation strengths in terms of co-occur probability of the invariant and spurious subgraphs with the labels.
The results are shown in Fig.~\ref{fig:corr_change}. It can be found that, under both cases, the underlying invariant subgraph maintains the predictivity with the label in an invariant manner.
Hence, maximizing the intra-class subgraph mutual information between $\{G^p\}$ and $\{G^n\}$ in \ours succeeds in identifying the underlying invariant subgraph.

\textbf{CIGAv2 compatibility.}
Although \ours focuses on the contrastive term in CIGA, both \ours and CIGA are compatible with the additional CIGAv2 term that facilitates constraining the graph sizes.
To verify, we compare the OOD performances of CIGA, CIGAv2, \ours, and \ours+CIGAv2 using two challenging datasets, Ki-Scaffold and CMNIST-sp.
The results are given in Fig.~\ref{fig:cigav2_comp}.
It can be found that, despite incorporating the additional CIGAv2 constraint, CIGA can not outperform \ours, while \ours can bring more improvements with the additional CIGAv2 constraint.
In CMNIST-sp, since \ours already achieve the upper bound, incorporating CIGAv2 can only achieve a similar result.

\textbf{Hyperparameter sensitivity.}
We also test the hyperparameter sensitivity of \ours to the contrastive penalty weights as well as the upsampling times that are introduced to mitigate the imbalance of positive and negative graphs. We conduct the experiments with two-piece graph dataset $\{0.7,0.9\}$.
As shown in Fig.~\ref{fig:hyper_sens}, it can be found that \ours is generically robust to different hyperparameter choices.
In addition, when the penalty weight or the upsampling times turn to $0$, the performance will decrease a lot, which serves as strong evidence for the effectiveness of \ours.

\textbf{Computational analysis.} We also conduct computational analysis of \ours and other methods, and defer the results to Table.~\ref{table:time_analysis} in Appendix~\ref{sec:time_appdx}, due to space constraints. The results show that \ours costs only a competitive training time as environment generation based methods, while achieving much better OOD generalization performance.

\section{Conclusions}
We conducted a retrospective study on the faithfulness
of the augmented environment information for OOD generalization on graphs.
By showing hardness cases and impossibility results of the existing approaches, we developed a set of minimal assumptions
for feasible invariant graph learning. Built upon the assumptions,
we proposed \ours to learn the invariant graph representations
guided by an environment assistant model.
Extensive experiments with $12$ datasets verified the superiority of \ours.

\section*{Acknowledgements}
We thank the reviewers for their valuable comments. This work was supported by CUHK direct grant 4055146. BH was supported by the NSFC Young Scientists Fund No. 62006202, NSFC General Program No. 62376235, Guangdong Basic and Applied Basic Research Foundation No. 2022A1515011652, HKBU Faculty Niche Research Areas No. RC-FNRA-IG/22-23/SCI/04, and Tencent AI Lab Rhino-Bird Gift Fund.

\bibliography{references}

\begin{thebibliography}{94}
\providecommand{\natexlab}[1]{#1}
\providecommand{\url}[1]{\texttt{#1}}
\expandafter\ifx\csname urlstyle\endcsname\relax
  \providecommand{\doi}[1]{doi: #1}\else
  \providecommand{\doi}{doi: \begingroup \urlstyle{rm}\Url}\fi

\bibitem[Ahmad and Lin(1976)]{feat_dist_entropy}
I.~Ahmad and P.-E. Lin.
\newblock A nonparametric estimation of the entropy for absolutely continuous
  distributions (corresp.).
\newblock \emph{IEEE Transactions on Information Theory}, 22\penalty0
  (3):\penalty0 372--375, 1976.

\bibitem[Ahuja et~al.(2021)Ahuja, Caballero, Zhang, Gagnon-Audet, Bengio,
  Mitliagkas, and Rish]{ib-irm}
K.~Ahuja, E.~Caballero, D.~Zhang, J.-C. Gagnon-Audet, Y.~Bengio, I.~Mitliagkas,
  and I.~Rish.
\newblock Invariance principle meets information bottleneck for
  out-of-distribution generalization.
\newblock In \emph{Advances in Neural Information Processing Systems}, 2021.

\bibitem[Arjovsky et~al.(2019)Arjovsky, Bottou, Gulrajani, and
  Lopez-Paz]{irmv1}
M.~Arjovsky, L.~Bottou, I.~Gulrajani, and D.~Lopez-Paz.
\newblock Invariant risk minimization.
\newblock \emph{arXiv preprint arXiv:1907.02893}, 2019.

\bibitem[Belghazi et~al.(2018)Belghazi, Baratin, Rajeshwar, Ozair, Bengio,
  Courville, and Hjelm]{mine}
M.~I. Belghazi, A.~Baratin, S.~Rajeshwar, S.~Ozair, Y.~Bengio, A.~Courville,
  and D.~Hjelm.
\newblock Mutual information neural estimation.
\newblock In \emph{International Conference on Machine Learning}, volume~80,
  pages 531--540, 10--15 Jul 2018.

\bibitem[Bevilacqua et~al.(2021)Bevilacqua, Zhou, and Ribeiro]{size_gen2}
B.~Bevilacqua, Y.~Zhou, and B.~Ribeiro.
\newblock Size-invariant graph representations for graph classification
  extrapolations.
\newblock In \emph{International Conference on Machine Learning}, pages
  837--851, 2021.

\bibitem[Chen et~al.(2023{\natexlab{a}})Chen, Wang, Guo, Guo, Shao, Shen, and
  Cheng]{causality_bianode}
G.~Chen, Y.~Wang, F.~Guo, Q.~Guo, J.~Shao, H.~Shen, and X.~Cheng.
\newblock Causality and independence enhancement for biased node
  classification.
\newblock In \emph{{ACM} International Conference on Information and Knowledge
  Management}, pages 203--212, 2023{\natexlab{a}}.

\bibitem[Chen et~al.(2022{\natexlab{a}})Chen, Yang, Zhang, Ma, Liu, Han, and
  Cheng]{chen2022hao}
Y.~Chen, H.~Yang, Y.~Zhang, K.~Ma, T.~Liu, B.~Han, and J.~Cheng.
\newblock Understanding and improving graph injection attack by promoting
  unnoticeability.
\newblock In \emph{International Conference on Learning Representations},
  2022{\natexlab{a}}.

\bibitem[Chen et~al.(2022{\natexlab{b}})Chen, Zhang, Bian, Yang, Ma, Xie, Liu,
  Han, and Cheng]{ciga}
Y.~Chen, Y.~Zhang, Y.~Bian, H.~Yang, K.~Ma, B.~Xie, T.~Liu, B.~Han, and
  J.~Cheng.
\newblock Learning causally invariant representations for out-of-distribution
  generalization on graphs.
\newblock In \emph{Advances in Neural Information Processing Systems},
  2022{\natexlab{b}}.

\bibitem[Chen et~al.(2022{\natexlab{c}})Chen, Zhou, Bian, Xie, Ma, Zhang, Yang,
  Han, and Cheng]{pair}
Y.~Chen, K.~Zhou, Y.~Bian, B.~Xie, K.~Ma, Y.~Zhang, H.~Yang, B.~Han, and
  J.~Cheng.
\newblock Pareto invariant risk minimization.
\newblock \emph{arXiv preprint}, arXiv:2206.07766, 2022{\natexlab{c}}.

\bibitem[Chen et~al.(2023{\natexlab{b}})Chen, Huang, Zhou, Bian, Han, and
  Cheng]{fat}
Y.~Chen, W.~Huang, K.~Zhou, Y.~Bian, B.~Han, and J.~Cheng.
\newblock Towards understanding feature learning in out-of-distribution
  generalization.
\newblock \emph{arXiv preprint arXiv:2304.11327}, 2023{\natexlab{b}}.

\bibitem[Chopra et~al.(2005)Chopra, Hadsell, and LeCun]{contrast_loss1}
S.~Chopra, R.~Hadsell, and Y.~LeCun.
\newblock Learning a similarity metric discriminatively, with application to
  face verification.
\newblock In \emph{2005 {IEEE} Computer Society Conference on Computer Vision
  and Pattern Recognition {(CVPR} 2005), 20-26 June 2005, San Diego, CA,
  {USA}}, pages 539--546, 2005.

\bibitem[Creager et~al.(2021{\natexlab{a}})Creager, Jacobsen, and Zemel]{eiil}
E.~Creager, J.~Jacobsen, and R.~S. Zemel.
\newblock Environment inference for invariant learning.
\newblock In \emph{International Conference on Machine Learning}, pages
  2189--2200, 2021{\natexlab{a}}.

\bibitem[Creager et~al.(2021{\natexlab{b}})Creager, Jacobsen, and
  Zemel]{env_inference}
E.~Creager, J.~Jacobsen, and R.~S. Zemel.
\newblock Environment inference for invariant learning.
\newblock In \emph{International Conference on Machine Learning}, volume 139,
  pages 2189--2200, 2021{\natexlab{b}}.

\bibitem[Deng et~al.(2023)Deng, Yang, Mirzasoleiman, and Gu]{pde}
Y.~Deng, Y.~Yang, B.~Mirzasoleiman, and Q.~Gu.
\newblock Robust learning with progressive data expansion against spurious
  correlation.
\newblock \emph{arXiv preprint}, arXiv:2306.04949, 2023.

\bibitem[Devlin et~al.(2019)Devlin, Chang, Lee, and Toutanova]{bert}
J.~Devlin, M.-W. Chang, K.~Lee, and K.~Toutanova.
\newblock {BERT}: Pre-training of deep bidirectional transformers for language
  understanding.
\newblock In \emph{Conference of the North {A}merican Chapter of the
  Association for Computational Linguistics: Human Language Technologies,
  Volume 1 (Long and Short Papers)}, pages 4171--4186, 2019.

\bibitem[Ding et~al.(2021)Ding, Kong, Chen, Kirchenbauer, Goldblum, Wipf,
  Huang, and Goldstein]{closer_look_ood}
M.~Ding, K.~Kong, J.~Chen, J.~Kirchenbauer, M.~Goldblum, D.~Wipf, F.~Huang, and
  T.~Goldstein.
\newblock A closer look at distribution shifts and out-of-distribution
  generalization on graphs.
\newblock In \emph{NeurIPS 2021 Workshop on Distribution Shifts: Connecting
  Methods and Applications}, 2021.

\bibitem[Fan et~al.(2022)Fan, Wang, Mo, Shi, and Tang]{disc}
S.~Fan, X.~Wang, Y.~Mo, C.~Shi, and J.~Tang.
\newblock Debiasing graph neural networks via learning disentangled causal
  substructure.
\newblock In \emph{Advances in Neural Information Processing Systems}, 2022.

\bibitem[Fey and Lenssen(2019)]{pytorch_geometric}
M.~Fey and J.~E. Lenssen.
\newblock Fast graph representation learning with {PyTorch Geometric}.
\newblock In \emph{ICLR Workshop on Representation Learning on Graphs and
  Manifolds}, 2019.

\bibitem[Gao et~al.(2023)Gao, Zhou, Zhou, and Ribeiro]{Gao2023DoubleEF}
J.~Gao, Y.~Zhou, J.~Zhou, and B.~Ribeiro.
\newblock Double equivariance for inductive link prediction for both new nodes
  and new relation types.
\newblock volume arXiv:2302.01313, 2023.

\bibitem[Gardner et~al.(2018)Gardner, Grus, Neumann, Tafjord, Dasigi, Liu,
  Peters, Schmitz, and Zettlemoyer]{biaffine}
M.~Gardner, J.~Grus, M.~Neumann, O.~Tafjord, P.~Dasigi, N.~F. Liu, M.~E.
  Peters, M.~Schmitz, and L.~Zettlemoyer.
\newblock Allennlp: {A} deep semantic natural language processing platform.
\newblock \emph{arXiv preprint}, arXiv:1803.07640, 2018.

\bibitem[Gui et~al.(2022)Gui, Li, Wang, and Ji]{good_bench}
S.~Gui, X.~Li, L.~Wang, and S.~Ji.
\newblock {GOOD}: A graph out-of-distribution benchmark.
\newblock In \emph{Thirty-sixth Conference on Neural Information Processing
  Systems Datasets and Benchmarks Track}, 2022.

\bibitem[Gui et~al.(2023)Gui, Liu, Li, Luo, and Ji]{LECI}
S.~Gui, M.~Liu, X.~Li, Y.~Luo, and S.~Ji.
\newblock Joint learning of label and environment causal independence for graph
  out-of-distribution generalization.
\newblock \emph{arXiv preprint}, arXiv:2306.01103, 2023.

\bibitem[Gulrajani and Lopez{-}Paz(2021)]{domainbed}
I.~Gulrajani and D.~Lopez{-}Paz.
\newblock In search of lost domain generalization.
\newblock In \emph{International Conference on Learning Representations}, 2021.

\bibitem[Hamilton et~al.(2017)Hamilton, Ying, and Leskovec]{sage}
W.~L. Hamilton, Z.~Ying, and J.~Leskovec.
\newblock Inductive representation learning on large graphs.
\newblock In \emph{Advances in Neural Information Processing Systems}, pages
  1024--1034, 2017.

\bibitem[Hu et~al.(2020)Hu, Fey, Zitnik, Dong, Ren, Liu, Catasta, and
  Leskovec]{ogb}
W.~Hu, M.~Fey, M.~Zitnik, Y.~Dong, H.~Ren, B.~Liu, M.~Catasta, and J.~Leskovec.
\newblock Open graph benchmark: Datasets for machine learning on graphs.
\newblock In \emph{Advances in Neural Information Processing Systems}, 2020.

\bibitem[Huang et~al.(2021)Huang, Fu, Gao, Zhao, Roohani, Leskovec, Coley,
  Xiao, Sun, and Zitnik]{TDS}
K.~Huang, T.~Fu, W.~Gao, Y.~Zhao, Y.~H. Roohani, J.~Leskovec, C.~W. Coley,
  C.~Xiao, J.~Sun, and M.~Zitnik.
\newblock Therapeutics data commons: Machine learning datasets and tasks for
  drug discovery and development.
\newblock In \emph{Advances in Neural Information Processing Systems Datasets
  and Benchmarks Track (Round 1)}, 2021.

\bibitem[Ioffe and Szegedy(2015)]{batch_norm}
S.~Ioffe and C.~Szegedy.
\newblock Batch normalization: Accelerating deep network training by reducing
  internal covariate shift.
\newblock In \emph{International Conference on Machine Learning}, volume~37,
  pages 448--456, 2015.

\bibitem[{Ji} et~al.(2022){Ji}, {Zhang}, {Wu}, {Wu}, {Huang}, {Xu}, {Rong},
  {Li}, {Ren}, {Xue}, {Lai}, {Xu}, {Feng}, {Liu}, {Luo}, {Zhou}, {Huang},
  {Zhao}, and {Bian}]{drugood}
Y.~{Ji}, L.~{Zhang}, J.~{Wu}, B.~{Wu}, L.-K. {Huang}, T.~{Xu}, Y.~{Rong},
  L.~{Li}, J.~{Ren}, D.~{Xue}, H.~{Lai}, S.~{Xu}, J.~{Feng}, W.~{Liu},
  P.~{Luo}, S.~{Zhou}, J.~{Huang}, P.~{Zhao}, and Y.~{Bian}.
\newblock {DrugOOD: Out-of-Distribution (OOD) Dataset Curator and Benchmark for
  AI-aided Drug Discovery -- A Focus on Affinity Prediction Problems with Noise
  Annotations}.
\newblock \emph{arXiv preprint}, arXiv:2201.09637, 2022.

\bibitem[Jin et~al.(2022)Jin, Zhao, Ding, Liu, Tang, and Shah]{graph_ttt}
W.~Jin, T.~Zhao, J.~Ding, Y.~Liu, J.~Tang, and N.~Shah.
\newblock Empowering graph representation learning with test-time graph
  transformation.
\newblock \emph{arXiv preprint}, arXiv:2210.03561, 2022.

\bibitem[Kamath et~al.(2021)Kamath, Tangella, Sutherland, and
  Srebro]{irm_aistats}
P.~Kamath, A.~Tangella, D.~Sutherland, and N.~Srebro.
\newblock Does invariant risk minimization capture invariance?
\newblock In \emph{International Conference on Artificial Intelligence and
  Statistics}, pages 4069--4077, 2021.

\bibitem[Kamhoua et~al.(2022)Kamhoua, Zhang, Chen, Yang, KAILI, Han, Li, and
  Cheng]{shape_matching}
B.~F. Kamhoua, L.~Zhang, Y.~Chen, H.~Yang, M.~KAILI, B.~Han, B.~Li, and
  J.~Cheng.
\newblock Exact shape correspondence via 2d graph convolution.
\newblock In \emph{Advances in Neural Information Processing Systems}, 2022.

\bibitem[Kandasamy et~al.(2015)Kandasamy, Krishnamurthy, Poczos, Wasserman, and
  robins]{vMF_entropy}
K.~Kandasamy, A.~Krishnamurthy, B.~Poczos, L.~Wasserman, and j.~m. robins.
\newblock Nonparametric von mises estimators for entropies, divergences and
  mutual informations.
\newblock In \emph{Advances in Neural Information Processing Systems},
  volume~28, 2015.

\bibitem[Khosla et~al.(2020)Khosla, Teterwak, Wang, Sarna, Tian, Isola,
  Maschinot, Liu, and Krishnan]{sup_contrastive}
P.~Khosla, P.~Teterwak, C.~Wang, A.~Sarna, Y.~Tian, P.~Isola, A.~Maschinot,
  C.~Liu, and D.~Krishnan.
\newblock Supervised contrastive learning.
\newblock In \emph{Advances in Neural Information Processing Systems},
  volume~33, pages 18661--18673, 2020.

\bibitem[Kingma and Ba(2015)]{adam}
D.~P. Kingma and J.~Ba.
\newblock Adam: {A} method for stochastic optimization.
\newblock In \emph{International Conference on Learning Representations}, 2015.

\bibitem[Kipf and Welling(2017)]{gcn}
T.~N. Kipf and M.~Welling.
\newblock Semi-supervised classification with graph convolutional networks.
\newblock In \emph{International Conference on Learning Representations}, 2017.

\bibitem[Knyazev et~al.(2019)Knyazev, Taylor, and Amer]{understand_att}
B.~Knyazev, G.~W. Taylor, and M.~R. Amer.
\newblock Understanding attention and generalization in graph neural networks.
\newblock In \emph{Advances in Neural Information Processing Systems}, pages
  4204--4214, 2019.

\bibitem[Koh et~al.(2021)Koh, Sagawa, Marklund, Xie, Zhang, Balsubramani, Hu,
  Yasunaga, Phillips, Gao, Lee, David, Stavness, Guo, Earnshaw, Haque, Beery,
  Leskovec, Kundaje, Pierson, Levine, Finn, and Liang]{wilds}
P.~W. Koh, S.~Sagawa, H.~Marklund, S.~M. Xie, M.~Zhang, A.~Balsubramani, W.~Hu,
  M.~Yasunaga, R.~L. Phillips, I.~Gao, T.~Lee, E.~David, I.~Stavness, W.~Guo,
  B.~Earnshaw, I.~Haque, S.~M. Beery, J.~Leskovec, A.~Kundaje, E.~Pierson,
  S.~Levine, C.~Finn, and P.~Liang.
\newblock {WILDS:} {A} benchmark of in-the-wild distribution shifts.
\newblock In \emph{International Conference on Machine Learning,}, pages
  5637--5664, 2021.

\bibitem[Krueger et~al.(2021)Krueger, Caballero, Jacobsen, Zhang, Binas, Zhang,
  Priol, and Courville]{v-rex}
D.~Krueger, E.~Caballero, J.~Jacobsen, A.~Zhang, J.~Binas, D.~Zhang, R.~L.
  Priol, and A.~C. Courville.
\newblock Out-of-distribution generalization via risk extrapolation (rex).
\newblock In \emph{International Conference on Machine Learning}, pages
  5815--5826, 2021.

\bibitem[Lee et~al.(2022)Lee, Park, and Yoon]{multimodule_gnn}
H.~Lee, H.~Park, and K.~Yoon.
\newblock Towards better generalization with flexible representation of
  multi-module graph neural networks.
\newblock \emph{arXiv preprint}, arXiv:2209.06589, 2022.

\bibitem[Lee et~al.(2021)Lee, Kim, Lee, Lee, and Choo]{ldd}
J.~Lee, E.~Kim, J.~Lee, J.~Lee, and J.~Choo.
\newblock Learning debiased representation via disentangled feature
  augmentation.
\newblock In \emph{Advances in Neural Information Processing Systems}, 2021.

\bibitem[Li et~al.(2022)Li, Zhang, Wang, and Zhu]{gil}
H.~Li, Z.~Zhang, X.~Wang, and W.~Zhu.
\newblock Learning invariant graph representations for out-of-distribution
  generalization.
\newblock In \emph{Advances in Neural Information Processing Systems}, 2022.

\bibitem[Li et~al.(2023)Li, Gui, Luo, and Ji]{graph_extrapolation}
X.~Li, S.~Gui, Y.~Luo, and S.~Ji.
\newblock Graph structure and feature extrapolation for out-of-distribution
  generalization.
\newblock \emph{arXiv preprint}, arXiv:2306.08076, 2023.

\bibitem[Lin et~al.(2022)Lin, Zhu, Tan, and Cui]{zin}
Y.~Lin, S.~Zhu, L.~Tan, and P.~Cui.
\newblock {ZIN}: When and how to learn invariance without environment
  partition?
\newblock In \emph{Advances in Neural Information Processing Systems}, 2022.

\bibitem[Liu et~al.(2021{\natexlab{a}})Liu, Haghgoo, Chen, Raghunathan, Koh,
  Sagawa, Liang, and Finn]{jtt}
E.~Z. Liu, B.~Haghgoo, A.~S. Chen, A.~Raghunathan, P.~W. Koh, S.~Sagawa,
  P.~Liang, and C.~Finn.
\newblock Just train twice: Improving group robustness without training group
  information.
\newblock In \emph{International Conference on Machine Learning}, pages
  6781--6792, 2021{\natexlab{a}}.

\bibitem[Liu et~al.(2022)Liu, Zhao, Xu, Luo, and Jiang]{grea}
G.~Liu, T.~Zhao, J.~Xu, T.~Luo, and M.~Jiang.
\newblock Graph rationalization with environment-based augmentations.
\newblock \emph{arXiv preprint arXiv:2206.02886}, 2022.

\bibitem[Liu et~al.(2021{\natexlab{b}})Liu, Hu, Cui, Li, and Shen]{hrm}
J.~Liu, Z.~Hu, P.~Cui, B.~Li, and Z.~Shen.
\newblock Heterogeneous risk minimization.
\newblock In \emph{International Conference on Machine Learning}, volume 139,
  pages 6804--6814, 2021{\natexlab{b}}.

\bibitem[Liu et~al.(2023{\natexlab{a}})Liu, Li, Feng, Tran, Zhao, Qiu, and
  Li]{struc_reweight}
S.~Liu, T.~Li, Y.~Feng, N.~Tran, H.~Zhao, Q.~Qiu, and P.~Li.
\newblock Structural re-weighting improves graph domain adaptation.
\newblock In \emph{International Conference on Machine Learning}, Proceedings
  of Machine Learning Research, pages 21778--21793, 2023{\natexlab{a}}.

\bibitem[Liu et~al.(2023{\natexlab{b}})Liu, Ao, Feng, Ma, Li, Chua, and
  He]{flood}
Y.~Liu, X.~Ao, F.~Feng, Y.~Ma, K.~Li, T.~Chua, and Q.~He.
\newblock {FLOOD:} {A} flexible invariant learning framework for
  out-of-distribution generalization on graphs.
\newblock In \emph{Proceedings of the 29th {ACM} {SIGKDD} Conference on
  Knowledge Discovery and Data Mining}, pages 1548--1558, 2023{\natexlab{b}}.

\bibitem[Lucic et~al.(2022)Lucic, Ter~Hoeve, Tolomei, De~Rijke, and
  Silvestri]{cfxgnn}
A.~Lucic, M.~A. Ter~Hoeve, G.~Tolomei, M.~De~Rijke, and F.~Silvestri.
\newblock Cf-gnnexplainer: Counterfactual explanations for graph neural
  networks.
\newblock In \emph{International Conference on Artificial Intelligence and
  Statistics}, pages 4499--4511, 2022.

\bibitem[Luo et~al.(2020)Luo, Cheng, Xu, Yu, Zong, Chen, and Zhang]{pge}
D.~Luo, W.~Cheng, D.~Xu, W.~Yu, B.~Zong, H.~Chen, and X.~Zhang.
\newblock Parameterized explainer for graph neural network.
\newblock In \emph{Advances in Neural Information Processing Systems}, pages
  19620--19631, 2020.

\bibitem[Ma et~al.(2023)Ma, Yang, Yang, Chen, and Cheng]{kaili2023calibrating}
K.~Ma, G.~Yang, H.~Yang, Y.~Chen, and J.~Cheng.
\newblock Calibrating and improving graph contrastive learning.
\newblock \emph{Transactions on Machine Learning Research}, 2023.
\newblock ISSN 2835-8856.

\bibitem[Mahdavi et~al.(2022)Mahdavi, Swersky, Kipf, Hashemi, Thrampoulidis,
  and Liao]{OOD_CLRS}
S.~Mahdavi, K.~Swersky, T.~Kipf, M.~Hashemi, C.~Thrampoulidis, and R.~Liao.
\newblock Towards better out-of-distribution generalization of neural
  algorithmic reasoning tasks.
\newblock \emph{arXiv preprint arXiv:2211.00692}, 2022.

\bibitem[McInnes et~al.(2018)McInnes, Healy, Saul, and Grossberger]{umap}
L.~McInnes, J.~Healy, N.~Saul, and L.~Grossberger.
\newblock Umap: Uniform manifold approximation and projection.
\newblock \emph{The Journal of Open Source Software}, 3\penalty0 (29):\penalty0
  861, 2018.

\bibitem[Mendez et~al.(2019)Mendez, Gaulton, Bento, Chambers, Veij, Felix,
  Magari{\~{n}}os, Mosquera, Mutowo{-}Meullenet, Nowotka,
  Gordillo{-}Mara{\~{n}}{\'{o}}n, Hunter, Junco, Mugumbate,
  Rodr{\'{\i}}guez{-}L{\'{o}}pez, Atkinson, Bosc, Radoux, Segura{-}Cabrera,
  Hersey, and Leach]{chembl}
D.~Mendez, A.~Gaulton, A.~P. Bento, J.~Chambers, M.~D. Veij, E.~Felix, M.~P.
  Magari{\~{n}}os, J.~F. Mosquera, P.~Mutowo{-}Meullenet, M.~Nowotka,
  M.~Gordillo{-}Mara{\~{n}}{\'{o}}n, F.~M.~I. Hunter, L.~Junco, G.~Mugumbate,
  M.~Rodr{\'{\i}}guez{-}L{\'{o}}pez, F.~Atkinson, N.~Bosc, C.~J. Radoux,
  A.~Segura{-}Cabrera, A.~Hersey, and A.~R. Leach.
\newblock Chembl: towards direct deposition of bioassay data.
\newblock \emph{Nucleic Acids Research}, 47\penalty0 (Database-Issue):\penalty0
  D930--D940, 2019.

\bibitem[Miao et~al.(2022)Miao, Liu, and Li]{gsat}
S.~Miao, M.~Liu, and P.~Li.
\newblock Interpretable and generalizable graph learning via stochastic
  attention mechanism.
\newblock \emph{arXiv preprint arXiv:2201.12987}, 2022.

\bibitem[Miao et~al.(2023)Miao, Luo, Liu, and Li]{lri}
S.~Miao, Y.~Luo, M.~Liu, and P.~Li.
\newblock Interpretable geometric deep learning via learnable randomness
  injection.
\newblock In \emph{International Conference on Learning Representations}, 2023.

\bibitem[Murray and Rees(2009)]{fragment}
C.~Murray and D.~Rees.
\newblock The rise of fragment-based drug discovery.
\newblock \emph{Nature chemistry}, 1:\penalty0 187--92, 06 2009.

\bibitem[Nam et~al.(2020)Nam, Cha, Ahn, Lee, and Shin]{lff}
J.~Nam, H.~Cha, S.~Ahn, J.~Lee, and J.~Shin.
\newblock Learning from failure: Training debiased classifier from biased
  classifier.
\newblock In \emph{Advances in Neural Information Processing Systems}, 2020.

\bibitem[Paszke et~al.(2019)Paszke, Gross, Massa, Lerer, Bradbury, Chanan,
  Killeen, Lin, Gimelshein, Antiga, Desmaison, Kopf, Yang, DeVito, Raison,
  Tejani, Chilamkurthy, Steiner, Fang, Bai, and Chintala]{pytorch}
A.~Paszke, S.~Gross, F.~Massa, A.~Lerer, J.~Bradbury, G.~Chanan, T.~Killeen,
  Z.~Lin, N.~Gimelshein, L.~Antiga, A.~Desmaison, A.~Kopf, E.~Yang, Z.~DeVito,
  M.~Raison, A.~Tejani, S.~Chilamkurthy, B.~Steiner, L.~Fang, J.~Bai, and
  S.~Chintala.
\newblock Pytorch: An imperative style, high-performance deep learning library.
\newblock In \emph{Advances in Neural Information Processing Systems}, pages
  8024--8035, 2019.

\bibitem[Peters et~al.(2016)Peters, Bühlmann, and Meinshausen]{inv_principle}
J.~Peters, P.~Bühlmann, and N.~Meinshausen.
\newblock Causal inference by using invariant prediction: identification and
  confidence intervals.
\newblock \emph{Journal of the Royal Statistical Society: Series B (Statistical
  Methodology)}, 78\penalty0 (5):\penalty0 947--1012, 2016.

\bibitem[Pezeshki et~al.(2023)Pezeshki, Bouchacourt, Ibrahim, Ballas, Vincent,
  and Lopez{-}Paz]{xrm}
M.~Pezeshki, D.~Bouchacourt, M.~Ibrahim, N.~Ballas, P.~Vincent, and
  D.~Lopez{-}Paz.
\newblock Discovering environments with {XRM}.
\newblock \emph{arXiv preprint}, arXiv:2309.16748, 2023.

\bibitem[Salakhutdinov and Hinton(2007)]{contrast_loss2}
R.~Salakhutdinov and G.~E. Hinton.
\newblock Learning a nonlinear embedding by preserving class neighbourhood
  structure.
\newblock In \emph{International Conference on Artificial Intelligence and
  Statistics}, pages 412--419, 2007.

\bibitem[Socher et~al.(2013)Socher, Perelygin, Wu, Chuang, Manning, Ng, and
  Potts]{sst25}
R.~Socher, A.~Perelygin, J.~Wu, J.~Chuang, C.~D. Manning, A.~Y. Ng, and
  C.~Potts.
\newblock Recursive deep models for semantic compositionality over a sentiment
  treebank.
\newblock In \emph{Conference on Empirical Methods in Natural Language
  Processing}, pages 1631--1642, 2013.

\bibitem[Sui et~al.(2022)Sui, Wang, Wu, Lin, He, and Chua]{cal}
Y.~Sui, X.~Wang, J.~Wu, M.~Lin, X.~He, and T.-S. Chua.
\newblock Causal attention for interpretable and generalizable graph
  classification.
\newblock In \emph{Proceedings of the 28th ACM SIGKDD Conference on Knowledge
  Discovery and Data Mining}, page 1696–1705, 2022.

\bibitem[Tao et~al.(2023)Tao, Cao, Shen, Wu, Xu, and Cheng]{Tao2023IDEAIC}
S.~Tao, Q.~Cao, H.~Shen, Y.~Wu, B.~Xu, and X.~Cheng.
\newblock Idea: Invariant causal defense for graph adversarial robustness.
\newblock \emph{arXiv preprint}, arXiv:2305.15792, 2023.

\bibitem[Ulyanov et~al.(2016)Ulyanov, Vedaldi, and Lempitsky]{instancenorm}
D.~Ulyanov, A.~Vedaldi, and V.~S. Lempitsky.
\newblock Instance normalization: The missing ingredient for fast stylization.
\newblock \emph{arXiv preprint}, arXiv:1607.08022, 2016.

\bibitem[van~den Oord et~al.(2018)van~den Oord, Li, and Vinyals]{infoNCE}
A.~van~den Oord, Y.~Li, and O.~Vinyals.
\newblock Representation learning with contrastive predictive coding.
\newblock \emph{arXiv preprint}, arXiv:1807.03748, 2018.

\bibitem[Veličković et~al.(2018)Veličković, Cucurull, Casanova, Romero,
  Liò, and Bengio]{gat}
P.~Veličković, G.~Cucurull, A.~Casanova, A.~Romero, P.~Liò, and Y.~Bengio.
\newblock Graph attention networks.
\newblock In \emph{International Conference on Learning Representations}, 2018.

\bibitem[Wang and Isola(2020)]{align_uniform}
T.~Wang and P.~Isola.
\newblock Understanding contrastive representation learning through alignment
  and uniformity on the hypersphere.
\newblock In \emph{International Conference on Machine Learning}, pages
  9929--9939, 2020.

\bibitem[Wang et~al.(2023)Wang, Chen, Duan, Li, Han, Cheng, and
  Tong]{ood_kinetics}
Z.~Wang, Y.~Chen, Y.~Duan, W.~Li, B.~Han, J.~Cheng, and H.~Tong.
\newblock Towards out-of-distribution generalizable predictions of chemical
  kinetics properties.
\newblock \emph{arXiv preprint}, arXiv:2310.03152, 2023.

\bibitem[Wolpert and Macready(1997)]{no_free_lunch}
D.~Wolpert and W.~Macready.
\newblock No free lunch theorems for optimization.
\newblock \emph{IEEE Transactions on Evolutionary Computation}, 1\penalty0
  (1):\penalty0 67--82, 1997.

\bibitem[Wu et~al.(2022{\natexlab{a}})Wu, Zhang, Yan, and Wipf]{eerm}
Q.~Wu, H.~Zhang, J.~Yan, and D.~Wipf.
\newblock Handling distribution shifts on graphs: An invariance perspective.
\newblock In \emph{International Conference on Learning Representations},
  2022{\natexlab{a}}.

\bibitem[Wu et~al.(2022{\natexlab{b}})Wu, Wang, Zhang, He, and Chua]{dir}
Y.~Wu, X.~Wang, A.~Zhang, X.~He, and T.-S. Chua.
\newblock Discovering invariant rationales for graph neural networks.
\newblock In \emph{International Conference on Learning Representations},
  2022{\natexlab{b}}.

\bibitem[Xu et~al.(2018)Xu, Li, Tian, Sonobe, Kawarabayashi, and
  Jegelka]{jknet}
K.~Xu, C.~Li, Y.~Tian, T.~Sonobe, K.~Kawarabayashi, and S.~Jegelka.
\newblock Representation learning on graphs with jumping knowledge networks.
\newblock In \emph{International Conference on Machine Learning}, pages
  5449--5458, 2018.

\bibitem[Xu et~al.(2019)Xu, Hu, Leskovec, and Jegelka]{gin}
K.~Xu, W.~Hu, J.~Leskovec, and S.~Jegelka.
\newblock How powerful are graph neural networks?
\newblock In \emph{International Conference on Learning Representations}, 2019.

\bibitem[Xu et~al.(2021)Xu, Zhang, Li, Du, Kawarabayashi, and
  Jegelka]{nn_extrapo}
K.~Xu, M.~Zhang, J.~Li, S.~S. Du, K.~Kawarabayashi, and S.~Jegelka.
\newblock How neural networks extrapolate: From feedforward to graph neural
  networks.
\newblock In \emph{International Conference on Learning Representations}, 2021.

\bibitem[Yang et~al.(2022)Yang, Zeng, Wu, Jia, and Yan]{moleood}
N.~Yang, K.~Zeng, Q.~Wu, X.~Jia, and J.~Yan.
\newblock Learning substructure invariance for out-of-distribution molecular
  representations.
\newblock In \emph{Advances in Neural Information Processing Systems}, 2022.

\bibitem[Yehudai et~al.(2021)Yehudai, Fetaya, Meirom, Chechik, and
  Maron]{size_gen1}
G.~Yehudai, E.~Fetaya, E.~Meirom, G.~Chechik, and H.~Maron.
\newblock From local structures to size generalization in graph neural
  networks.
\newblock In \emph{International Conference on Machine Learning}, pages
  11975--11986, 2021.

\bibitem[Yeung(2008)]{network_coding}
R.~Yeung.
\newblock \emph{Information Theory and Network Coding}.
\newblock Springer New York, NY, 01 2008.
\newblock ISBN 978-0-387-79233-0.

\bibitem[You et~al.(2023)You, Chen, Wang, and Shen]{gda}
Y.~You, T.~Chen, Z.~Wang, and Y.~Shen.
\newblock Graph domain adaptation via theory-grounded spectral regularization.
\newblock In \emph{The Eleventh International Conference on Learning
  Representations}, 2023.

\bibitem[Yu et~al.(2021{\natexlab{a}})Yu, Xu, Rong, Bian, Huang, and He]{gib}
J.~Yu, T.~Xu, Y.~Rong, Y.~Bian, J.~Huang, and R.~He.
\newblock Graph information bottleneck for subgraph recognition.
\newblock In \emph{International Conference on Learning Representations},
  2021{\natexlab{a}}.

\bibitem[Yu et~al.(2021{\natexlab{b}})Yu, Xu, Rong, Bian, Huang, and He]{vgib}
J.~Yu, T.~Xu, Y.~Rong, Y.~Bian, J.~Huang, and R.~He.
\newblock Recognizing predictive substructures with subgraph information
  bottleneck.
\newblock \emph{IEEE transactions on pattern analysis and machine
  intelligence}, 2021{\natexlab{b}}.

\bibitem[Yu et~al.(2023)Yu, Liang, and He]{dps}
J.~Yu, J.~Liang, and R.~He.
\newblock Mind the label shift of augmentation-based graph {OOD}
  generalization.
\newblock In \emph{IEEE/CVF Conference on Computer Vision and Pattern
  Recognition}, 2023.

\bibitem[Yuan et~al.(2020)Yuan, Yu, Gui, and Ji]{xgnn_tax}
H.~Yuan, H.~Yu, S.~Gui, and S.~Ji.
\newblock Explainability in graph neural networks: {A} taxonomic survey.
\newblock \emph{arXiv preprint}, arXiv:2012.15445, 2020.

\bibitem[Zhang et~al.(2022{\natexlab{a}})Zhang, Sohoni, Zhang, Finn, and
  R{\'{e}}]{cnc}
M.~Zhang, N.~S. Sohoni, H.~R. Zhang, C.~Finn, and C.~R{\'{e}}.
\newblock Correct-n-contrast: {A} contrastive approach for improving robustness
  to spurious correlations.
\newblock \emph{arXiv preprint}, arXiv:2203.01517, 2022{\natexlab{a}}.

\bibitem[Zhang et~al.(2023)Zhang, Wang, Helwig, Luo, Fu, Xie, Liu, Lin, Xu,
  Yan, Adams, Weiler, Li, Fu, Wang, Yu, Xie, Fu, Strasser, Xu, Liu, Du, Saxton,
  Ling, Lawrence, St{\"{a}}rk, Gui, Edwards, Gao, Ladera, Wu, Hofgard, Tehrani,
  Wang, Daigavane, Bohde, Kurtin, Huang, Phung, Xu, Joshi, Mathis,
  Azizzadenesheli, Fang, Aspuru{-}Guzik, Bekkers, Bronstein, Zitnik,
  Anandkumar, Ermon, Li{\`{o}}, Yu, G{\"{u}}nnemann, Leskovec, Ji, Sun,
  Barzilay, Jaakkola, Coley, Qian, Qian, Smidt, and Ji]{ai4sci}
X.~Zhang, L.~Wang, J.~Helwig, Y.~Luo, C.~Fu, Y.~Xie, M.~Liu, Y.~Lin, Z.~Xu,
  K.~Yan, K.~Adams, M.~Weiler, X.~Li, T.~Fu, Y.~Wang, H.~Yu, Y.~Xie, X.~Fu,
  A.~Strasser, S.~Xu, Y.~Liu, Y.~Du, A.~Saxton, H.~Ling, H.~Lawrence,
  H.~St{\"{a}}rk, S.~Gui, C.~Edwards, N.~Gao, A.~Ladera, T.~Wu, E.~F. Hofgard,
  A.~M. Tehrani, R.~Wang, A.~Daigavane, M.~Bohde, J.~Kurtin, Q.~Huang,
  T.~Phung, M.~Xu, C.~K. Joshi, S.~V. Mathis, K.~Azizzadenesheli, A.~Fang,
  A.~Aspuru{-}Guzik, E.~Bekkers, M.~M. Bronstein, M.~Zitnik, A.~Anandkumar,
  S.~Ermon, P.~Li{\`{o}}, R.~Yu, S.~G{\"{u}}nnemann, J.~Leskovec, H.~Ji,
  J.~Sun, R.~Barzilay, T.~S. Jaakkola, C.~W. Coley, X.~Qian, X.~Qian, T.~E.
  Smidt, and S.~Ji.
\newblock Artificial intelligence for science in quantum, atomistic, and
  continuum systems.
\newblock \emph{arXiv preprint}, arXiv:2307.08423, 2023.

\bibitem[Zhang et~al.(2022{\natexlab{b}})Zhang, Gong, Liu, Niu, Tian, Han,
  Sch{\"o}lkopf, and Zhang]{causaladv}
Y.~Zhang, M.~Gong, T.~Liu, G.~Niu, X.~Tian, B.~Han, B.~Sch{\"o}lkopf, and
  K.~Zhang.
\newblock Adversarial robustness through the lens of causality.
\newblock In \emph{International Conference on Learning Representations},
  2022{\natexlab{b}}.

\bibitem[Zhou et~al.(2023{\natexlab{a}})Zhou, Bevilacqua, and
  Ribeiro]{Zhou2023AnOM}
J.~Zhou, B.~Bevilacqua, and B.~Ribeiro.
\newblock An ood multi-task perspective for link prediction with new relation
  types and nodes.
\newblock \emph{arXiv preprint}, arXiv:2307.06046, 2023{\natexlab{a}}.

\bibitem[Zhou et~al.(2022{\natexlab{a}})Zhou, Kutyniok, and Ribeiro]{size_gen3}
Y.~Zhou, G.~Kutyniok, and B.~Ribeiro.
\newblock {OOD} link prediction generalization capabilities of message-passing
  {GNN}s in larger test graphs.
\newblock In \emph{Advances in Neural Information Processing Systems},
  2022{\natexlab{a}}.

\bibitem[Zhou et~al.(2022{\natexlab{b}})Zhou, Kutyniok, and
  Ribeiro]{zhou2022ood}
Y.~Zhou, G.~Kutyniok, and B.~Ribeiro.
\newblock {OOD} link prediction generalization capabilities of message-passing
  {GNN}s in larger test graphs.
\newblock In \emph{Advances in Neural Information Processing Systems},
  2022{\natexlab{b}}.

\bibitem[Zhou et~al.(2023{\natexlab{b}})Zhou, Yao, Liu, Guo, Yao, He, Wang,
  Zheng, and Han]{zhou2023combating}
Z.~Zhou, J.~Yao, J.~Liu, X.~Guo, Q.~Yao, L.~He, L.~Wang, B.~Zheng, and B.~Han.
\newblock Combating bilateral edge noise for robust link prediction.
\newblock In \emph{Advances in Neural Information Processing Systems},
  2023{\natexlab{b}}.

\bibitem[Zhou et~al.(2023{\natexlab{c}})Zhou, Zhou, Li, Yao, Yao, and
  Han]{zhou2023mcgra}
Z.~Zhou, C.~Zhou, X.~Li, J.~Yao, Q.~Yao, and B.~Han.
\newblock On strengthening and defending graph reconstruction attack with
  markov chain approximation.
\newblock In \emph{International Conference on Machine Learning},
  2023{\natexlab{c}}.

\bibitem[Zhu et~al.(2023)Zhu, Jiao, Ponomareva, Han, and Perozzi]{conda}
Q.~Zhu, Y.~Jiao, N.~Ponomareva, J.~Han, and B.~Perozzi.
\newblock Explaining and adapting graph conditional shift.
\newblock \emph{arXiv preprint}, arXiv:2306.03256, 2023.

\bibitem[Zou et~al.(2023)Zou, Liu, Miao, Fung, Chang, and Li]{gdl_ds}
D.~Zou, S.~Liu, S.~Miao, V.~Fung, S.~Chang, and P.~Li.
\newblock {GDL-DS:} {A} benchmark for geometric deep learning under
  distribution shifts.
\newblock \emph{arXiv preprint}, abs/2310.08677, 2023.

\end{thebibliography}
\bibliographystyle{abbrvnat}

\newpage

\appendix

\begin{center}
	\LARGE \bf {Appendix of GALA}
\end{center}

\etocdepthtag.toc{mtappendix}
\etocsettagdepth{mtchapter}{none}
\etocsettagdepth{mtappendix}{subsection}
\tableofcontents

\newpage
\section{Notations}
\label{sec:notations_appdx}
Typically, for graphs that appeared in the discussion, we will use the superscript to denote the sampling process (e.g., $G^p$ is the positive graph), and the subscript to denote the specific invariant (i.e., $G_c$) or spurious subgraph (i.e., $G_s$). Graph symbols with $\pred{G}$ are the predicted graphs of a model (i.e., the estimated invariant subgraph $\pred{G}_c$. Below, we list some examples of graphs involved in this paper. 
\begin{table}[ht]%
	\caption{Notations for graphs involved in this paper}
	\centering
    \resizebox{\textwidth}{!}{
	\begin{tabular}{ll}
		\toprule
        \textbf{Symbols} & \textbf{Definitions}\\\midrule
		\(\gG\)                          & the graph space\\
		\(\gG_c\)                          & the space of subgraphs with respect to the graphs from $\gG$\\
		\(\gY\)                          & the label space\\
		\(G\in\gG\)                          & a graph\\
		\(G=(A,X)\)                          & a graph with the adjacency matrix $A\in\{0,1\}^{n\times n}$ and node feature matrix $X\in\R^{n\times d}$\\
		\(\{G\}\)                          & a set of graphs\\\midrule
		\(G^p\)                          & a graph sampled as positive samples\\
		\(G^n\)                          & a graph sampled as negative samples\\
		\(G^s\)                          & a graph sampled according to CIGA~\cite{ciga}\\
		\(G_c\)                          & the invariant subgraph with respect to $G$\\
		\(G_s\)                          & the spurious subgraph with respect to $G$\\
        \(G_c^p\)                          & the invariant subgraph of a positive graph $G^p$\\
		\(G_s^p\)                          & the spurious subgraph of a positive graph $G^p$\\\midrule
        \(\pred{G}_c\)                          & the estimated invariant subgraph\\
		\(\pred{G}_s\)                          & the estimated spurious subgraph\\
        \(\pred{G}_c^p\)                          & the estimated invariant subgraph of a positive graph $G^p$\\
		\(\pred{G}_s^p\)                          & the estimated spurious subgraph of a positive graph $G^p$\\
        \(\pa\pred{G}_c\subseteq G_c\)                          & the part of the underlying invariant subgraph $G_c$ appeared in $\pred{G}_c$\\
		\(\pc\pred{G}_c= G_c-\pa\pred{G}_c\)                          & the complementary part of $\pa\pred{G}_c$ with respect to the invariant subgraph $G_c$\\\midrule
		\bottomrule
	\end{tabular}}
\end{table}

\section{Limitations and Future Directions}
Although our work establishes a set of minimal assumptions for feasible invariant graph learning when the environment partitions and auxiliary information about the environment are both not available, our work is built upon the minimal availability of the environment knowledge.
Nevertheless, there could exist some additional information that may be helpful for environment augmentation.
Therefore, it remains interesting to explore more theoretically grounded strategies to discover and leverage more environment information for identifying the graph invariance. When the direct environment augmentation is not feasible, \ours provides a suitable framework that one could easily manipulate the environment assistant model or the partitioning of the positive and negative graphs, to select the spurious features via the additional information and better identify the graph invariance.

In addition to the correlation strengths discussed in this work, there exist other factors, such as the size of spurious and invariant subgraphs, that affect the fitting of spurious and invariant patterns,  another promising future direction is to discuss the influence of these factors to the design of environment assistant model and OOD generalization on graphs.

Besides, a better data partitioning strategy can be developed with uncertainty measures~\citep{kaili2023calibrating}.

\section{Full Details of the Background}
\label{sec:prelim_appdx}
We give a more detailed background introduction about GNNs and Invariant Learning in this section.

\textbf{Graph Neural Networks.} Let $G=(A,X)$ denote a graph with $n$ nodes and $m$ edges,
where $A \in \{0,1\}^{n\times n}$ is the adjacency matrix, and $X\in \R^{n \times d}$ is the node feature matrix
with a node feature dimension of $d$.
In graph classification, we are given a set of $N$ graphs $\{G_i\}_{i=1}^N\subseteq \gG$
and their labels $\{Y_i\}_{i=1}^N\subseteq\gY=\R^c$ from $c$ classes.
Then, we train a GNN $\rho \circ h$ with an encoder $h:\gG\rightarrow\R^h$ that learns a meaningful representation $h_G$ for each graph $G$ to help predict their labels $y_G=\rho(h_G)$ with a downstream classifier $\rho:\R^h\rightarrow\gY$.
The representation $h_G$ is typically obtained by performing pooling with a $\text{READOUT}$ function on the learned node representations:
\begin{equation}
    \label{eq:gnn_pooling}
    h_G = \text{READOUT}(\{h^{(K)}_u|u\in V\}),
\end{equation}
where the $\text{READOUT}$ is a permutation invariant function (e.g., $\text{SUM}$, $\text{MEAN}$)~\citep{gin},
and $h^{(K)}_u$ stands for the node representation of $u\in V$ at $K$-th layer that is obtained by neighbor aggregation:
\begin{equation}
    \label{eq:gnn}
    h^{(K)}_u = \sigma(W_K\cdot a(\{h^{(K-1)}_v\}| v\in\mathcal{N}(u)\cup\{u\})),
\end{equation}
where $\mathcal{N}(u)$ is the set of neighbors of node $u$,
$\sigma(\cdot)$ is an activation function, e.g., $\text{ReLU}$, and $a(\cdot)$ is an aggregation function over neighbors, e.g., $\text{MEAN}$.

\begin{figure*}[ht]
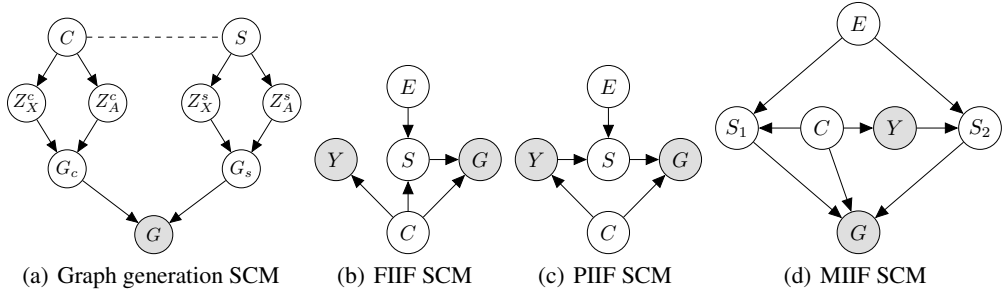

	\centering\hfill
	\subfigure[Graph generation SCM]{\label{fig:graph_gen_appdx}
		\resizebox{!}{0.225\textwidth}{\tikz{
				\node[latent] (S) {$S$};%
				\node[latent,left=of S,xshift=-1.5cm] (C) {$C$};%
				\node[latent,below=of C,xshift=-0.75cm,yshift=0.5cm] (ZCA) {$Z_X^c$}; %
				\node[latent,below=of C,xshift=0.75cm,yshift=0.5cm] (ZCX) {$Z_A^c$}; %
				\node[latent,below=of S,xshift=-0.75cm,yshift=0.5cm] (ZSA) {$Z_X^s$}; %
				\node[latent,below=of S,xshift=0.75cm,yshift=0.5cm] (ZSX) {$Z_A^s$}; %
				\node[latent,below=of ZCX,xshift=-0.75cm,yshift=0.5cm] (GC) {$G_c$}; %
				\node[latent,below=of ZSX,xshift=-0.75cm,yshift=0.5cm] (GS) {$G_s$}; %
				\node[obs,below=of GC,xshift=1.6cm,yshift=0.5cm] (G) {$G$}; %
				\edge[dashed,-] {C} {S}
				\edge {C} {ZCX,ZCA}
				\edge {S} {ZSX,ZSA}
				\edge {ZCX,ZCA} {GC}
				\edge {ZSX,ZSA} {GS}
				\edge {GC,GS} {G}
			}}}
	\subfigure[FIIF SCM]{\label{fig:scm_fiif_appdx}
		\resizebox{!}{0.18\textwidth}{\tikz{
				\node[latent] (E) {$E$};%
				\node[latent,below=of E,yshift=0.5cm] (S) {$S$}; %
				\node[obs,below=of E,xshift=-1.2cm,yshift=0.5cm] (Y) {$Y$}; %
				\node[obs,below=of E,xshift=1.2cm,yshift=0.5cm] (G) {$G$}; %
				\node[latent,below=of Y,xshift=1.2cm,yshift=0.5cm] (C) {$C$}; %
				\edge {E} {S}
				\edge {C} {Y,G}
				\edge {S} {G}
				\edge {C} {S}
			}}}
	\subfigure[PIIF SCM]{\label{fig:scm_piif_appdx}
		\resizebox{!}{0.18\textwidth}{\tikz{
				\node[latent] (E) {$E$};%
				\node[latent,below=of E,yshift=0.5cm] (S) {$S$}; %
				\node[obs,below=of E,xshift=-1.2cm,yshift=0.5cm] (Y) {$Y$}; %
				\node[obs,below=of E,xshift=1.2cm,yshift=0.5cm] (G) {$G$}; %
				\node[latent,below=of Y,xshift=1.2cm,yshift=0.5cm] (C) {$C$}; %
				\edge {E} {S}
				\edge {C} {Y,G}
				\edge {S} {G}
				\edge {Y} {S}
			}}}
	\subfigure[MIIF SCM]{\label{fig:scm_miif_appdx}
		\resizebox{!}{0.24\textwidth}{\tikz{
				\node[latent] (E) {$E$};%
				\node[latent,below=of E,xshift=-2cm] (S1) {$S_1$}; %
				\node[latent,below=of E,xshift=-0.6cm] (C) {$C$}; %
				\node[latent,below=of E,xshift=2cm] (S2) {$S_2$}; %
				\node[obs,below=of E,xshift=0.6cm] (Y) {$Y$}; %
				\node[obs,below=of C,xshift=0.6cm] (G) {$G$}; %
				\edge {E} {S1,S2}
				\edge {C} {S1,Y,G}
				\edge {Y} {S2}
				\edge {S1,S2} {G}
			}}}
	\caption{Full SCMs on Graph Distribution Shifts~\citep{ciga}.}
	\label{fig:scm_appdx}
\end{figure*}

\paragraph{Graph generation process.}
This work focuses on graph classification, while the results generalize
to node classification as well using the same setting as in~\citet{eerm}.
Specifically,
we are given a set of graph datasets $\dataset=\{\dataset_e\}_e$ collected from multiple environments $\envall$.
Samples $(G^e_i, Y^e_i)\in \dataset^e$ from the same
environment are considered as drawn independently from an identical distribution $\sP^e$.
We consider the graph generation process proposed
by~\citet{ciga} that covers a broad case of graph distribution shifts.
Fig.~\ref{fig:scm_appdx} shows the full graph generation process considered in~\citet{ciga}.
The generation of the observed graph $G$ and labels $Y$
are controlled by a set of latent causal variable $C$ and spurious variable $S$, i.e.,
\[G\coloneqq f_\gen(C,S).\]
$C$ and $S$ control the generation of $G$ by controlling the underlying invariant subgraph $G_c$
and spurious subgraph $G_s$, respectively.
Since $S$ can be affected by the environment $E$,
the correlation between $Y$, $S$ and $G_s$ can change arbitrarily
when the environment changes.
$C$ and $S$ control the generation of the underlying invariant subgraph $G_c$
and spurious subgraph $G_s$, respectively.
Since $S$ can be affected by the environment $E$,
the correlation between $Y$, $S$ and $G_s$ can change arbitrarily
when the environment changes.
Besides, the latent interaction among $C$, $S$ and $Y$
can be further categorized into \emph{Full Informative Invariant Features} (\emph{FIIF})
when $Y\ind S|C$ and \emph{Partially Informative Invariant Features} (\emph{PIIF}) when $Y \not\ind S|C$. Furthermore, PIIF and FIIF shifts can be mixed together and yield \emph{Mixed Informative Invariant Features} (\emph{MIIF}), as shown in Fig.~\ref{fig:scm_appdx}.
We refer interested readers to~\citet{ciga} for a detailed introduction of the graph generation process.

\paragraph{Invariant graph representation learning.}
To tackle the OOD generalization challenge
on graphs from Fig.~\ref{fig:scm_appdx},
the existing invariant graph learning approaches generically
aim to identify the underlying invariant subgraph $G_c$ to predict the label $Y$~\citep{eerm,ciga}.
Specifically, the goal of OOD generalization on graphs
is to learn an \emph{invariant GNN} $f\coloneqq f_c\circ g$,
which is composed of two modules:
a) a featurizer $g:\gG\rightarrow\gG_c$ that extracts the invariant subgraph $G_c$;
b) a classifier $f_c:\gG_c\rightarrow\gY$ that predicts the label $Y$ based on the extracted $G_c$,
where $\gG_c$ refers to the space of subgraphs of $\gG$.
The learning objectives of $f_c$ and $g$ are formulated as
\begin{equation}
    \label{eq:inv_cond_appdx}
    \text{$\max$}_{f_c, \; g} \ I(\pred{G}_{c};Y), \ \text{s.t.}\ \pred{G}_{c}\ind E,\ \pred{G}_{c}=g(G).
\end{equation}
Since $E$ is not observed, many strategies are proposed to
impose the independence of $\pred{G}_c$ and $E$.
A common approach is to augment the environment information.
For example, based on the estimated invariant subgraphs $\pred{G}_c$ and spurious subgraphs $\pred{G}_s$,
\citet{dir,grea,eerm} proposed to generate new environments, while \citet{moleood,gil} proposed to infer the underlying environment labels.
However, we show that it is fundamentally impossible to augment faithful environment information in Sec.~\ref{sec:env_aug_failure}.
\citet{gib,vgib,gsat,dps,lri} adopt graph information bottleneck to tackle FIIF graph shifts, and they cannot generalize to PIIF shifts.
Our work focuses on PIIF shifts, as it is more challenging when without environment labels~\citep{zin}.
\citet{disc} generalized~\citep{ldd} to tackle severe graph biases, i.e., when $H(S|Y)< H(C|Y)$.
\citet{ciga} proposed a contrastive framework to tackle both
FIIF and PIFF graph shifts, but limited to $H(S|Y)> H(C|Y)$.
However, in practice it is usually unknown whether $H(S|Y)< H(C|Y)$ or $H(S|Y)> H(C|Y)$ without environment information.

\paragraph{More OOD generalization on graphs.}
In addition to the aforementioned invariant learning approaches, \citet{size_gen1,size_gen2,size_gen3,graph_extrapolation} study the OOD generalization as an extrapolation from small graphs to larger graphs in the task of graph classification and link prediction. In contrast, we study OOD generalization against various graph distribution shifts formulated in Fig.~\ref{fig:scm_appdx}. In addition to the standard OOD generalization tasks studied in this paper, \citet{nn_extrapo,OOD_CLRS} study the OOD generalization in tasks of algorithmic reasoning on graphs. \citet{graph_ttt} study the test-time adaption in the graph regime. \citet{shape_matching} study the 3D shape matching under the presence of noises. \citet{LECI} propose an independence constraint onto the target label and environment label to improve the OOD generalization when environment labels are available.
\citet{flood}  adopt a flexible framework to tackle shifting graph distributions.
\citet{chen2022hao,zhou2023combating,zhou2023mcgra,Tao2023IDEAIC} study the OOD generalization on graphs from the adversarial robustness perspective.

In addition to graph classification, \citet{eerm,causality_bianode} study node classification. \citet{struc_reweight} propose a structural reweighting strategy to improve the OOD generalization of node classification. \citet{multimodule_gnn} propose to incorporate multiple modules to handle different degree modes in OOD node classification. \citet{gda,conda} study unsupervised graph domain adaption.\citet{zhou2022ood,Gao2023DoubleEF,Zhou2023AnOM} study the OOD link prediction.

Besides, \citet{cfxgnn} aims to find counterfactual subgraphs for explaining GNNs, which focuses on post-hoc explainability while this work focuses on intrinsic interpretability.

\paragraph{Invariant learning without environment labels.}
There are also plentiful studies in invariant learning without environment labels.
\citet{eiil} proposed a minmax formulation to infer the environment labels.
\citet{hrm} proposed a self-boosting framework based on the estimated invariant and variant features.
\citet{jtt,cnc} proposed to infer labels based the predictions of an ERM trained model.
\citet{xrm,pde} improve the inference of group labels based on feature learning and prediction correctness.
However, \citet{zin} found failure cases in Euclidean data
where it is impossible to identify the invariant features without given environment labels.
Moreover, as the OOD generalization on graphs is fundamentally more difficult than Euclidean data~\citep{ciga}, the question about the feasibility of learning invariant subgraphs without environment labels remains unanswered.

\section{More Details about the Failure Cases}
\label{sec:fail_appdx}

We provide more empirical results and details about the failure case verification experiments in complementary to Sec.~\ref{sec:env_aug_failure}.
The results are shown in Fig.~\ref{fig:fail_appdx}. We compared different environment augmentation approaches the vanilla GNN model trained with ERM (termed ERM), and an interpretable GNN model trained with ERM (termed XGNN).

\begin{figure}[ht]
    \centering
    \subfigure[Failures of env. generation]{
        \includegraphics[width=0.31\textwidth]{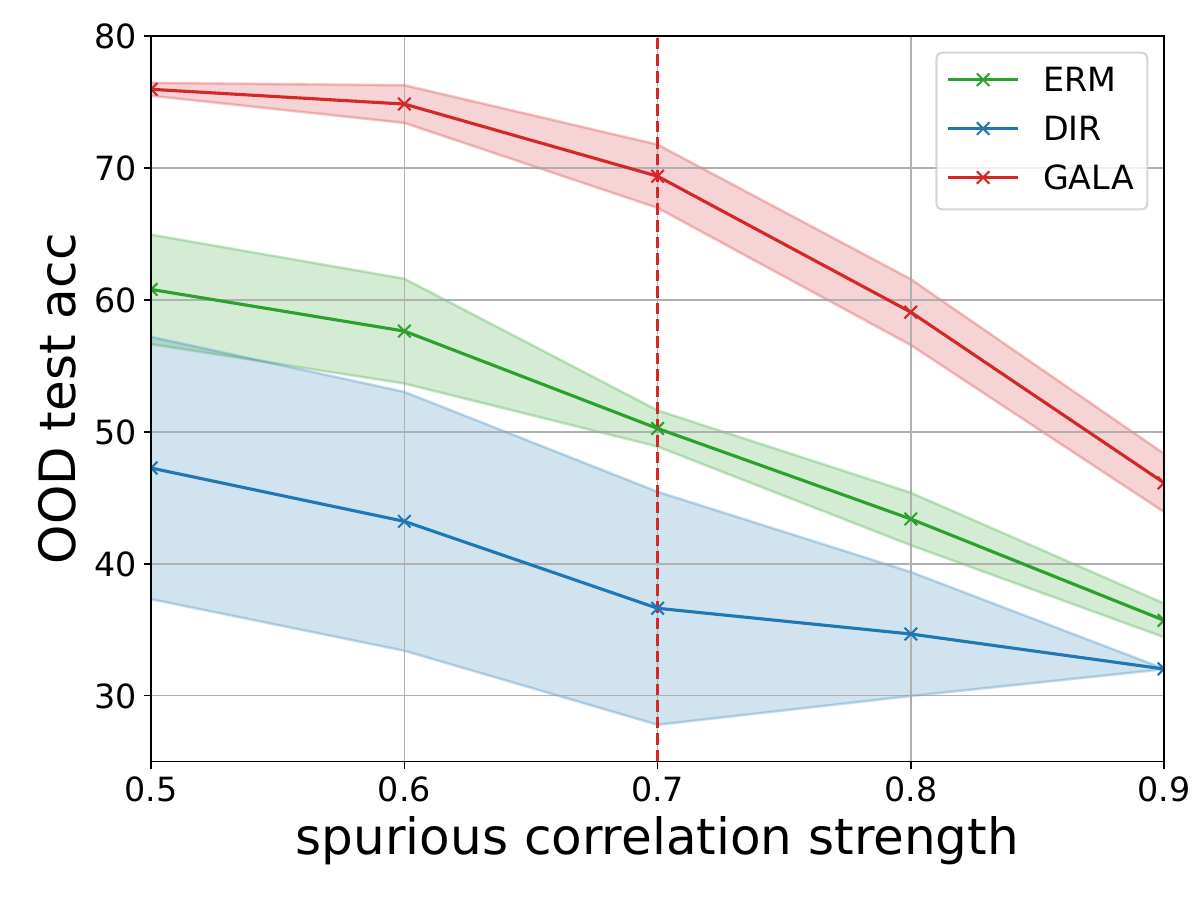}
        \label{fig:grea_fail_p1_appdx}
    }
    \subfigure[Failures of env. inferring]{
        \includegraphics[width=0.31\textwidth]{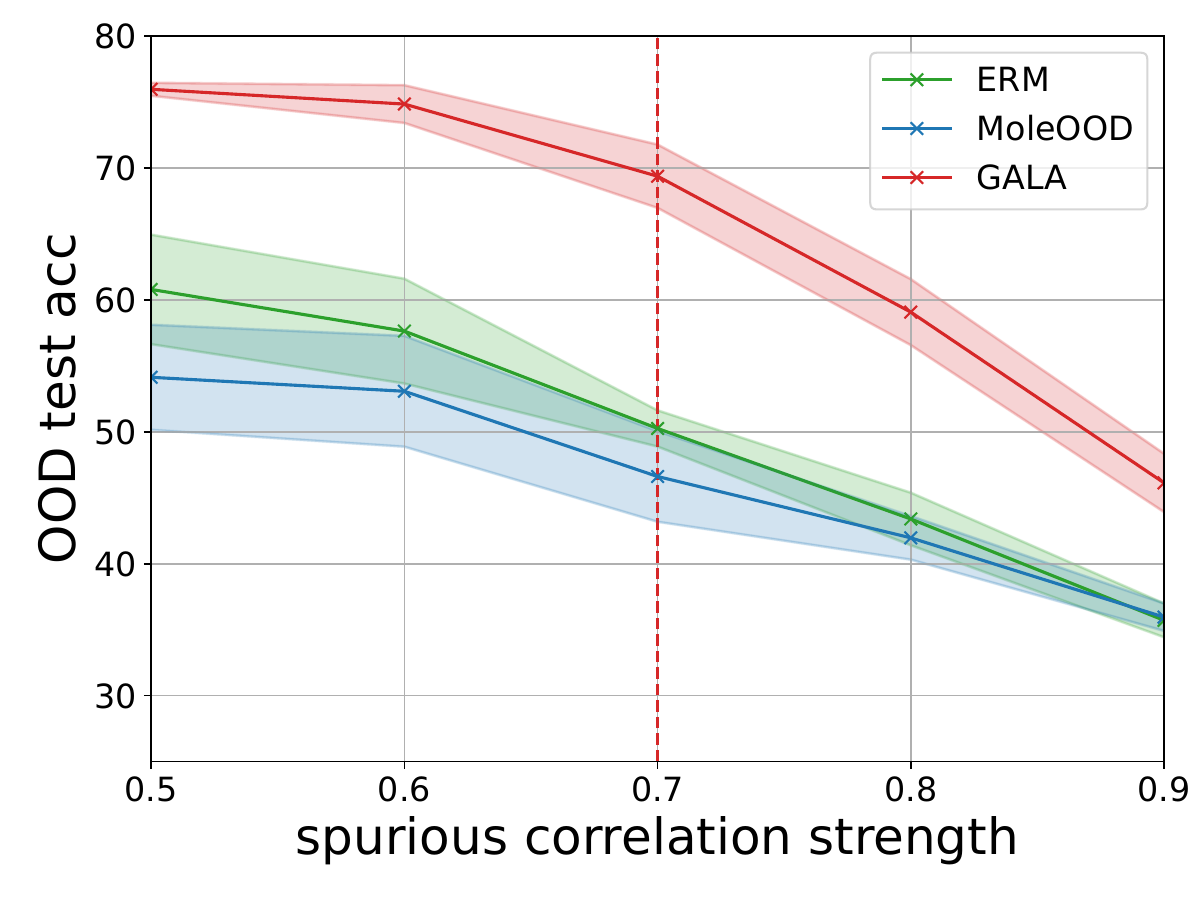}
        \label{fig:gil_fail_p1_appdx}
    }
    \subfigure[Failures of resolving env. consistency]{
        \includegraphics[width=0.31\textwidth]{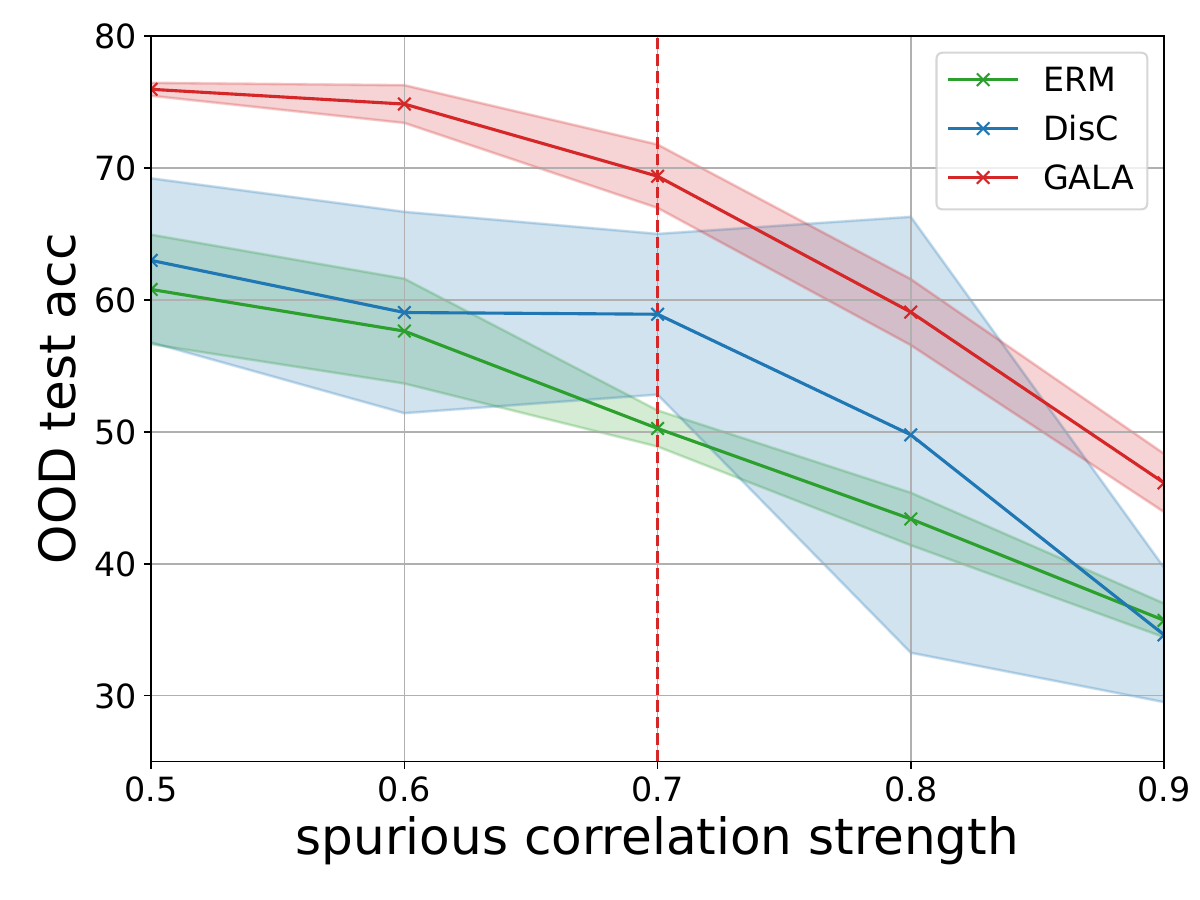}
        \label{fig:disc_fail_p1_appdx}
    }
    \subfigure[Failures of env. generation]{
        \includegraphics[width=0.31\textwidth]{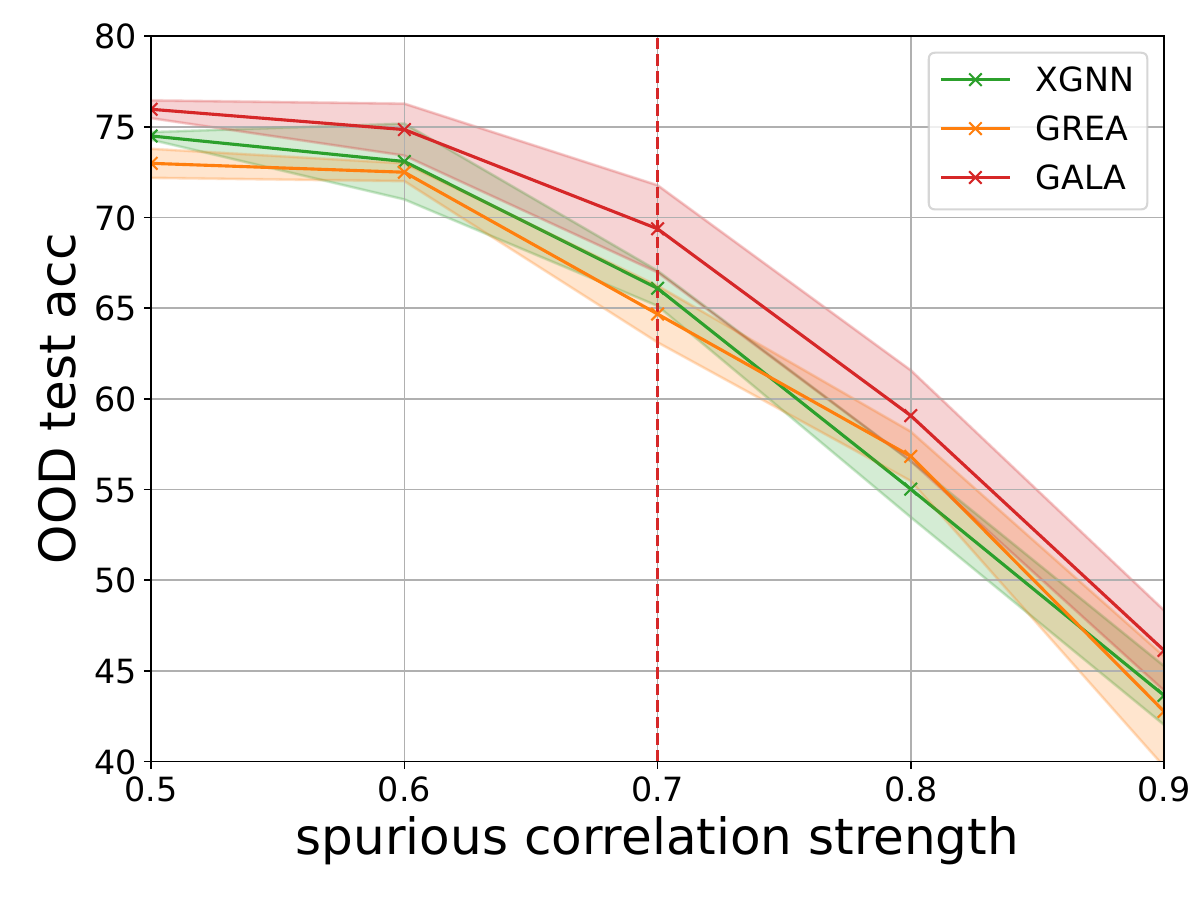}
        \label{fig:grea_fail_p2_appdx}
    }
    \subfigure[Failures of env. inferring]{
        \includegraphics[width=0.31\textwidth]{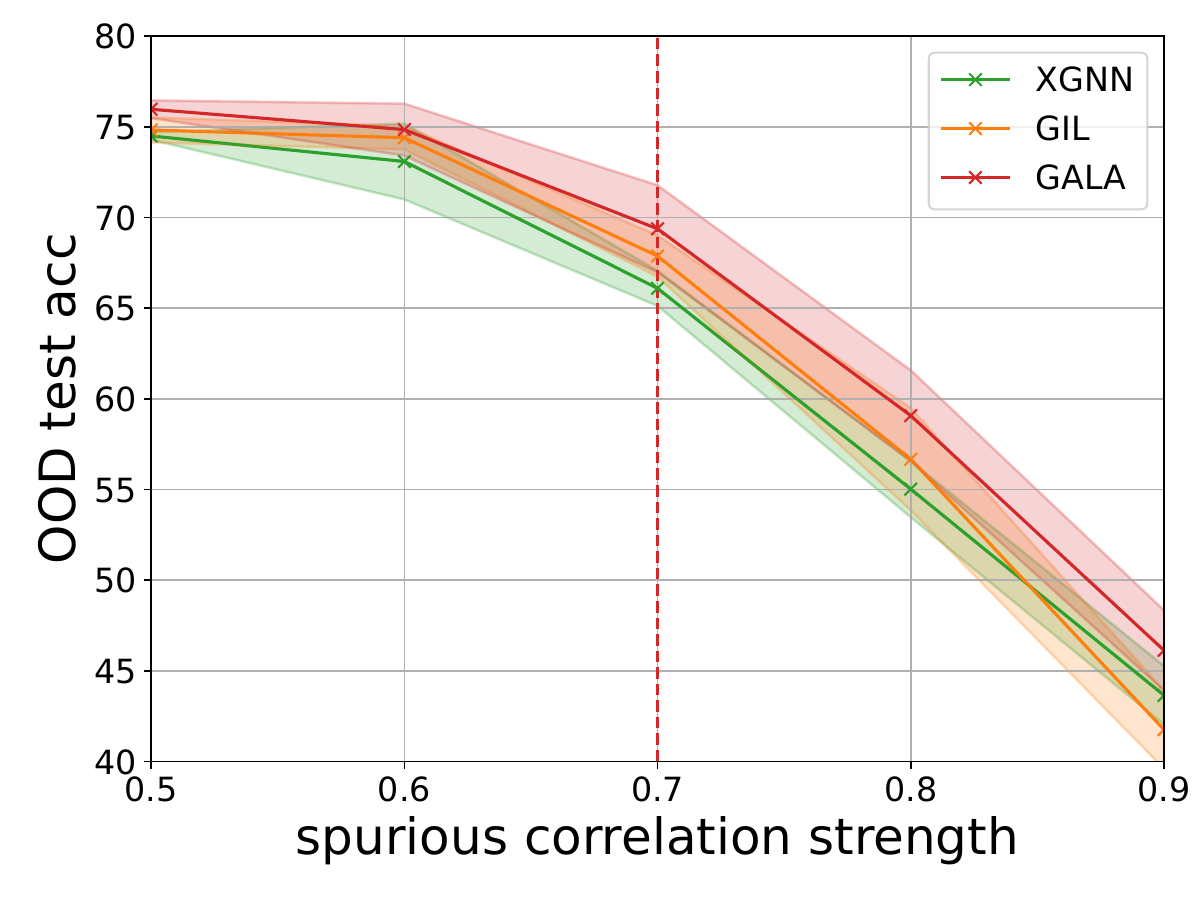}
        \label{fig:gil_fail_p2_appdx}
    }
    \subfigure[Failures of resolving env. consistency]{
        \includegraphics[width=0.31\textwidth]{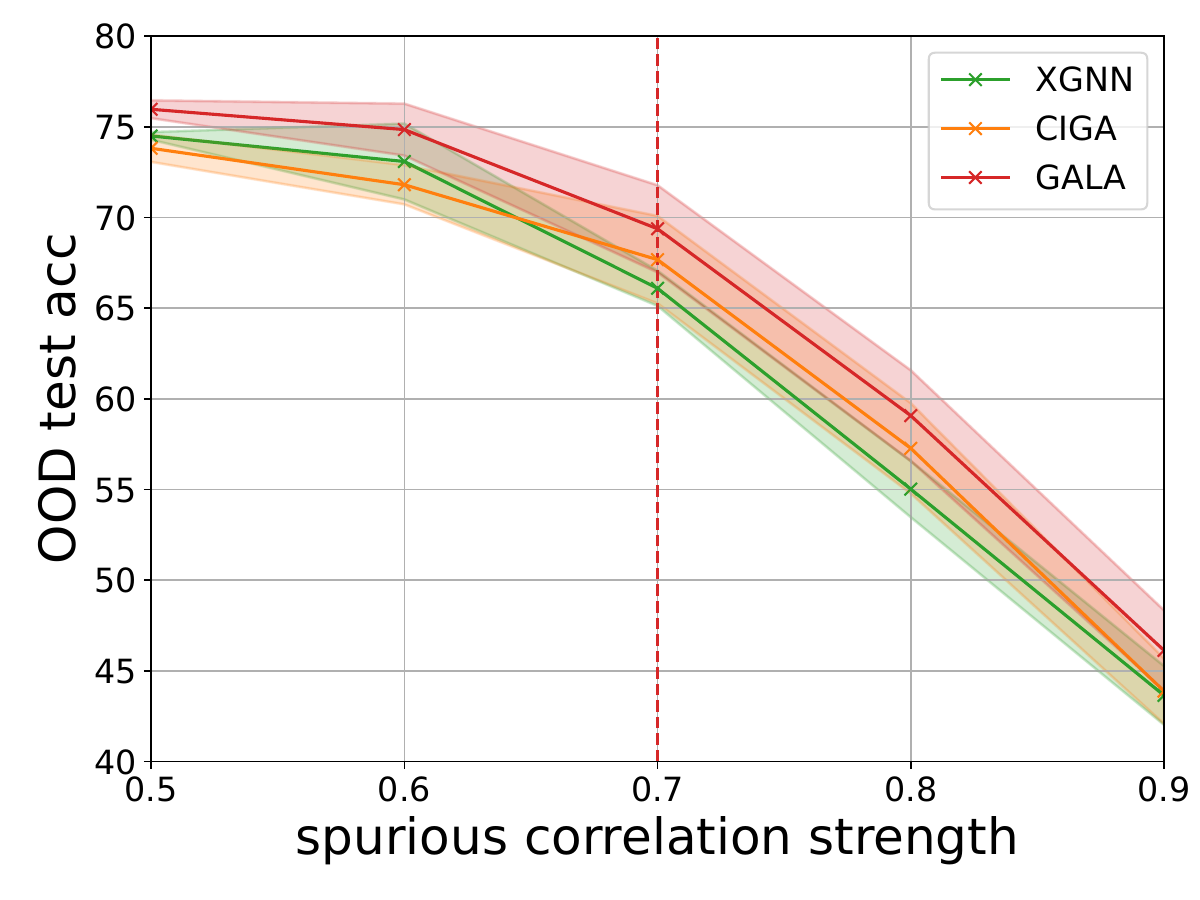}
        \label{fig:disc_fail_p2_appdx}
    }
    \vspace{-0.05in}
    \caption{
        Failures of finding faithful environment information.
        Results shown in the figure are based on the $3$ class two-piece graphs (Def.~\ref{def:twobit_graph_appdx}),
        where the invariant correlation strength is fixed as $0.7$
        while the spurious correlation strength is varied from $0.5$ to $0.7$. We can find that both environment augmentation and inferring approaches suffer from severe performance decreases or even underperform ERM and XGNN when the dominated correlation is not suitable for the method. In contrast, \ours maintains strong OOD performance for both cases.}
    \label{fig:fail_appdx}
\end{figure}

The failure cases are constructed according to the two-piece graph generation models. The specific description is given as the following.

\begin{definition}[$3$-class two-piece graphs]
    \label{def:twobit_graph_appdx}
    Each environment is defined with two parameters, $\alpha_e,\beta_e\in[0,1]$,  and the dataset $\dataset_e$ is generated as follows:
    \begin{enumerate}[label=(\alph*)]
        \item Sample $y^e\in\{0,1,2\}$ uniformly;
        \item Generate $G_c$ and $G_s$ via :
              \[G_c\coloneqq f_\gen^{G_c}(Y\cdot\rad(\alpha_e)),\ G_s\coloneqq f_\gen^{G_s}(Y\cdot\rad(\beta_e)),\]
              where $f_\gen^{G_c},f_\gen^{G_s}$
              respectively map input $\{0,1,2\}$ to a specific graph selected from a given set,
              and $\rad(\alpha)$ is a random variable with probability $\alpha$ taking a uniformly random value from $\{0,1,2\}$, and 
              a probability of $1-\alpha$ taking the value of $+1$;
        \item Sythesize $G$ by randomly concatenating $G_c$ and $G_s$:
              \[G\coloneqq f_\gen^{G}(G_c,G_s).\]
    \end{enumerate}
\end{definition}
In experiments, we implement the $3$-class two-piece graphs with the BA-motifs~\citep{pge} model.

In experiments, we adopt a $3$-layer GIN~\citep{gin} with a hidden dimension of $32$ and a dropout rate of $0.0$ as the GNN encoder. The XGNN architecture is implemented via two GNNs following the original implementation as CIGA.
The optimization is proceeded with Adam~\citep{adam} using a learning rate of $1e-3$. All experiments are repeated with $5$ different random seeds of $\{1,2,3,4,5\}$. The mean and standard deviation are reported from the $5$ runs.

We implement DIR~\citep{dir}, GREA~\citep{grea}, MoleOOD~\citep{moleood}, GIL~\citep{gil}, DisC~\citep{disc}, and CIGA~\citep{ciga}, according to the author provided codes (if available).
As for the hyperparameters in each method, we use a penalty weight of $1e-2$ for DIR following its original experiment in spurious motif datasets generated similarly using BA-motifs~\citep{dir}. We use a penalty weight of $1$ for GREA as we empirically it does not affect the performance by changing to different weights. 
For MoleOOD and GIL, we set the number of environments as $3$. We tune the penalty weights of MoleOOD with values from $\{1e-2,1e-1,1,10\}$ but did not observe much performance differences. We tune the penalty weights of GIL with values from $\{1e-5,1e-3,1e-1\}$ recommended by the authors. For DisC, we tune only the $q$ weight from $\{0.9,0.7,0.5\}$ in the GCE loss as we did not observe performance differences by changing the weight of the other term. We tune the penalty weight of CIGA with values from $\{0.5,1,2,4,8,16,32\}$ as recommended by the authors.

\section{Proofs for Theorems and Propositions}
\label{sec:theory_appdx}

\subsection{Proof of Proposition~\ref{thm:env_gen_fail}}\label{proof:env_gen_fail_appdx}
\begin{proposition}(Restatement of Proposition~\ref{thm:env_gen_fail})\label{thm:env_gen_fail_appdx}
    Consider the two-piece graph dataset $\envtrain=\{(\alpha,\beta_1),(\alpha,\beta_2)\}$ with $\alpha\geq\beta_1,\beta_2$
    (e.g., $\envtrain=\{(0.25,0.1),(0.25,0.2)\}$),
    and its corresponding mixed environment $\envmix=\{(\alpha,(\beta_1+\beta_2)/2\}$ (e.g., $\envmix=\{(0.25,0.15)\}$).
     When $\widehat{G}_c=G_s$ and $\widehat{G}_s=G_c$, it holds that the augmented environment $\env_v$ is also a two-piece graph dataset with
    \[
        \mathcal{E}_v = \{(0.5,(\beta_1 + \beta_2)/2)\}\text{ (e.g., $\mathcal{E}_v = \{(0.5,0.15)\}$)}.
    \]
\end{proposition}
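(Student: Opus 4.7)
The plan is to compute directly the joint distribution of the augmented pair $(G^{i,j}, Y^{i,j})$ and read off the two noise parameters. First I would observe that since the samples $i$ and $j$ are each drawn (independently) from the training pool with environments $(\alpha,\beta_1)$ and $(\alpha,\beta_2)$ in equal proportion, the effective source distribution for each of $i$ and $j$ is the mixed environment $\envmix=\{(\alpha,(\beta_1+\beta_2)/2)\}$. Under the hypothesis $\widehat{G}_c=G_s$ and $\widehat{G}_s=G_c$, the augmented graph is $G^{i,j}=f_\gen^G(G_s^i, G_c^j)$ with label $Y^{i,j}=Y^i$, i.e., a $G_s$-type piece sits in the $G_c$-slot and a $G_c$-type piece sits in the $G_s$-slot.

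Next I would determine the noise level of each piece relative to the new label. The $G_s$-type piece equals $f_\gen^{G_s}(Y^i\cdot\rad(\beta_{e_i}))$, so it encodes the label $Y^{i,j}=Y^i$ up to the flipping random variable $\rad(\beta_{e_i})$. Averaging over the mixed environment gives effective noise $(\beta_1+\beta_2)/2$ on this piece. The $G_c$-type piece equals $f_\gen^{G_c}(Y^j\cdot\rad(\alpha))$, where $Y^j$ is drawn uniformly and independently of $Y^i$. A short computation then yields
\[
P\bigl(Y^j\cdot\rad(\alpha)=Y^i\bigr)=\tfrac{1}{2}(1-\alpha)+\tfrac{1}{2}\alpha=\tfrac{1}{2},
\]
so this piece is effectively independent of $Y^{i,j}$ and carries noise $0.5$ regardless of $\alpha$. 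The two pieces are also independent of each other conditional on $Y^{i,j}$, since the randomness in sample $i$ (namely $Y^i,\rad(\beta_{e_i})$) is independent of the randomness in sample $j$ (namely $Y^j,\rad(\alpha)$).

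Finally I would match these noise levels to the two-piece graph parameterization. Using the convention that the tuple $(\alpha_e,\beta_e)$ records the noise level of the $G_c$-structure followed by that of the $G_s$-structure piece, the augmented environment has $\alpha=0.5$ (the $G_c$-type piece) and $\beta=(\beta_1+\beta_2)/2$ (the $G_s$-type piece). Since the conditional independence of the two pieces and the uniformity of $Y^{i,j}$ are exactly the remaining defining ingredients of Def.~\ref{def:twobit_graph}, this establishes $\env_v=\{(0.5,(\beta_1+\beta_2)/2)\}$ as claimed.

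The main obstacle is the bookkeeping: one has to keep straight which piece occupies which physical slot after the swap, and separate the independence structure of $(Y^i,Y^j,\rad(\alpha),\rad(\beta_{e_i}),\rad(\beta_{e_j}))$ cleanly enough to see that $Y^j\cdot\rad(\alpha)$ decouples from $Y^i$ and collapses the $G_c$-type piece's noise to exactly $0.5$. The role of the hypothesis $\alpha\ge\beta_1,\beta_2$ is not used in the calculation itself but justifies why an ERM-trained featurizer would produce the reversed estimate in the first place.
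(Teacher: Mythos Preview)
Your proposal is correct and follows essentially the same logic as the paper's proof: both identify that the $G_s$-type piece inherits noise $\bar\beta=(\beta_1+\beta_2)/2$ from sample $i$, while the $G_c$-type piece decorrelates from $Y^i$ because $Y^j$ is independent and uniform. The only presentational difference is that the paper works via finite-sample counting tables over $2n$ draws (tabulating the four motif combinations given $Y=+1$ before and after augmentation), whereas you compute the population-level probabilities directly; your explicit remark that the hypothesis $\alpha\ge\beta_1,\beta_2$ plays no role in the calculation, and your verification of conditional independence, are both reasonable additions that the paper leaves implicit.
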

\begin{proof}
    From Definition \ref{def:twobit_graph}, we known that for each graph $G_i \sim \envmix = \{(\alpha, (\beta_1 + \beta_2)/2)\}$, $G_i$ is the concatenation of the $G^i_c$ and $G^i_s$ defined as
    \[
        G^i_c \coloneqq f^{G_c}_\gen (Y_i \cdot \rad(\alpha)_i), \quad G^i_s \coloneqq f^{G_s}_\gen (Y_i \cdot \rad((\beta_1 + \beta_2)/2)_i),
    \]
    where $\rad(\cdot)_i$ denotes the $i$th sample of the random variable $\rad(\cdot)$.

    Denote
    \[G_A=f^{G_c}_\gen (+1),\ G_B=f^{G_c}_\gen(-1),\]
    and
    \[G_C=f^{G_s}_\gen (+1),\ G_D=f^{G_s}_\gen(-1),\]
    Considering applying the augmentation to $2n$ samples randomly sampled from $\envmix$,
    since the featurizer $g$ separates each $G\in\envmix$ into $\pred{G}_c=G_s$ and $\pred{G}_s=G_c$,
    and the augmented graph $G^i$ is obtained by
    \[G^{i,j}=f_\gen^G(\pred{G}^i_c,\ \pred{G}^j_s),\forall i,j\in \{1...n\}.\]
    Then, the new $\alpha_v, \beta_v$ in $\env_v$ can be obtained by summing up the overall numbers
    of $G_A,G_B,G_C,G_D$ concatenated into $2n^2$ samples in $\env_v$.

    Specifically, we can inspect the changes of the distributions of motifs and labels.
    Let $\bar{\beta}=(\beta_1 + \beta_2)/2$, without loss of generality, we focus on inspecting the changes given $Y=+1$,
    since the changes given $Y=-1$ is symmetric as $Y=+1$. The original distribution is shown as follows:
    \begin{center}
        \begin{tabular}{ |c|c|c| }
            \hline
            $Y=+1$ & $G_A$                        & $G_B$                    \\\hline
            $G_C$  & $(1-\alpha)(1-\bar{\beta})n$ & $\alpha(1-\bar{\beta})n$ \\\hline
            $G_D$  & $(1-\alpha)\bar{\beta}n$     & $\alpha\bar{\beta}n$     \\
            \hline
        \end{tabular}
    \end{center}
    Then, new distributions of the motifs and labels are determined by
    the number of original motifs identified as $\pred{G}_c$ and $\pred{G}_s$, respectively.
    When $\pred{G}_c=G_s$ and $\pred{G}_s=G_c$,
    in the new environment $\env_v$, given $Y=+1$, $G_C$ contributes $(1-\bar{\beta})n*2n$ samples as the ``invariant'' subgraph.
    More specifically, $G_C$ will be concatenated with $G_A$ and $G_B$ by $n$ times, respectively.
    Then we have the new distribution tables shown as follows:
    \begin{center}
        \begin{tabular}{ |c|c|c| }
            \hline
            $Y=+1$ & $G_A$                & $G_B$                \\\hline
            $G_C$  & $(1-\bar{\beta})n^2$ & $(1-\bar{\beta})n^2$ \\\hline
            $G_D$  & $\bar{\beta}n^2$     & $\bar{\beta}n^2$     \\
            \hline
        \end{tabular}
    \end{center}
    Since given the same $Y$, the spurious subgraph $G_C$ and $G_D$ will still have the same chance being flipped, we have $\beta_v=\bar{\beta}$.
    While as $G_A$ and $G_B$ appear the same times given the same $Y$, it suffices to know that $\alpha_v=0.5$.
\end{proof}

\subsection{Complementary discussion for Sec.~\ref{sec:var_sufficiency}}\label{proof:var_sufficiency_appdx}
\begin{proposition}\label{thm:var_sufficiency_appdx}
    Given the same graph generation process as in Fig.~\ref{fig:scm},
    when there exists spurious subgraph $G_s$
    such that $P^{e_1}(Y|G_s)=P^{e_2}(Y|G_s)$ for any two environments $e_1,e_2\in \envtrain$,
    where $P^e(Y|G_s)$ is the conditional distribution $P(Y|G_s)$ under environment $e\in \envall$,
    it is impossible for any learning algorithm applied to $f_c\circ g$ to differentiate $G_c$ from $G_s$.
\end{proposition}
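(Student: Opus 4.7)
The plan is an indistinguishability argument by swapping the roles of $G_c$ and $G_s$. Given only the training data $\{\dataset_e\}_{e\in\envtrain}$, any learning algorithm applied to $f_c\circ g$ sees nothing more than the collection of joint distributions $\{P^e(G,Y)\}_{e\in\envtrain}$. Hence if two generative models give rise to identical families of training joint distributions while disagreeing on which subgraph is invariant, no algorithm can correctly identify $G_c$ in both.

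First I would recall the invariance property that $G_c$ satisfies by the generation process of Fig.~\ref{fig:scm}: for every pair $e_1,e_2\in\envall$, $P^{e_1}(Y|G_c)=P^{e_2}(Y|G_c)$, because $Y$ is generated from $C$ (which in turn produces $G_c$) and is shielded from $E$ when conditioning on $C$. The hypothesis of the proposition asserts that a spurious subgraph $G_s$ satisfies the same invariance $P^{e_1}(Y|G_s)=P^{e_2}(Y|G_s)$ across $\envtrain$. Thus, from the standpoint of any conditional-invariance test on training data, $G_s$ is indistinguishable from an invariant feature.

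Next I would construct an alternative SCM $\mathcal{M}'$ obtained from the original SCM $\mathcal{M}$ (Fig.~\ref{fig:scm}) by relabeling: let the latent causal variable of $\mathcal{M}'$ be $C':=S$ and the latent spurious variable be $S':=C$, so that $G_c':=G_s$ and $G_s':=G_c$, and keep the observed graph assembly $G=f_\gen^G(G_c,G_s)=f_\gen^G(G_s',G_c')$ together with the label $Y$ unchanged. On the training environments $\envtrain$, both $\mathcal{M}$ and $\mathcal{M}'$ induce the same joint $P^e(G,Y)$ for every $e\in\envtrain$, because (i) the marginal of $(G,Y)$ is the same by construction and (ii) the invariance $P^{e_1}(Y|G_s)=P^{e_2}(Y|G_s)$ is exactly what certifies $G_s$ as a valid choice of invariant feature in $\mathcal{M}'$. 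Hence any featurizer-classifier pair $(g,f_c)$ returned by a learning algorithm will be identical under the two models.

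Finally, the conclusion follows by contradiction: suppose some algorithm $\mathcal{A}$ correctly extracts $G_c$ under $\mathcal{M}$, i.e., $g(G)=G_c$. By the indistinguishability of the training data, $\mathcal{A}$ returns the same $g$ under $\mathcal{M}'$, so $g(G)=G_c=G_s'$, which is the spurious subgraph in $\mathcal{M}'$ and violates the requirement $g(G)=G_c'=G_s$. Therefore $\mathcal{A}$ cannot succeed under both models, and no algorithm operating on $\{\dataset_e\}_{e\in\envtrain}$ can differentiate $G_c$ from $G_s$. The main obstacle I foresee is the bookkeeping in the swap: one has to make sure the assembly function $f_\gen^G$ and the marginal over $(C,S)$ in $\mathcal{M}'$ are legitimate in the SCM class of Fig.~\ref{fig:scm}, in particular that the edge $E\to S'$ and the role of $Y$ in the PIIF case are still consistent with the invariance of $P^e(Y|G_s)$ assumed in the hypothesis.
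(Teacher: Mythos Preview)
Your proposal is correct and follows essentially the same approach as the paper: both argue by constructing a swapped scenario in which the roles of $G_c$ and $G_s$ are exchanged, observe that the training data are indistinguishable between the two, and conclude that any deterministic algorithm succeeding on one must fail on the other. Your version is in fact more carefully spelled out than the paper's terse proof, and the bookkeeping concern you raise about the swapped SCM remaining in the class of Fig.~\ref{fig:scm} is a valid caveat that the paper's proof does not address explicitly.
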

\begin{proof}
    Let $G_s^*$ be the spurious subgraph such that $P^{e_1}(Y|G_s)=P^{e_2}(Y|G_s)$ for any two environments $e_1,e_2\in \envtrain$,
    and $G_c$ be the invariant subgraph which $P^{e_1}(Y|G_c)=P^{e_2}(Y|G_c),\ \forall e_1,e_2\in \envtrain$ by definition.
    Consider a learning algorithm applied to $f_c\circ g$ that accepts the input of $\envmix$,
    and extracts a subgraph $\pred{G}_c=g(Y)$ as an estimation of the invariant subgraph for any $G$ to
    predict $Y$ via $f_c(\pred{G}_c)$ in a deterministic manner.
    If the algorithm succeed to extract $G_c$ from $\envmix$, then there always exists a $\envmix'$ with the desired spurious subgraph $G_s'$
    and a underlying invariant subgraph $G_c'$, such that $G_s'=G_c$ and $G_c'=G_s^*$.
    Due to the deterministic nature, the algorithm fails to identify $G_c'$ in $\envmix'$.
\end{proof}

\subsection{Proof of Proposition \ref{thm:env_infer_fail}}\label{proof:env_infer_fail_appdx}
\begin{proposition}(Restatement of Proposition~\ref{thm:env_infer_fail})\label{thm:env_infer_fail_appdx}
    There exist $2$ two-piece graph training environments $\envtrain$ and $\envtrain'$ that share the same joint distribution $P(Y, G)$. Any learning algorithm will fail in either $\envtrain$ or $\envtrain'$.
\end{proposition}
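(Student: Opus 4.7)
}
The plan is to exhibit an explicit pair $(\envtrain, \envtrain')$ of two-piece graph training environment mixtures whose observable joint distributions $P(Y,G)$ coincide but whose underlying invariant subgraphs are swapped, so that no algorithm with access only to samples from $P(Y,G)$ can succeed on both. The construction will follow exactly the symmetry recipe sketched in Eq.~\ref{eq:env_infer_fail}: take $\envtrain$ as the mixture of two two-piece environments $\{(\alpha,\beta_1)\}$ and $\{(\alpha,\beta_2)\}$, and take $\envtrain'$ as the mixture of $\{(\beta_1',\alpha')\}$ and $\{(\beta_2',\alpha')\}$, choosing parameters so that $\alpha = (\beta_1'+\beta_2')/2$ and $\alpha' = (\beta_1+\beta_2)/2$. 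A concrete instance that is easy to verify is $\envtrain$ mixed from $(\alpha,\beta_1)=(0.2,0.1)$ and $(\alpha,\beta_2)=(0.2,0.3)$, and $\envtrain'$ mixed from $(\beta_1',\alpha')=(0.1,0.2)$ and $(\beta_2',\alpha')=(0.3,0.2)$, both of which present themselves to the learner as the identical mixture $\{(0.2,0.2)\}$ on $(Y,G)$.

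The first step is to check that the two mixtures induce the same distribution over $(Y, G)$. Because $Y$ is sampled uniformly and $G$ is built by $f_\gen^G(G_c,G_s)$ with $G_c=f_\gen^{G_c}(Y\cdot\rad(\cdot))$ and $G_s=f_\gen^{G_s}(Y\cdot\rad(\cdot))$ with independent flips, the joint law of $(Y,G)$ is fully determined by the marginal flip probabilities attached to each of the two slots in $G$. Mixing over the environment $e$ simply averages these two flip probabilities within each slot. Thus $\envtrain$ gives flip probabilities $(\alpha,(\beta_1+\beta_2)/2)$ for the $(G_c\text{-slot},G_s\text{-slot})$, while $\envtrain'$ gives $((\beta_1'+\beta_2')/2,\alpha')$ for the $(G_s\text{-slot},G_c\text{-slot})$. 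Under Eq.~\ref{eq:env_infer_fail}, the two length-$2$ tuples of flip probabilities attached to the two physical slots of $f_\gen^G$ coincide up to the labeling of which slot is ``invariant,'' hence the observable pushforward on $(Y,G)$ is identical.

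The second step is the impossibility half. Any learning algorithm $\mathcal{A}$ that sees only samples from $P(Y,G)$ produces a featurizer $g_{\mathcal{A}}$ and classifier $f_{c,\mathcal{A}}$ whose distribution depends only on that observable distribution. Since $\envtrain$ and $\envtrain'$ are indistinguishable at the level of $P(Y,G)$, $\mathcal{A}$ returns the same output distribution in both worlds. However, the true invariant subgraph in $\envtrain$ lives in one slot of $f_\gen^G$, while in $\envtrain'$ it lives in the other slot; these are disjoint (by the graph assembly process), so no single featurizer can simultaneously isolate both. Passing to a new test environment that perturbs only the spurious correlation in each world then produces opposite test distributions, and the OOD risk must be large in at least one of the two worlds. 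This gives the desired dichotomy.

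The main obstacle I anticipate is the bookkeeping in step one: making fully precise that ``swapping the roles of the two slots and compensating by flipping the parameters'' yields the same measure on $(Y,G)$, since a priori the two slots are distinguishable through $f_\gen^G$. The clean way to handle this is to observe that the assembly $f_\gen^G(G_c,G_s)$ acts symmetrically on its two inputs at the level of the induced distribution on $G$ conditional on $Y$ (a permutation of slot labels only relabels which subgraph we are calling invariant), which is exactly why the ill-posedness discussed after Eq.~\ref{eq:env_infer_fail} arises; all other pieces of the argument are routine once this symmetry is stated carefully. A secondary subtlety is to formalize ``any learning algorithm fails'': this should be done by fixing the OOD evaluation protocol as in Sec.~\ref{sec:prelim_appdx} and noting that whatever subgraph $\mathcal{A}$ returns, there is a test environment in one of the two worlds in which that subgraph is spurious and can be made arbitrarily anti-correlated with $Y$, forcing worst-case risk bounded away from the Bayes risk.
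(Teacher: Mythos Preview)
Your overall route matches the paper's: exhibit $\envtrain=\{(\alpha,\beta_1),(\alpha,\beta_2)\}$ and $\envtrain'=\{(\alpha_1,\beta),(\alpha_2,\beta)\}$ (your $\beta'_i$ play the role of the paper's $\alpha_i$) so that both mix to the single environment $\{(\alpha,\beta)\}$, and then observe that any algorithm's output is a deterministic function of the observed $P(Y,G)$ and must therefore commit to the wrong slot in one of the two worlds. The paper's proof is exactly this, stated in a couple of lines after writing out the four-case joint distribution of $(G_c,G_s,Y)$ for the mixed environment.

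The one place where you drift is the anticipated ``obstacle'' about slot symmetry of $f_\gen^G$. That worry is a red herring, and invoking such a symmetry would in fact be an extra and unjustified assumption (nothing prevents $f_\gen^{G_c}$ and $f_\gen^{G_s}$ from producing visibly different graph families). The point is that after mixing, the \emph{same} physical $G_c$-slot carries flip probability $(\beta'_1+\beta'_2)/2=\alpha$ in $\envtrain'$ just as it carries $\alpha$ in $\envtrain$, and the \emph{same} $G_s$-slot carries $\alpha'=(\beta_1+\beta_2)/2$ in both; your own concrete instance $(0.2,0.2)$ already shows this slot-by-slot equality. What differs between the two worlds is only which slot's parameter is held \emph{constant across the hidden constituent environments}, and that is precisely the unobservable datum. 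So step one needs no symmetry argument and no bookkeeping beyond computing two averages; drop that paragraph and your proof is essentially the paper's.
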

\begin{proof}
    Let the mixed training environment of $\envtrain$ and $\envtrain'$ be $\envmix = \{(\alpha, \beta)\}$. Based on the definition of two-piece graphs (Definition \ref{def:twobit_graph}), the joint distribution of the mixed training dataset $(G = \textup{Concat}[G_c, G_s], Y)$ can be computed as
    \[
        \begin{cases}
            Y=+1,                                                     & \text{with probability } 0.5,                     \\
            Y=-1,                                                     & \text{with probability } 0.5,                     \\
            \textup{Bit}^{G_c}(G_c) = \textup{Bit}^{G_s}(G_s) = Y,    & \text{with probability } (1 - \alpha)(1 - \beta), \\
            \textup{Bit}^{G_c}(G_c)\neq \textup{Bit}^{G_s}(G_s) = Y,  & \text{with probability } \alpha(1 - \beta),       \\
            \textup{Bit}^{G_s}(G_s)\neq \textup{Bit}^{G_c}(G_c) = Y,  & \text{with probability } (1 - \alpha)\beta,       \\
            \textup{Bit}^{G_c}(G_c) = \textup{Bit}^{G_s}(G_s) \neq Y, & \text{with probability } \alpha\beta.
        \end{cases}
    \]
    Here we use $\textup{Bit}^{G_c}(G_c)$ to obtain the input bit of a subgraph $G_c$ (or $(f^{G_c}_\gen)^{-1}$),
    and $\textup{Bit}^{G_s}(G_s)$ for $G_s$, respectively.

    Any learning algorithm that tries to identify the invariant subgraph from this training dataset will compute a model that uses subgraph $G_c$, or subgraph $G_s$, or both $G_c$ and $G_s$ to
    predict $Y$ deterministically. Thus, as long as the joint distribution does not change, the resulting model will always identify the same invariant subgraph. Without loss of generality, let us assume that the model correctly identifies $G_c$ as the invariant subgraph for $\envtrain = \{(\alpha, \beta_1), (\alpha, \beta_2)\}$ with $\beta = (\beta_1 + \beta_2)/2$.

    Now let the other training environment be $\envtrain' = \{(\alpha_1, \beta),(\alpha_2, \beta)\}$ with $\alpha = (\alpha_1 + \alpha_2)/2$. It is clear that since the mixed training environment of $\envtrain'$ is still $\{(\alpha, \beta)\}$,
    the model keeps regarding $G_c$ as the invariant subgraph. However, for $\envtrain'$, the model fails to identify the invariance since now the invariant subgraph is $G_s$.

\end{proof}

\subsection{Proof of Corollary~\ref{thm:var_consistency}}\label{proof:var_consistency_appdx}
\begin{corollary}(Restatement of Corollary~\ref{thm:var_consistency})\label{thm:var_consistency_appdx}
    Without Assumption~\ref{assump:var_sufficiency} or Assumption~\ref{assump:var_consistency},
    there does not exist a learning algorithm that captures the invariance of the two-piece graph environments.
\end{corollary}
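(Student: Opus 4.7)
The plan is to prove Corollary~\ref{thm:var_consistency} by direct appeal to the two failure results already established in Sec.~\ref{sec:env_aug_failure}, namely Proposition~\ref{thm:env_gen_fail} together with its supporting statement Proposition~\ref{thm:var_sufficiency_appdx} (which handles violations of variation sufficiency) and Proposition~\ref{thm:env_infer_fail} (which handles violations of variation consistency). The argument is a simple case split: I would show that if either assumption is dropped, at least one of the previously constructed counterexamples can be instantiated within the two-piece graph family, so that any learning algorithm producing a deterministic map from the observed joint distribution $P(G,Y)$ to an estimated invariant subgraph must fail on some admissible $\envtrain$.

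First I would handle the case where Assumption~\ref{assump:var_sufficiency} fails. By definition of variation sufficiency, a failure means that there exists some spurious subgraph $G_s$ for which $P^{e_1}(Y\mid G_s)=P^{e_2}(Y\mid G_s)$ for every pair $e_1,e_2\in\envtrain$. Proposition~\ref{thm:var_sufficiency_appdx} then applies verbatim: one can construct a second two-piece environment collection $\envtrain'$ in which the roles of the invariant and spurious subgraphs are swapped while keeping the observed joint distribution indistinguishable from that of $\envtrain$. A deterministic featurizer--classifier pair $f_c\circ g$ necessarily produces the same $\widehat{G}_c$ on both, so it identifies the true invariant subgraph on at most one of them, yielding impossibility.

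Next I would handle the case where Assumption~\ref{assump:var_consistency} fails, i.e., there is some environment in which $H(C\mid Y)=H(S\mid Y)$. For two-piece graphs this corresponds exactly to the regime $\alpha=\beta_e$ exploited in Proposition~\ref{thm:env_infer_fail}. Concretely, I would invoke Eq.~\ref{eq:env_infer_fail} to produce two distinct training collections $\envtrain=\{(\alpha,\beta_1),(\alpha,\beta_2)\}$ and $\envtrain'=\{(\alpha_1,\beta),(\alpha_2,\beta)\}$ whose mixed joint distributions $P(G,Y)$ coincide but whose underlying invariant subgraphs differ ($G_c$ versus $G_s$). Proposition~\ref{thm:env_infer_fail} then immediately yields failure on at least one of the two datasets for any algorithm whose output depends only on the observed joint distribution.

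The main (non-obstacle) subtlety is the implicit assumption that the learning algorithm is a deterministic function of the observed training distribution; I would note that this is the same modeling choice already used in Propositions~\ref{thm:env_gen_fail}--\ref{thm:env_infer_fail}, and that a randomized algorithm can be absorbed by the same symmetry argument, since the two constructed environments are symmetric in the roles of $G_c$ and $G_s$, so any distribution over outputs errs with probability at least $1/2$ on one of them. Since both cases are covered by pre-existing lemmas, the corollary follows by combining them.
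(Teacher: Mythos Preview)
Your proposal is correct and follows essentially the same approach as the paper: a case split on which assumption fails, invoking Proposition~\ref{thm:var_sufficiency_appdx} for the variation-sufficiency case and Proposition~\ref{thm:env_infer_fail} for the variation-consistency case. The paper's own proof is even terser---it simply points to these two propositions without spelling out the swap construction or the Eq.~\ref{eq:env_infer_fail} parameters as you do.

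One small imprecision: you write that the failure of Assumption~\ref{assump:var_consistency} ``corresponds exactly to the regime $\alpha=\beta_e$ exploited in Proposition~\ref{thm:env_infer_fail}.'' Strictly, the counterexample in Proposition~\ref{thm:env_infer_fail} does not require any single environment to satisfy $\alpha=\beta_e$; what matters is that the \emph{mixed} environment satisfies $\alpha=\bar\beta=(\beta_1+\beta_2)/2$, which is the condition in Eq.~\ref{eq:env_infer_fail}. Your subsequent concrete invocation of Eq.~\ref{eq:env_infer_fail} is correct, so this is only a wording slip, and the paper's own proof glosses over this connection just as loosely. Your added remark on randomized algorithms via the symmetry argument is a nice strengthening that the paper does not include.
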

\begin{proof}
    The proof for lacking Assumption~\ref{assump:var_sufficiency} is identical to the proof for Proposition~\ref{thm:var_sufficiency_appdx}.
    Consider a learning algorithm applied to $f_c\circ g$ that accepts the input of $\envmix$,
    and extracts a subgraph $\pred{G}_c=g(Y)$ as an estimation of the invariant subgraph for any $G$ to
    predict $Y$ via $f_c(\pred{G}_c)$ in a deterministic manner.
    Without the holding of Assumption~\ref{assump:var_consistency}, due to Proposition~\ref{thm:env_infer_fail}, there exists $\envmix'$ for each $\envmix$ that have the identical joint distribution but different underlying invariant subgraph. Thus, any learning algorithm that succeeds in either $\envmix$ or $\envmix'$ will fail in the other.
\end{proof}

\subsection{Proof of Theorem~\ref{thm:gala_success}}\label{proof:gala_success_appdx}
\begin{theorem}(Restatement of Theorem~\ref{thm:gala_success})\label{thm:gala_success_appdx}
    Given, i) the same data generation process as in Fig.~\ref{fig:scm};
    ii) $\train$ that satisfies variation sufficiency (Assumption~\ref{assump:var_sufficiency})
    and variation consistency (Assumption~\ref{assump:var_consistency});
    iii) $\{G^p\}$ and $\{G^n\}$ are distinct subsets of $\train$ such that
    $I(G_s^p;G_s^n|Y)=0$,
    $\forall G_s^p =\argmax_{\pred{G}_s^p}I(\pred{G}_s^p;Y)$ under $\{G^p\}$, and
    $\forall G_s^n =\argmax_{\pred{G}_s^n}I(\pred{G}_s^n;Y)$ under $\{G^n\}$;
    suppose $|G_c|=s_c,\ \forall G_c$,
    resolving the following \ours objective elicits an invariant GNN defined via Eq.~\ref{eq:inv_cond_appdx},
    \begin{equation}
        \label{eq:gala_sol_appdx}
        \max_{f_c, g} \ I(\pred{G}_{c};Y), \ \text{s.t.}\
        g\in\argmax_{\hat{g},|\pred{G}_c^p|\leq s_c}I(\pred{G}_c^p;\pred{G}_c^n|Y),
    \end{equation}
    where $\pred{G}_c^p\in \{\pred{G}_{c}^p=g({G}^p)\}$
    and $\pred{G}_c^n\in \{\pred{G}_{c}^n=g({G}^n)\}$
    are the estimated invariant subgraphs via $g$ from $\{G^p\}$ and $\{G^n\}$, respectively.
\end{theorem}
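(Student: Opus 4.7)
The plan is to decompose the mutual information in the constraint of Eq.~\ref{eq:gala_sol_appdx} into contributions from the invariant and non-invariant components of the extracted subgraph, and then to show that under the size constraint only the true $G_c$ can saturate the resulting bound. First, for any featurizer $g$ and any input graph $G$ with underlying decomposition into $G_c$ and $G_s$, I would split the extracted subgraph as $\widehat{G}_c = \pa\widehat{G}_c \cup (\widehat{G}_c \setminus \pa\widehat{G}_c)$, where $\pa\widehat{G}_c = \widehat{G}_c \cap G_c$ is the correctly recovered invariant piece and the complement lies inside the spurious subgraph $G_s$. Applying the chain rule to $I(\widehat{G}_c^p;\widehat{G}_c^n\mid Y)$ then produces an invariant-invariant term, a spurious-spurious term, and two invariant-spurious cross terms.

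Next, I would handle the spurious-spurious and cross terms using assumption (iii). Since $\widehat{G}_c^p \setminus \pa\widehat{G}_c^p$ is a measurable function of the maximiser $G_s^p = \argmax I(\widehat{G}_s^p;Y)$ (and analogously on the negative side), the data processing inequality together with $I(G_s^p;G_s^n\mid Y)=0$ forces the spurious-spurious contribution to vanish. The cross terms $I(\pa\widehat{G}_c^p; \widehat{G}_c^n\setminus\pa\widehat{G}_c^n \mid Y)$ likewise collapse once one invokes the SCM of Fig.~\ref{fig:scm}, because conditional on $Y$ (equivalently on $C$) the invariant piece of one graph is independent of the spurious piece of an independently drawn graph.

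The third step is to show that the surviving term $I(\pa\widehat{G}_c^p;\pa\widehat{G}_c^n\mid Y)$ is maximised exactly when $\pa\widehat{G}_c^p = G_c^p$ and $\pa\widehat{G}_c^n = G_c^n$. By variation consistency and the invariance condition $P^{e_1}(Y\mid G_c)=P^{e_2}(Y\mid G_c)$, the conditional law of $G_c$ on $\{G^p\}$ coincides with that on $\{G^n\}$, so the data processing inequality gives $I(\pa\widehat{G}_c^p;\pa\widehat{G}_c^n\mid Y) \leq I(G_c^p;G_c^n\mid Y)$ with equality iff $\pa\widehat{G}_c=G_c$. Combining all three steps yields
\begin{equation*}
I(\widehat{G}_c^p;\widehat{G}_c^n\mid Y) \leq I(G_c^p;G_c^n\mid Y),
\end{equation*}
with equality forcing $G_c \subseteq \widehat{G}_c$; then the size constraint $|\widehat{G}_c|\leq s_c = |G_c|$ rules out any extra edges/nodes, so $\widehat{G}_c = G_c$ exactly. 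With $g$ recovering $G_c$, the outer optimisation $\max_{f_c} I(\widehat{G}_c;Y)$ yields the Bayes classifier on $G_c$, and invariance $\widehat{G}_c \perp E$ follows from the invariance of $G_c$ under the SCM, producing an invariant GNN in the sense of Eq.~\ref{eq:inv_cond_appdx}.

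The main obstacle will be the careful bookkeeping of the cross-terms in the mutual-information decomposition when $\widehat{G}_c$ genuinely mixes invariant and spurious edges: one has to be precise about which conditional independencies in Fig.~\ref{fig:scm} are being used at each step, and check that assumption (iii) combined with the SCM really forces every non-invariant channel between $\widehat{G}_c^p$ and $\widehat{G}_c^n$ to carry zero information given $Y$. A secondary subtlety is that assumption (iii) is stated for the specific maximisers $G_s^p, G_s^n$ rather than for all spurious subgraphs, so the reduction to these maximisers via data processing must be justified before the zero-information conclusion can be applied.
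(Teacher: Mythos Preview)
Your proposal is correct and rests on the same two facts the paper uses: the PIIF conditional independence $G_c\perp G_s\mid Y$ and assumption~(iii) killing the spurious--spurious channel. The execution differs. The paper argues by a \emph{swap/perturbation}: it takes a candidate $\widehat{G}_c^p$ that contains a spurious piece $\pa\widehat{G}_s^p$ displacing an invariant piece $\pa\widehat{G}_c^p$, expands the resulting change $\triangle I(\widehat{G}_c^p;\widehat{G}_c^n\mid Y)$ in conditional entropies, and then does a two-case analysis on whether $\widehat{G}_c^n$ itself is purely invariant or also carries spurious content, concluding $\triangle I\le 0$ in each case. Your direct chain-rule decomposition into four blocks (inv--inv, spu--spu, two cross terms) treats both sides symmetrically and avoids the case split, which is cleaner; the cost is that the chain-rule terms carry extra conditioning (e.g.\ $I(B^p;B^n\mid Y,A^p,A^n)$ rather than $I(B^p;B^n\mid Y)$), so the independence bookkeeping you flag as the main obstacle is indeed where the work lies. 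One minor slip: your parenthetical ``conditional on $Y$ (equivalently on $C$)'' is wrong in PIIF since $Y$ and $C$ are not interchangeable, but the fact you need, $G_c\perp G_s\mid Y$, does hold from the SCM and is exactly what the paper invokes. Your secondary subtlety about assumption~(iii) being stated only for the maximisers is real; the paper's proof applies the assumption directly to the extracted spurious pieces $\pa\widehat{G}_s^p,\pa\widehat{G}_s^n$ without spelling out the reduction either, so you are not missing anything the paper supplies.
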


\begin{proof}

    Without loss of generality, we assume that $\{G^p\}$ has the same spurious dominance situation as $\envtrain$.
    In other words, when $H(S|Y)<H(C|Y)$, the data distribution in $\{G^p\}$ also follows $H(S|Y)<H(C|Y)$, while $H(S|Y)>H(C|Y)$ in $\{G^n\}$.
    To proceed, we will use the language of~\citet{ciga}.

    We begin by discussing the case of $H(S|Y)<H(C|Y)$. Given $H(S|Y)<H(C|Y)$, we have $H(S|Y)<H(C|Y)$ in $\{G^p\}$ and $H(S|Y)>H(C|Y)$ in $\{G^n\}$.
    Then, we claim that
    \begin{equation}\label{eq:gala_sol_spu_appdx}
        G_c\in\argmax_{\pred{G}_c^p,|\pred{G}_c^p|\leq s_c}I(\pred{G}_c^p;\pred{G}_c^n|Y).
    \end{equation}
    Otherwise, consider there exists a subgraph of the spurious subgraph $\pa\pred{G}_s^p\subseteq G_s^p$ in $\pred{G}_c^p$,
    which takes up the space of $\pa\pred{G}_c^p\subseteq G_c^p$ from $\pred{G}_c^p$.
    Then, let $\pc\pred{G}_c^p=G_c^p-\pa\pred{G}_c^p$
    we can inspect the changes to $I(\pred{G}_c^p;\pred{G}_c^n|Y)$ led by $\pa\pred{G}_s^p$:
    \begin{equation}
        \label{eq:pa_delta_appdx}
        \begin{aligned}
            &\triangle I(\pred{G}_c^p;\pred{G}_c^n|Y) \\& = \triangle H(\pred{G}_c^p|Y)-\triangle H(\pred{G}_c^p|\pred{G}_c^n,Y) \\
            & =\left[H(\pc\pred{G}_c^p,\pa\pred{G}_s^p|Y)-H(\pc\pred{G}_c^p,\pa\pred{G}_c^p|Y)\right]-
            \left[H(\pc\pred{G}_c^p,\pa\pred{G}_s^p|\pred{G}_c^n,Y)-H(\pc\pred{G}_c^p,\pa\pred{G}_c^p|\pred{G}_c^n,Y)\right]                           \\
            & =\left[H(\pa\pred{G}_s^p|\pc\pred{G}_c^p,Y)-H(\pa\pred{G}_c^p|\pc\pred{G}_c^p,Y)\right]-
            \left[H(\pa\pred{G}_s^p|\pc\pred{G}_c^p,\pred{G}_c^n,Y)-H(\pa\pred{G}_c^p|\pc\pred{G}_c^p,\pred{G}_c^n,Y)\right],                          \\
        \end{aligned}
    \end{equation}
    where the last equality is obtained via expanding the conditional entropy.
    Then, considering the contents in $\pred{G}_c^n$, without loss of generality,
    we can divide all of the possible cases into two:
    \begin{enumerate}[label=(\roman*)]
        \item $\pred{G}_c^n$ contains only the corresponding invariant subgraph $G_c^n$;
        \item $\pred{G}_c^n$ contains subgraph from the corresponding spurious subgraph $G_s^n$, denoted as $\pa\pred{G}_s^n\subseteq G_s^n$;
    \end{enumerate}
    For case (i), it is easy to write Eq.~\ref{eq:pa_delta_appdx} as:
    \begin{equation}
        \label{eq:pa_delta_case_i_appdx}
        \begin{aligned}
            &\triangle I(\pred{G}_c^p;\pred{G}_c^n|Y)\\
             &= \left[H(\pa\pred{G}_s^p|\pc\pred{G}_c^p,Y)-H(\pa\pred{G}_c^p|\pc\pred{G}_c^p,Y)\right]-
            \left[H(\pa\pred{G}_s^p|\pc\pred{G}_c^p,\pred{G}_c^n,Y)-H(\pa\pred{G}_c^p|\pc\pred{G}_c^p,\pred{G}_c^n,Y)\right], \\
             & =-H(\pa\pred{G}_c^p|\pc\pred{G}_c^p,Y)+H(\pred{G}_c^p|\pc\pred{G}_c^p,\pred{G}_c^n,Y),                           \\
        \end{aligned}
    \end{equation}
    since $H(\pa\pred{G}_s^p|\pc\pred{G}_c^p,Y)=H(\pa\pred{G}_s^p|\pred{G}_c^n,\pc\pred{G}_c^p,Y)=H(\pa\pred{G}_s^p|Y)$ given $C\ind S|Y$ for PIIF shifts.
    Then, it suffices to know that $\triangle I(\pred{G}_c^p;\pred{G}_c^n|Y)\leq 0$
    as conditioning on new variables will not increase the entropy~\citep{network_coding}.
 
    For case (ii), we have :
    \begin{equation}
        \label{eq:pa_delta_case_ii_appdx}
        \begin{aligned}
            &\triangle I(\pred{G}_c^p;\pred{G}_c^n|Y)\\
             & =\left[H(\pa\pred{G}_s^p|\pc\pred{G}_c^p,Y)-H(\pa\pred{G}_c^p|\pc\pred{G}_c^p,Y)\right]-
            \left[H(\pa\pred{G}_s^p|\pc\pred{G}_c^p,\pred{G}_c^n,Y)-H(\pa\pred{G}_c^p|\pc\pred{G}_c^p,\pred{G}_c^n,Y)\right], \\
             & =\left[-H(\pa\pred{G}_c^p|\pc\pred{G}_c^p,Y)+H(\pa\pred{G}_c^p|\pc\pred{G}_c^p,\pred{G}_c^n,Y)\right]+
            \left[H(\pa\pred{G}_s^p|\pc\pred{G}_c^p,Y)-H(\pa\pred{G}_s^p|\pc\pred{G}_c^p,\pred{G}_c^n,Y)\right],                   \\
        \end{aligned}
    \end{equation}
    where we claim that $H(\pa\pred{G}_s^p|\pc\pred{G}_c^p,Y)-H(\pa\pred{G}_s^p|\pc\pred{G}_c^p,\pred{G}_c^n,Y)=0$,
    and similarly conclude that $\triangle I(\pred{G}_c^p;\pred{G}_c^n|Y)\leq 0$.
    More specifically,
    we can rewrite the first term in Eq.~\ref{eq:pa_delta_case_ii_appdx} as
    \begin{align*}
        H(\pa\pred{G}_s^p|\pc\pred{G}_c^p,Y)-H(\pa\pred{G}_s^p|\pc\pred{G}_c^p,\pred{G}_c^n,Y) & =
        H(\pa\pred{G}_s^p|Y)-H(\pa\pred{G}_s^p|\pa\pred{G}_s^n,Y)                                                                              \\
                                                                                                    & =I(\pa\pred{G}_s^p;\pa\pred{G}_s^n|Y)=0,
    \end{align*}
    using the variation condition (i.e., assumption iii)) for $\pa\pred{G}_s^p$ under $\{G^p\}$, and $\pa\pred{G}_s^n$ under $\{G^n\}$.

    After showing the success of \ours in tackling $H(S|Y)<H(C|Y)$,
    it also suffices to know that the aforementioned discussion also generalizes to the other case, i.e., when
    $H(S|Y)>H(C|Y)$ in $\{G^p\}$ and $H(S|Y)<H(C|Y)$ in $\{G^n\}$.
\end{proof}

\clearpage
\section{More Discussions on Practical Implementations of \ourst}
\label{sec:gala_impl_appdx}
In this section, we provide more implementation discussions about \ours in complementary to Sec.~\ref{sec:gala_sol}.

\paragraph{Objective implementation.}
As the estimation of mutual information could be highly expensive~\citep{infoNCE,mine}, inspired by~\citet{ciga},
we adopt the contrastive learning to approximates the mutual information between subgraphs in Eq.~\ref{eq:gala_sol}~\citep{sup_contrastive,contrast_loss1,contrast_loss2,infoNCE,mine}:
\begin{equation} \label{eq:gala_impl}
    \begin{aligned}
        I(\pred{G}_{c}^p;\pred{G}_c^n|Y) \approx
         & \mathbb{E}_{
        \substack{
        \{\pred{G}_{c}^p,\pred{G}_c^n\} \sim \sP_g(G|\gY=Y)         \\\
        \{G^i_c\}_{i=1}^{M} \sim \sP_g(G|\gY \neq Y)
        }
        }                                                           \\
         & \log\frac{e^{\phi(h_{\pred{G}_{c}^p},h_{\pred{G}_c^n})}}
        {e^{\phi(h_{\pred{G}_{c}^p},h_{\pred{G}_c^n})} +
            \sum_{i=1}^M e^{\phi(h_{\pred{G}_{c}},h_{G^i_c})}},
    \end{aligned}
\end{equation}
where $(\pred{G}_{c}^p,\pred{G}_c^n)$ are subgraphs extracted by $g$ from $\{G^p\},\{G^n\}$ that share the same label, respectively.
$\{G^i_c\}_{i=1}^{M}$ are subgraphs extracted by $g$ from $G$ that has a different label.
$\sP_g(G|\gY=Y)$ is the push-forward distribution of $\sP(G|\gY=Y)$ by featurizer $g$,
$\sP(G|\gY=Y)$ refers to the distribution of $G$ given the label $Y$,
$\sP(G|\gY\neq Y)$ refers to the distribution of $G$ given the label that is different from $Y$,
$\pred{G}_{c}=g(\pred{G}), \pred{G}_c=g(\pred{G}), G^{i}_c=g(G^{i})$ are the estimated subgraphs,
$h_{\pred{G}_{c}^p},h_{\pred{G}_c^n},h_{G^i_c}$ are the graph presentations of the extracted subgraphs.
$\phi$ is a similarity measure.
As $M\rightarrow \infty$, Eq.~\ref{eq:gala_impl} approximates $I(\pred{G}_{c}^p;\pred{G}_c^n|Y)$~\citep{feat_dist_entropy,vMF_entropy,align_uniform}.

\paragraph{Environment assistant implementation.}
Theorem~\ref{thm:gala_success} shows the effectiveness of \ours when given proper
subsets of $\{G^p\}$ and $\{G^n\}$.
In practice, we can implement the environment assistant into multiple forms.
As discussed in Sec.~\ref{sec:gala_der}, ERM trained model can
serve as a reliable proxy. Since ERM tends to learn the first dominant features,
when $H(S|Y)<H(C|Y)$, ERM will firstly learn to extract spurious subgraphs $G_s$ to make predictions.
Therefore, we can obtain $\{G^p\}$ by finding samples where ERM correctly predicts the labels,
while $\{G^n\}$ for samples that ERM predicts an incorrect label.
In addition to direct label predictions,
we can also adopt clustering~\citep{cnc} to yield environment assistant predictions
for better contrastive sampling. We provide the detailed description of the clustering based variant of \ours in Algorithm~\ref{alg:gala_cl}.

\begin{algorithm}[ht]
    \caption{\textbf{\ourst}: Clustering based \oursfull }
    \label{alg:gala_cl}
    \begin{algorithmic}[1]
        \STATE \textbf{Input:} Training data $\train$;
        environment assistant $A$;
        featurizer $g$; classifier $f_c$;
        length of maximum training epochs $e$; batch size $b$;
        \STATE Initialize environment assistant $A$;
        \FOR{$p \in [1,\ldots, e]$}
        \STATE Sample a batch of data $\{G_i,Y_i\}_{i=1}^b$ from $\train$;
        \STATE Obtain Environment Assistant predictions $\{\hat{c}^e_i\}_{i=1}^b$
        using $k$-means clustering on the graph representations yielded by $A$;
        \FOR{each sample $G_i,y_i \in \{G_i,Y_i\}_{i=1}^b$}
        \STATE Find \emph{postive} graphs with same $y_i$ and different $\hat{c}^e_i$;
        \STATE Find \emph{negative} graphs with different $y_i$ but same environment assistant prediction $\hat{c}^e_i$;
        \STATE Calculate \ours risk via Eq.~\ref{eq:gala_impl};
        \STATE Update $f_c, g$ via gradients from \ours risk;
        \ENDFOR
        \ENDFOR
        \STATE \textbf{return} final model $f_c\circ g$;
    \end{algorithmic}
\end{algorithm}

Empirically, we find clustering based variants can provide better performance
when the spurious correlations are well learned by the environment assistant model.
More concretely, we plot the umap visualizations~\citep{umap} of ERM trained environment assistant model
as in Fig.~\ref{fig:ea_cluster},
where we can find that clustering predictions provide better approximations
to the underlying group labels.

Besides, we can also incorporate models
that are  easier to overfit to the first dominant features to better differentiate $\{G^p\}$ from  $\{G^n\}$.
To demonstrate the influence of different environment assistant implementations,
we conduct more studies with interpretable GNNs with an interpretable ratio of $30\%$ trained with ERM and also with a CIGAv1 penalty of $4$.

\begin{figure}[H]
    \centering
    \subfigure[Colored by environment labels.]{
        \includegraphics[width=0.31\textwidth]{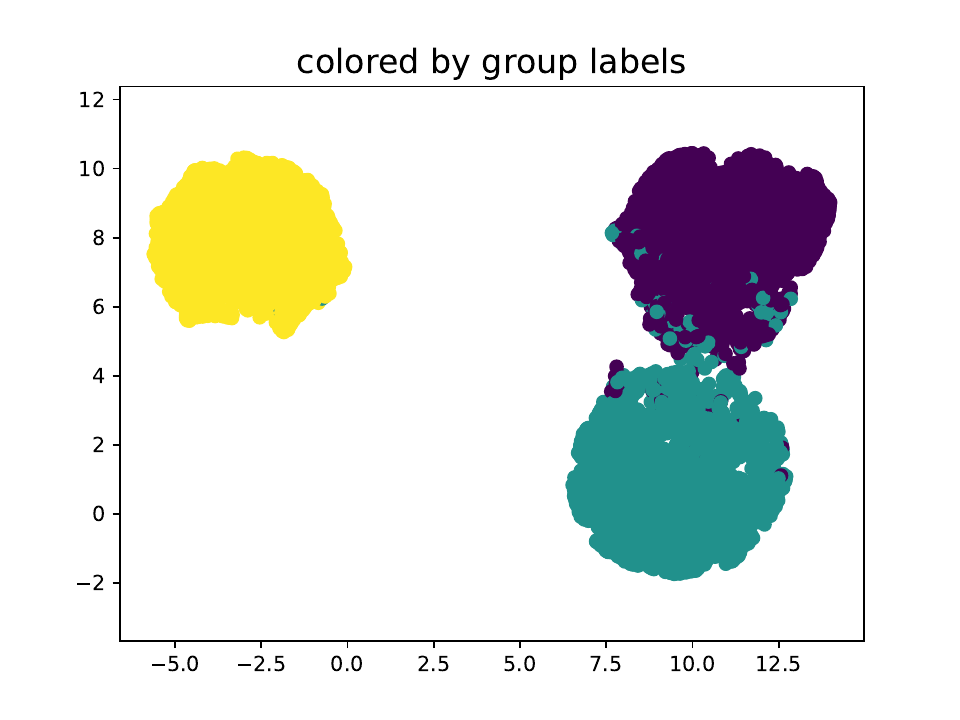}
    }
    \subfigure[Colored by label predictions.]{
        \includegraphics[width=0.31\textwidth]{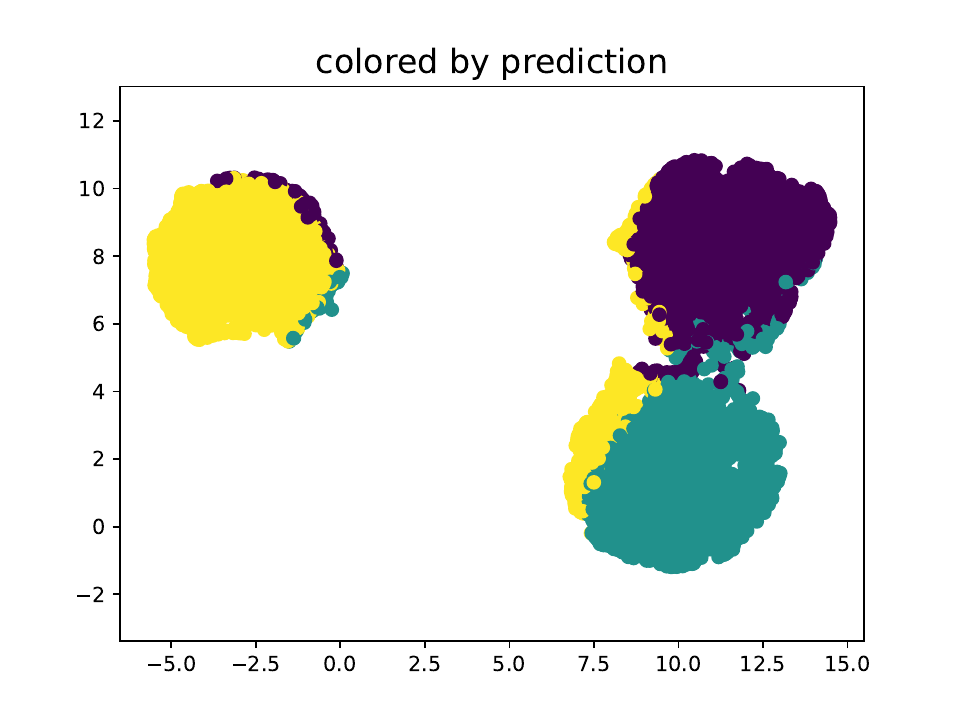}
    }
    \subfigure[Colored by cluster predictions.]{
        \includegraphics[width=0.31\textwidth]{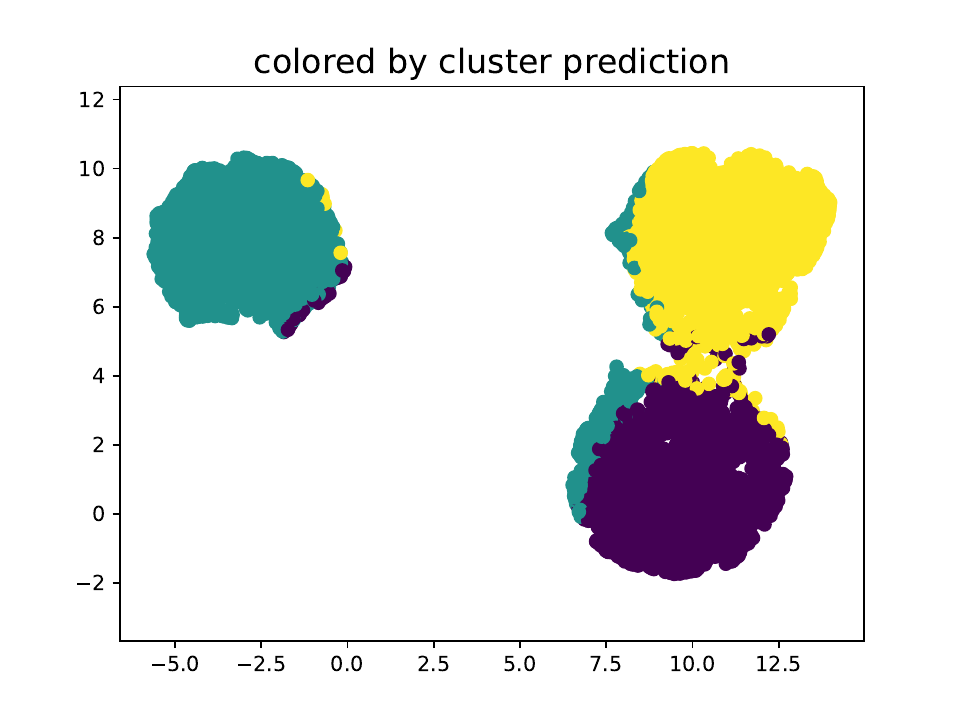}
    }
    \caption{
        Umap visualizations of learned graph representations in ERM trained environment assistant model
        based on the $3$-class two-piece graph $\{0.7,0.9\}$.}
    \label{fig:ea_cluster}
\end{figure}

\begin{figure}[H]
    \centering
    \subfigure[Colored by environment labels.]{
        \includegraphics[width=0.31\textwidth]{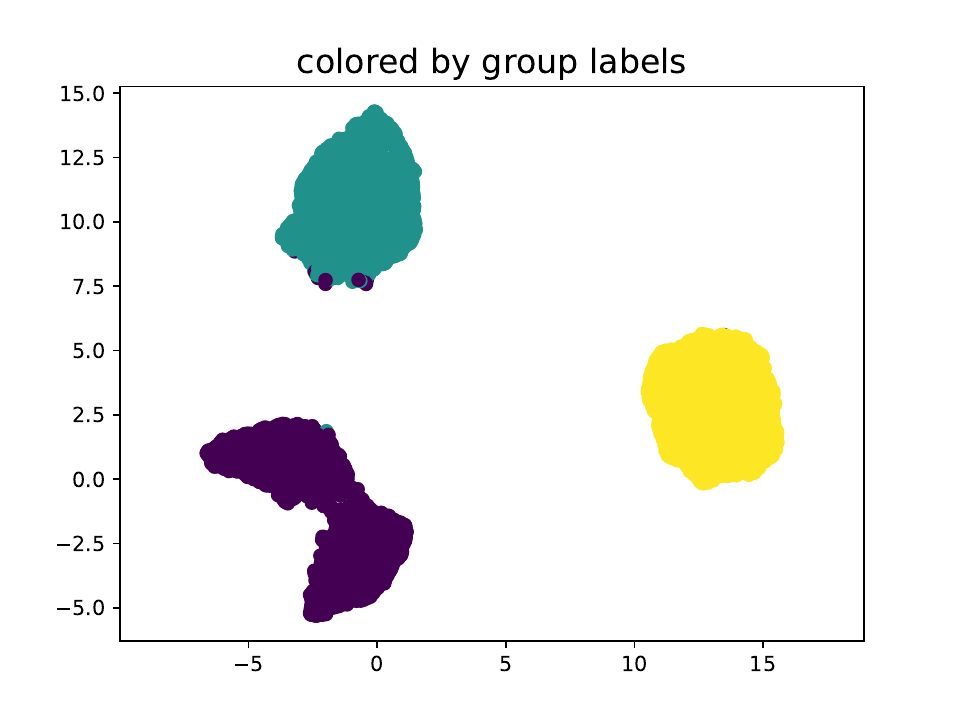}
    }
    \subfigure[Colored by label predictions.]{
        \includegraphics[width=0.31\textwidth]{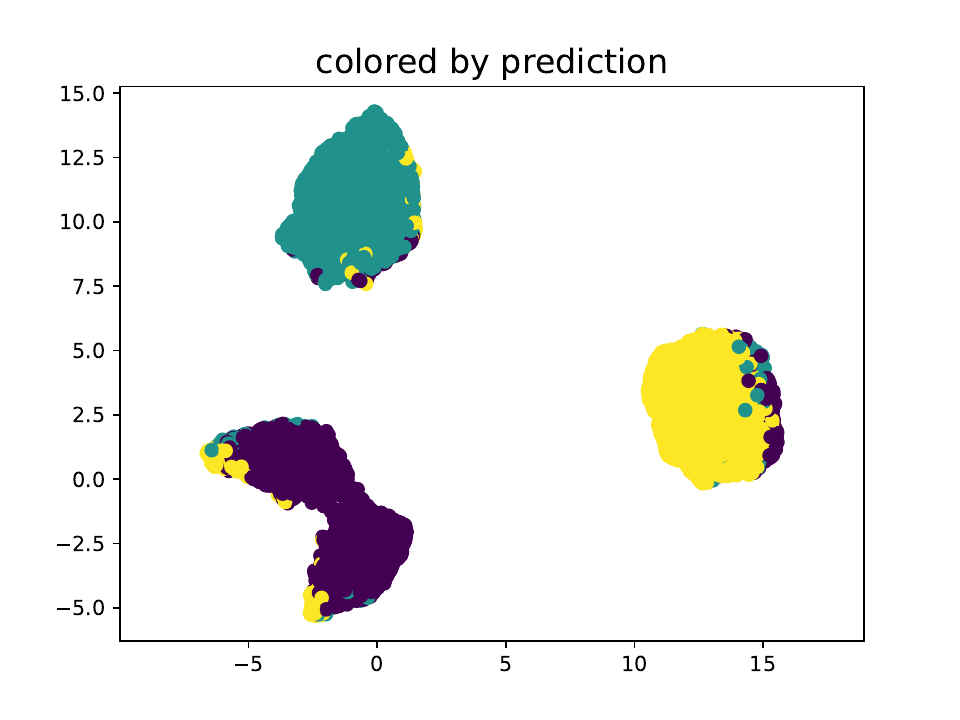}
    }
    \subfigure[Colored by cluster predictions.]{
        \includegraphics[width=0.31\textwidth]{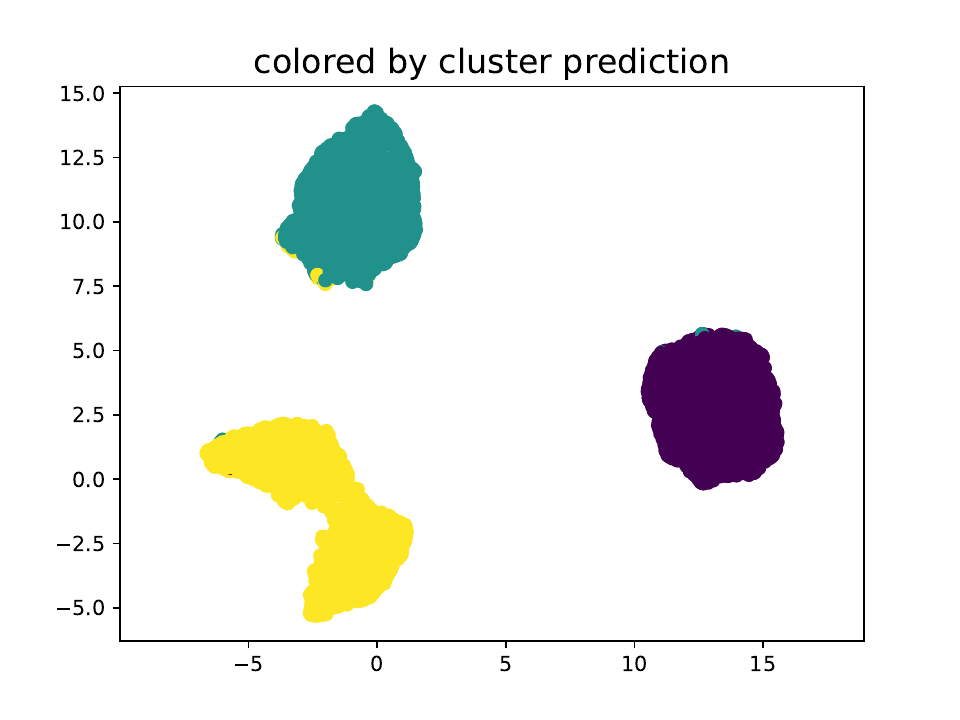}
    }
    \caption{
        Umap visualizations of learned graph representations
        in an interpretable GNN model (ratio=$30\%$) trained with ERM
        based on the $3$-class two-piece graph $\{0.7,0.9\}$.}
    \label{fig:ea_cluster_xgnn}
    \vskip -0.15in
\end{figure}

\begin{figure}[H]
    \centering
    \subfigure[Colored by environment labels.]{
        \includegraphics[width=0.31\textwidth]{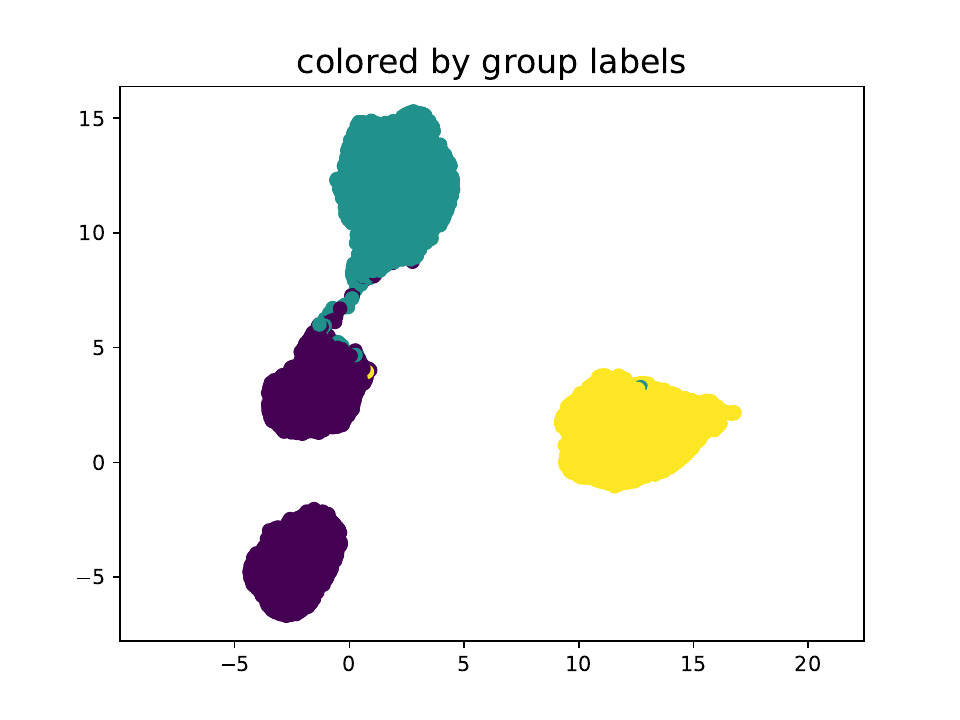}
    }
    \subfigure[Colored by label predictions.]{
        \includegraphics[width=0.31\textwidth]{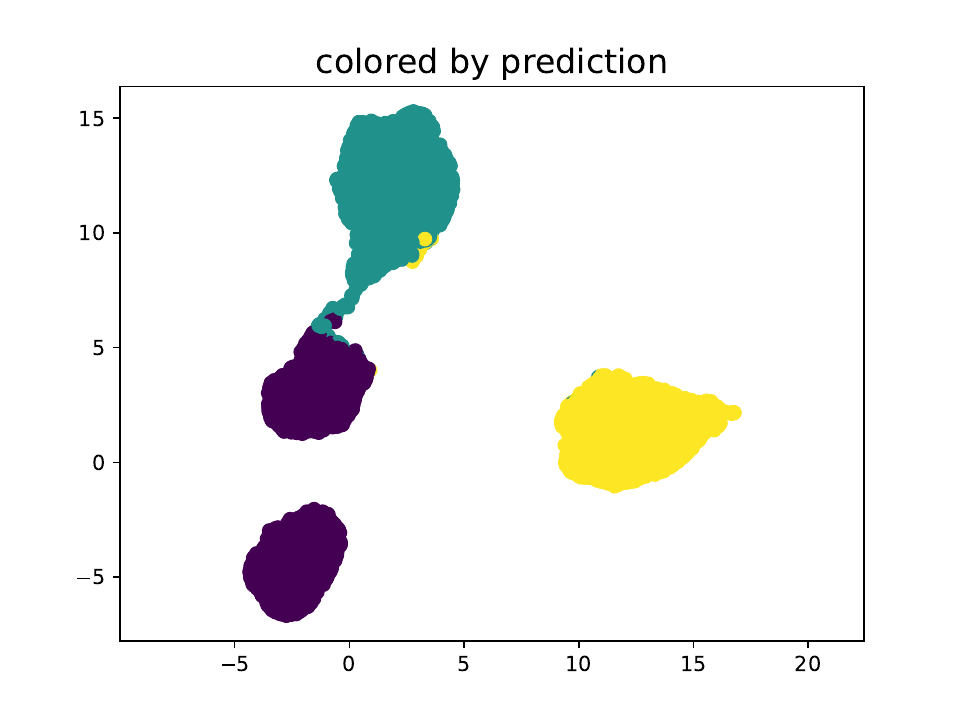}
    }
    \subfigure[Colored by cluster predictions.]{
        \includegraphics[width=0.31\textwidth]{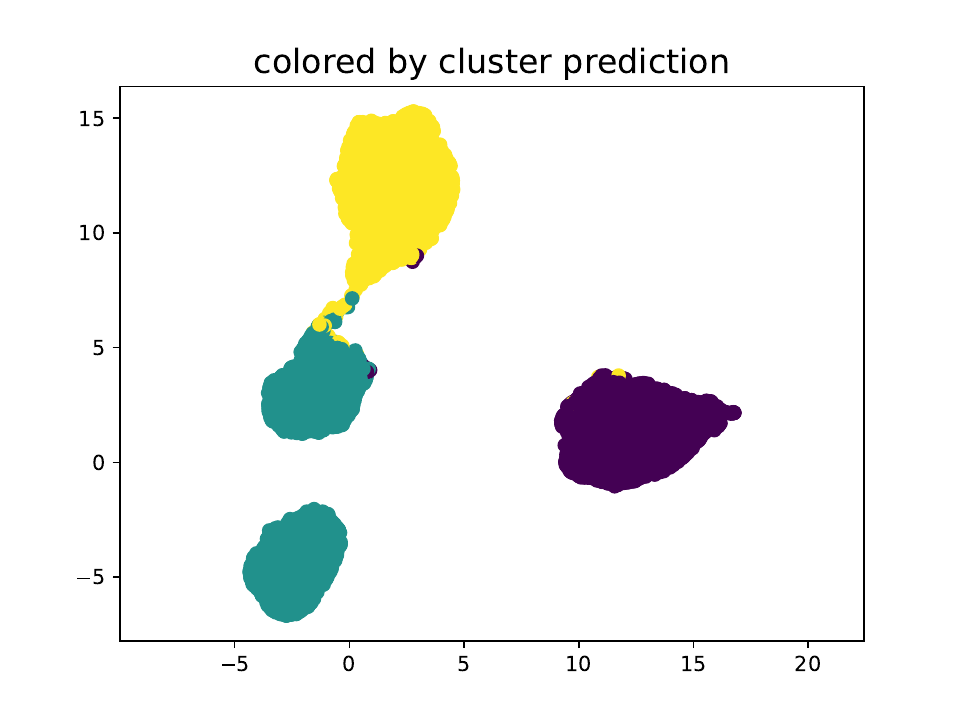}
    }
    \caption{
        Umap visualizations of learned graph representations
        in an interpretable GNN model (ratio=$30\%$) trained with ERM
        based on the $3$-class two-piece graph $\{0.7,0.9\}$.}
    \label{fig:ea_cluster_xgnn_c4}
    \vskip -0.15in
\end{figure}

\begin{figure}[H]
    \centering
    \subfigure[Colored by environment labels.]{
        \includegraphics[width=0.31\textwidth]{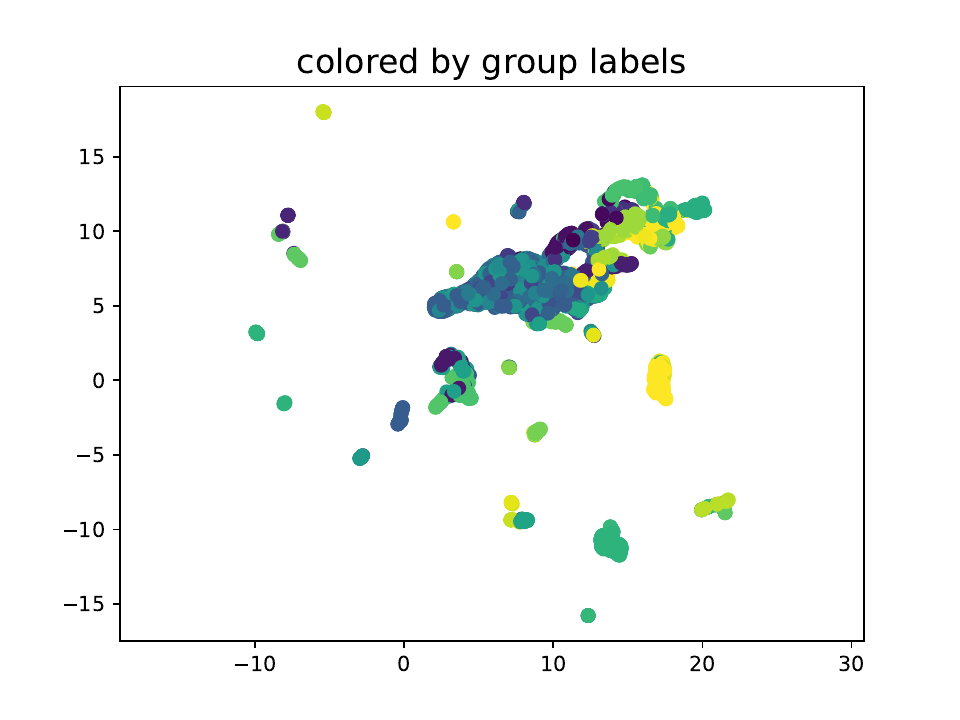}
    }
    \subfigure[Colored by label predictions.]{
        \includegraphics[width=0.31\textwidth]{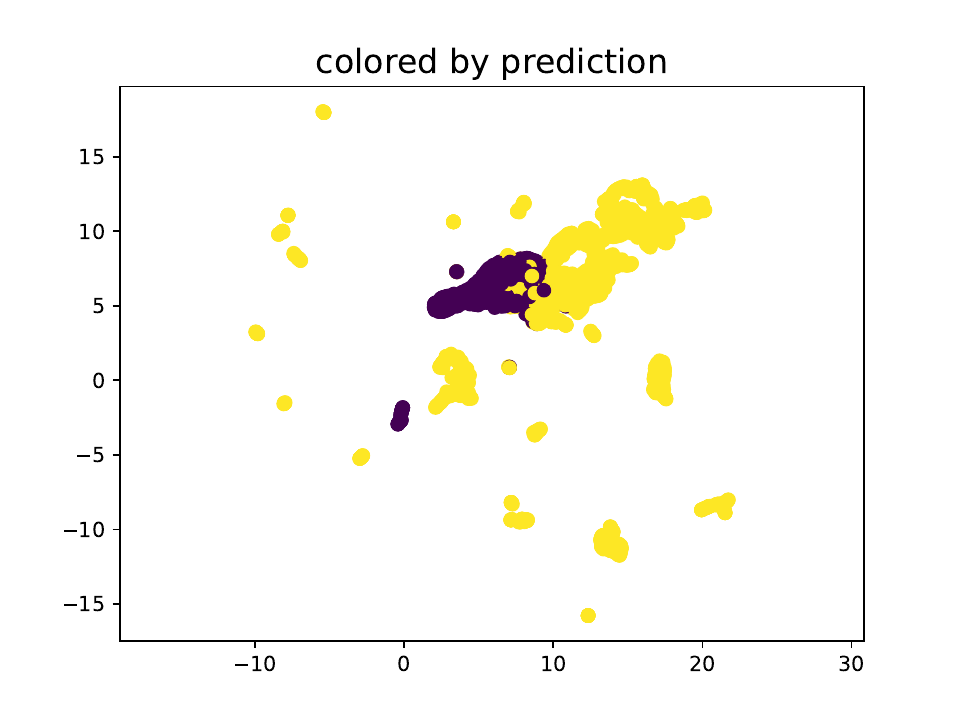}
    }
    \subfigure[Colored by cluster predictions.]{
        \includegraphics[width=0.31\textwidth]{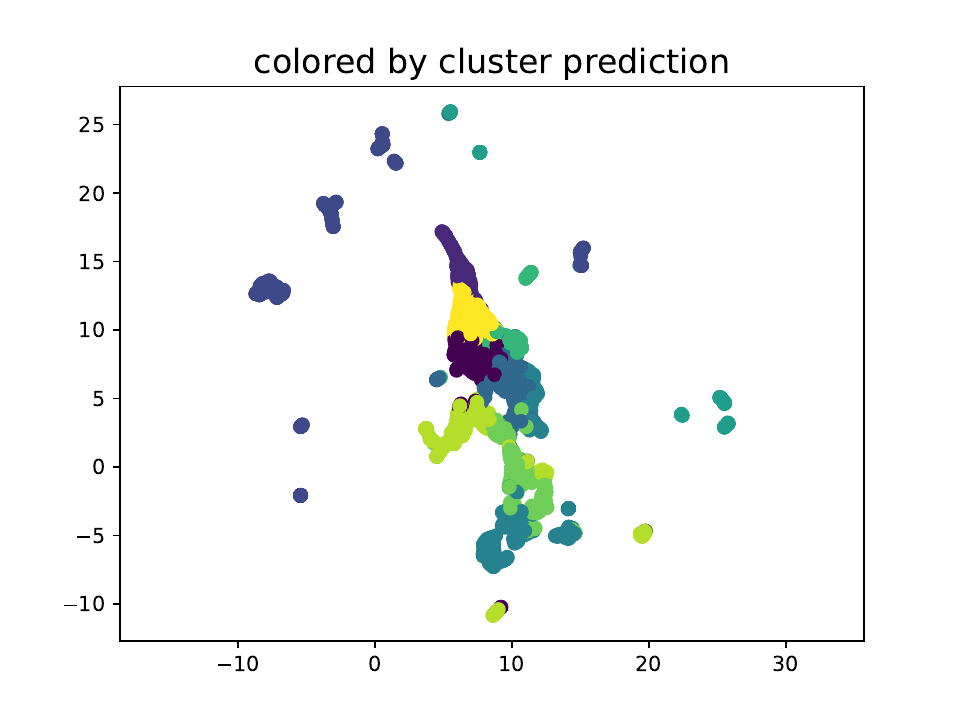}
    }
    \caption{
        Umap visualizations of learned graph representations of
        a interpretable GNN trained by ERM on EC50-Assay.}
    \label{fig:ea_cluster_assay}
    \vskip -0.15in
\end{figure}

\begin{figure}[H]
    \centering
    \subfigure[Colored by environment labels.]{
        \includegraphics[width=0.31\textwidth]{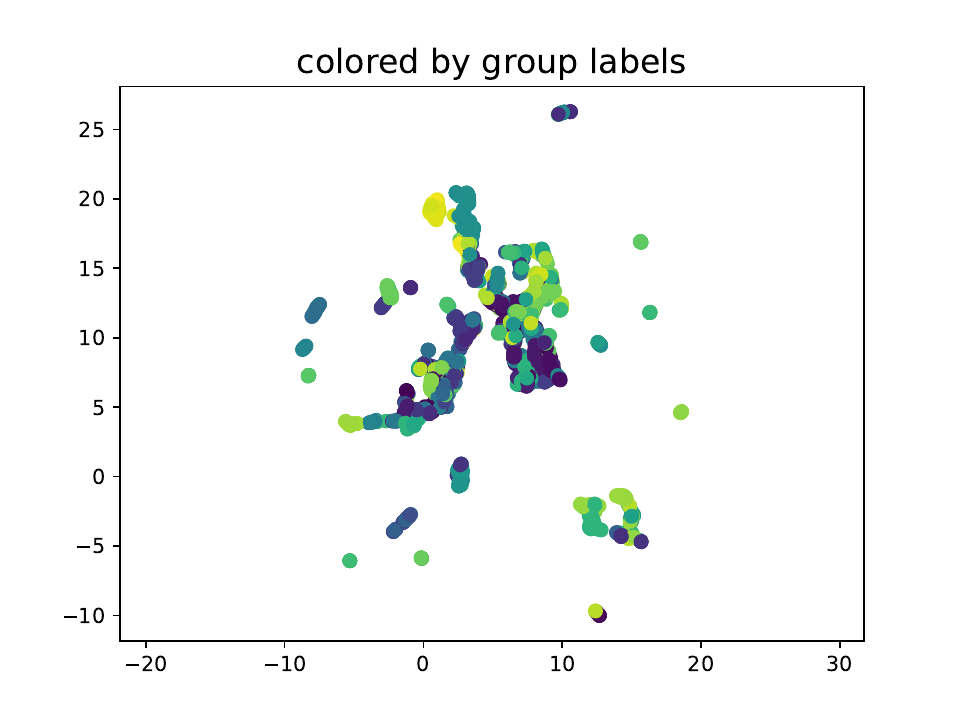}
    }
    \subfigure[Colored by label predictions.]{
        \includegraphics[width=0.31\textwidth]{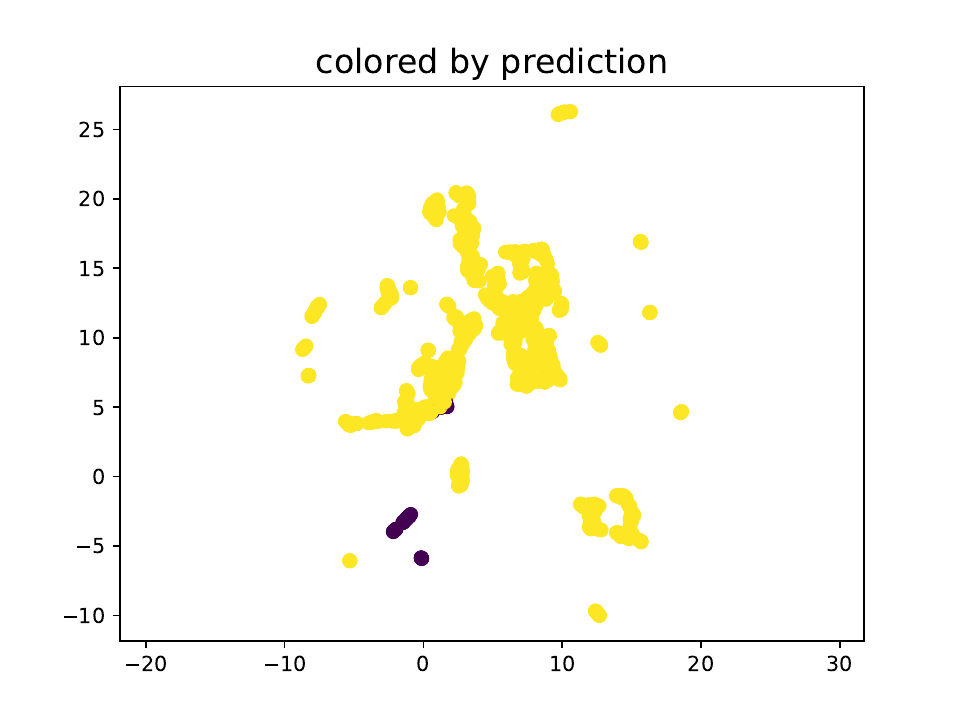}
    }
    \subfigure[Colored by cluster predictions.]{
        \includegraphics[width=0.31\textwidth]{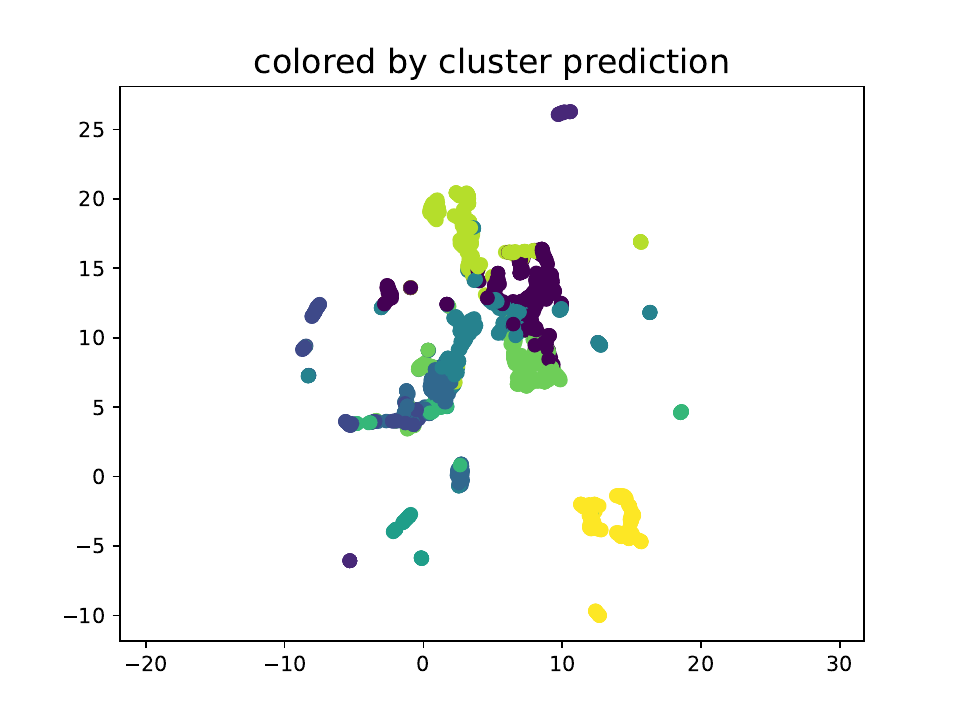}
    }
    \caption{
        Umap visualizations of learned graph representations of
        an interpretable GNN trained by ERM on EC50-Scaffold.}
    \label{fig:ea_cluster_sca}
    \vskip -0.15in
\end{figure}

\begin{figure}[H]
    \centering
    \subfigure[Colored by environment labels.]{
        \includegraphics[width=0.31\textwidth]{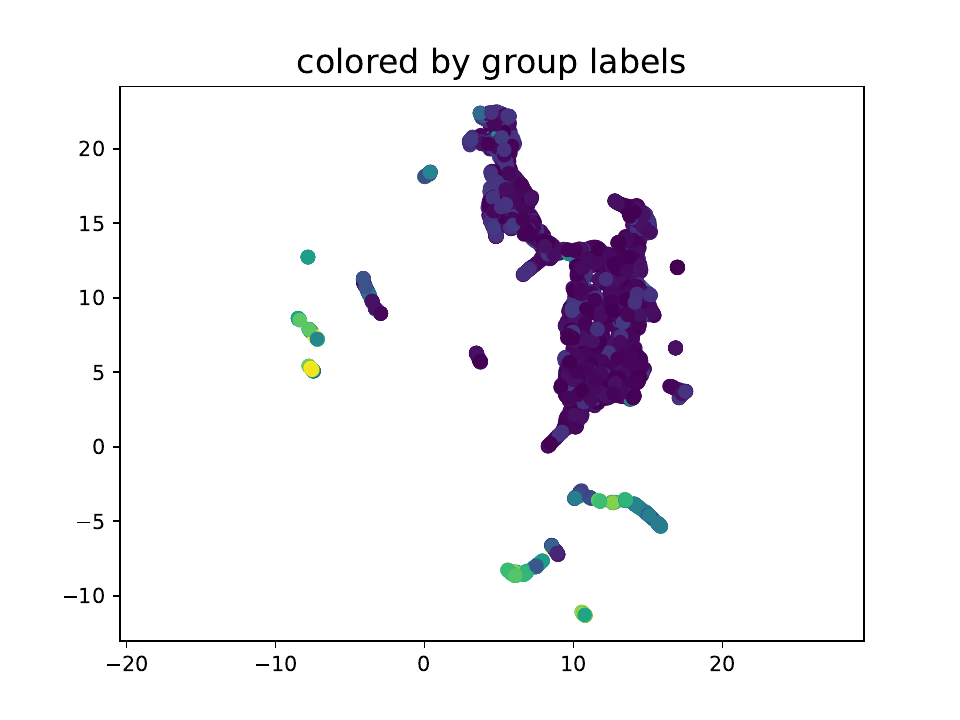}
    }
    \subfigure[Colored by label predictions.]{
        \includegraphics[width=0.31\textwidth]{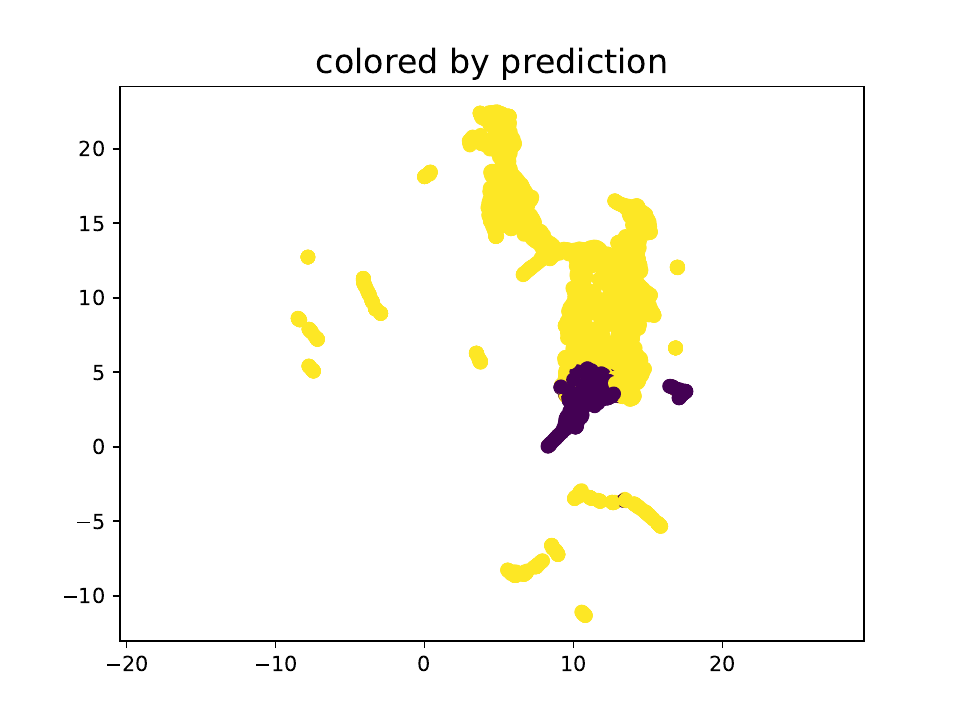}
    }
    \subfigure[Colored by cluster predictions.]{
        \includegraphics[width=0.31\textwidth]{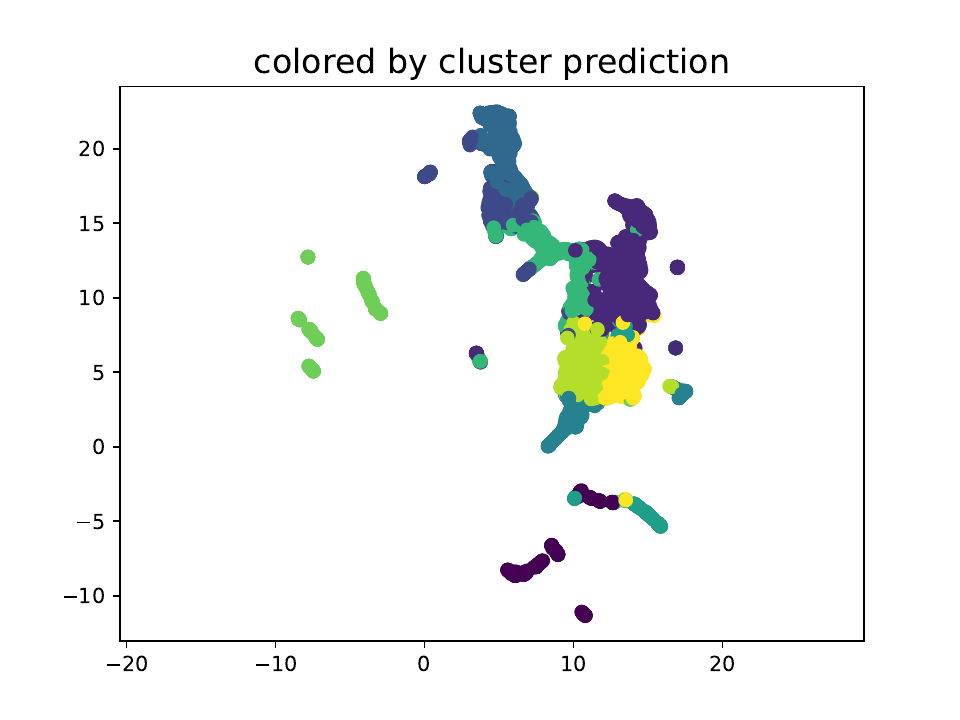}
    }
    \caption{
        Umap visualizations of learned graph representations of
        a interpretable GNN trained by ERM on EC50-Size.}
    \label{fig:ea_cluster_size}
\end{figure}

In Fig.~\ref{fig:ea_cluster_xgnn} and Fig.~\ref{fig:ea_cluster_xgnn_c4},
it can be found that the interpretable GNN learns hidden representations
that are better clustered with group labels.
The clustering based predictions yields a better approximation of the
underlying environment labels.
Furthermore, when implementing the environment assistant model
using a interpretable GNN as well as a CIGAv1 penalty,
which facilitates the overfitting to the spurious correlations,
then the vanilla label predictions can also yield a good approximation of the
underlying environment labels.

Although using the clustering predictions seem to be promising,
we also find negative cases. For example, in DrugOOD datasets,
the number of curated environment labels are much larger that
learning a well clusterd hidden representations for the environment labels
appears to be difficult. Shown as in Fig.~\ref{fig:ea_cluster_assay} to Fig.~\ref{fig:ea_cluster_size},
the learned representations have poor quality for approximating the underlying
environment labels.
Empirically, we also find that direct using label predictions in DrugOOD datasets
generically yield better performance.

\paragraph{One-side contrastive sampling.}
The original supervised contrastive implementation~\citep{sup_contrastive}
takes positive and negative samples within the batch
using two-side contrastive sampling. That is,
all the samples will be considered as anchor points.
However, when it is used to contrast samples from $\widehat{G}_{c}^p$ and $\widetilde{G}_c^n$,
there could be undesired behaviors.
First, it can often happen that there are few to no negative cases when the spurious correlations are too strong.
The samples from $\{G^p\}$ in a batch may pull the representations of
samples from $\{G^n\}$ to even closer, which makes the model further overfitted to the spurious correlations.
Second, the sampling over $\widehat{G}_{c}^p$ and $\widetilde{G}_c^n$, can be
seen as hard positive and negative samples, that may impose a too strong regularizations that preventing
the learning of any correlations.
Therefore, we propose to use one-side sampling. That is, only using the
incorrectly predicted samples as anchor points.
We empirically observe one-side sampling could yield better performance in two-piece graphs.

\paragraph{Upsampling of minority group samples.}
It is possible that the number of positive and negative graphs is imbalanced, especially when adopting the label predictions to sample positive and negative graphs. For example, when the ERM trained assistant model overfits the training distribution under the spuriousness-dominated case, the number of negative graphs will be extremely small. Given an extremely small number of negative samples for contrastive learning, the resulting mutual information estimation will be collapsed to trivial solutions. Therefore, we propose a simple strategy to mitigate the issue. We directly upsample the minority group samples. The minority group of samples will be repeated $k$ times within the training set.

\section{More Details about the Experiments}
\label{sec:exp_appdx}
In this section, we provide more details about the experiments, including the dataset preparation, baseline implementations, models and hyperparameters selection as well as the evaluation protocols.

\subsection{Datasets}
\label{sec:dataset_appdx}

We provide more details about the motivation and construction method of the datasets that are used in our experiments. Statistics of the datasets are presented in Table~\ref{tab:datasets_stats_appdx}.

\bgroup
\def\arraystretch{1.2}
\begin{table}[H]
    \centering
    \caption{Information about the datasets used in experiments. The number of nodes and edges are respectively taking average among all graphs.}%
    \label{tab:datasets_stats_appdx}
    \resizebox{\textwidth}{!}{
        \begin{small}
            \begin{tabular}{l|ccccccc}
                \toprule
                \textbf{Datasets}              & \textbf{\# Training} & \textbf{\# Validation} & \textbf{\# Testing} & \textbf{\# Classes} & \textbf{ \# Nodes} & \textbf{ \# Edges}
                                               & \textbf{  Metrics}                                                                                                                            \\\midrule
                Two-piece graphs $\{0.8,0.6\}$ & $9,000$              & $3,000$                & $3,000$             & $3$                 & $26.14$            & $36.21$            & ACC     \\
                Two-piece graphs $\{0.8,0.7\}$ & $9,000$              & $3,000$                & $3,000$             & $3$                 & $26.18$            & $36.27$            & ACC     \\
                Two-piece graphs $\{0.8,0.9\}$ & $9,000$              & $3,000$                & $3,000$             & $3$                 & $26.13$            & $36.22$            & ACC     \\
                Two-piece graphs $\{0.7,0.9\}$ & $9,000$              & $3,000$                & $3,000$             & $3$                 & $26.13$            & $36.22$            & ACC     \\\midrule
                CMNIST-sp                      & $40,000$             & $5,000$                & $15,000$            & $2$                 & $56.90$            & $373.85$           & ACC     \\
                Graph-SST2                     & $24,881 $            & $7,004 $               & $12,893$            & $2$                 & $10.20$            & $18.40$            & ACC     \\\midrule
                EC50-Assay                     & $4,978$              & $ 2,761 $              & $2,725$             & $2$                 & $40.89$            & $87.18$            & ROC-AUC \\
                EC50-Scaffold                  & $2,743$              & $ 2,723 $              & $2,762$             & $2$                 & $35.54$            & $75.56$            & ROC-AUC \\
                EC50-Size                      & $5,189$              & $2,495$                & $2,505$             & $2$                 & $35.12$            & $75.30$            & ROC-AUC \\
                Ki-Assay                       & $8,490$              & $ 4,741 $              & $4,720$             & $2$                 & $32.66$            & $71.38$            & ROC-AUC \\
                Ki-Scaffold                    & $5,389$              & $ 4,805 $              & $4,463$             & $2$                 & $29.96$            & $65.11$            & ROC-AUC \\
                Ki-Size                        & $8,605$              & $4,486$                & $4,558$             & $2$                 & $30.35$            & $66.49$            & ROC-AUC \\
                \bottomrule
            \end{tabular}	\end{small}}
\end{table}
\egroup
\textbf{Two-piece graph datasets.} We construct 3-class synthetic datasets based on BAMotif~\citep{pge} following Def.~\ref{def:twobit_graph_appdx},
where the model needs to tell which one of three motifs (House, Cycle, Crane) the graph contains.
For each dataset, we generate $3000$ graphs for each class at the training set, $1000$ graphs for each class at the validation set and testing set, respectively.
Each dataset is defined with two variables $\{a,b\}$ referring to the strength of invariant and spurious correlations.
Given $\{a,b\}$, we generate the training data following the percise generation process as Def.~\ref{def:twobit_graph_appdx}. While for the generation of validation sets, we use a $b_v=\max(1/3,b-0.2)$ that facilitates the model selection for OOD generalization~\citep{domainbed,pair}. While for the generation of test datasets, we merely use a $b=0.33$ that contains no distribution shifts, to fully examine to what extent the model learns the invariant correlations.
During the construction, we merely inject the distribution shifts in the training data while keeping the testing data and validation data without the biases.

\textbf{CMNIST-sp.} To study the effects of PIIF shifts, we select the ColoredMNIST dataset created in IRM~\citep{irmv1}. We convert the ColoredMnist into graphs using the superpixel algorithm introduced by~\citet{understand_att}. Specifically, the original Mnist dataset is assigned to binary labels where images with digits $0-4$ are assigned to $y=0$ and those with digits $5-9$ are assigned to $y=1$. Then, $y$ will be flipped with a probability of $0.25$. Thirdly, green and red colors will be respectively assigned to images with labels $0$ and $1$ an averaged probability of $0.15$ (since we do not have environment splits) for the training data. While for the validation and testing data, the probability is flipped to $0.9$.

\textbf{Graph-SST2.} Inspired by the data splits generation for studying distribution shifts on graph sizes, we split the data curated from sentiment graph data~\citep{xgnn_tax}, that converts sentiment sentence classification datasets \textbf{Graph-SST2}~\citep{sst25} into graphs, where node features are generated using BERT~\citep{bert} and the edges are parsed by a Biaffine parser~\citep{biaffine}. Our splits are created according to the averaged degrees of each graph. Specifically, we assign the graphs as follows: Those that have smaller or equal to $50$-th percentile averaged degree are assigned to training, those that have averaged degree large than $50$-th percentile while smaller than $80$-th percentile are assigned to the validation set, and the left are assigned to test set.

\textbf{DrugOOD datasets.} To evaluate the OOD performance in realistic scenarios with realistic distribution shifts, we also include three datasets from DrugOOD benchmark~\citep{drugood}.
DrugOOD is a systematic OOD benchmark for AI-aided drug discovery, focusing on the task of drug target binding affinity prediction for both macromolecule (protein target) and small-molecule (drug compound).
The molecule data and the notations are curated from realistic ChEMBL database~\citep{chembl}.
Complicated distribution shifts can happen on different assays, scaffolds and molecule sizes.
In particular, we select \texttt{DrugOOD-lbap-core-ec50-assay}, \texttt{DrugOOD-lbap-core-ec50-scaffold}, \texttt{DrugOOD-lbap-core-ec50-size}, \texttt{DrugOOD-lbap-core-ki-assay}, \texttt{DrugOOD-lbap-core-ki-scaffold}, and \texttt{DrugOOD-lbap-core-ki-size},
from the task of Ligand Based Afﬁnity Prediction which uses \texttt{ic50} measurement type and contains \texttt{core} level annotation noises.
We directly use the data files provided by the authors.\footnote{https://drugood.github.io/}
For more details, we refer interested readers to~\citet{drugood}.

\subsection{Baselines and Evaluation Setup}
\label{sec:eval_appdx}

During the experiments, we do not tune the hyperparameters exhaustively while following the common recipes for optimizing GNNs.
Details are as follows.

\textbf{GNN encoder.} For a fair comparison, we use the same GNN architecture as graph encoders for all methods.
By default, we use $3$-layer GIN~\citep{gin} with Batch Normalization~\citep{batch_norm} between layers and JK residual connections at the last layer~\citep{jknet}.
The hidden dimension is set to $32$ for Two-piece graphs, CMNIST-sp, and $128$ for SST2, and DrugOOD datasets.
The pooling is by default a mean function over all nodes. The only exception is DrugOOD datasets, where we follow the backbone used in the paper~\citep{drugood}, i.e., $4$-layer GIN with sum readout.

\textbf{Interpretable GNN backbone.} As mentioned in Sec.~\ref{sec:prelim} that most of the existing invariant graph learning approaches adopt the interpretable GNN as the basic backbone model for the whole predictor $f=f_c\circ g$, where
$g:\gG\rightarrow\gG_c$ is a featurizer GNN and  $f_c:\gG_c\rightarrow\gY$ is a classifier GNN.
$g$ first calculates the sampling weights as in $\widehat{G}_c$ for each edge. More formally, given a graph $G$ containing $n$ nodes, a soft mask is predicted through the following equation:
\begin{equation}\label{eq:gae_appdx}\nonumber
    Z=\text{GNN}(G)\in\R^{n\times h},\ M=\text{a}(Z,A)\in\R^{n\times n},
\end{equation}
where $a$ calculates the sampling weights for each edge using a MLP: $M_{ij}=\text{MLP}([Z_i,Z_j])$.
Based on the continuous sampling score $M$, $g$ could sample discrete edges according to the predicted scores~\citep{gsat}.
For two-piece graph datasets and DrugOOD datasets, we will directly use the score to reweight the messaging passing process along the edge, as we empirically find it yields more stable performance. While for CMNIST-sp and Graph-SST2,  we will sample a ratio $r\%$ of all edges for each graph. The ratios adopted are $80\%$ and $60\%$, respectively, following previous works~\citep{ciga,drugood}.
Meanwhile, to improve the stability of the subgraph extractor, we adopt a layernorm~\citep{instancenorm} following the practice of~\citep{gsat}.

Besides, we also have various implementation options for obtaining the features in $\widehat{G}_c$, for further obtaining $h_{\widehat{G}_c}$, as well as for obtaining predictions based on $\widehat{G}_s$.
Following previous works~\citep{gsat}, we will adopt the same GNN encoder for the two GNNs in the interpretable GNN backbone, and feed the raw graph inputs to the classifier GNN. The contrastive loss is obtained via the graph representations of the sampled subgraph by the classifier GNN.
For classifying $G$ based on $\widehat{G}_s$, we use a separate MLP downstream classifier in the classifier GNN $f_c$.

\textbf{Optimization and model selection.}
By default, we use Adam optimizer~\citep{adam} with a learning rate of $1e-3$ and a batch size of $128$ for all models at all datasets.
Except for CMNIST-sp, we use a batch size of $256$ to facilitate the evaluation following previous works~\citep{gsat}.
To avoid underfitting, we pre-train models for $20$ epochs for all datasets by default. While in two-piece graphs, we find pre-training by $100$ epochs yields more stable performance.
To avoid overfitting, we also employ an early stopping of $5$ epochs according to the validation performance.
Meanwhile, dropout is also adopted for some datasets.
Specifically, we use a dropout rate of $0.5$ for all of the realistic graph datasets, following previous works~\citep{ciga,drugood}.

The final model is selected according to the performance at the validation set. All experiments are repeated with $5$ different random seeds of $\{1,2,3,4,5\}$. The mean and standard deviation are reported from the $5$ runs.

\textbf{Implementations of Euclidean OOD methods.}
When implementing IRM~\citep{irmv1}, V-Rex~\citep{v-rex} and IB-IRM~\citep{ib-irm}, we refer the implementations from DomainBed~\citep{domainbed}.
Since the environment information is not available, we perform random partitions on the training data to obtain two equally large environments for these objectives following previous works~\citep{eiil,ciga}.
Moreover, we select the weights for the corresponding regularization from $\{0.01,0.1,1,10,100\}$ for these objectives according to the validation performances of IRM and stick to it for others,
since we empirically observe that they perform similarly with respect to the regularization weight choice.
For EIIL~\citep{env_inference}, we use the author-released implementations about assigning different samples the weights for being put in each environment and calculating the IRM loss.

\textbf{Implementations of invariant graph learning methods.}
We implement GSAT~\citep{gsat}, GREA~\citep{grea}, CAL~\citep{cal}, MoleOOD~\citep{moleood}, GIL~\citep{gil}, DisC~\citep{disc}, and CIGA~\citep{ciga}, according to the author provided codes (if available).
\begin{itemize}[leftmargin=*]
    \item GREA~\citep{grea}: We use a penalty weight of $1$ for GREA as we empirically it does not affect the performance by changing to different weights.
          \begin{itemize}[leftmargin=*]
              \item Interpretable ratio: same as others;
              \item Penalty weight: $1$;
              \item Number of environments: N/A;
          \end{itemize}
    \item GSAT~\citep{gsat}: We follow the recommendations of the released implementations by the authors.
          \begin{itemize}[leftmargin=*]
              \item Interpretable ratio: $70\%$;
              \item Penalty weight: $1$;
              \item Decay ratio: $10\%$;
              \item Decay interval: \texttt{pretrain epoch}$//2$;
              \item Number of environments: N/A;
          \end{itemize}
    \item CAL~\citep{cal}: We follow the recommendations of the released implementations by the authors.
          \begin{itemize}[leftmargin=*]
              \item Interpretable ratio: same as others;
              \item Penalty weight: $\{0.1,0.5,1.0\}$;;
              \item Number of environments: N/A;
          \end{itemize}
    \item MoleOOD~\citep{moleood}: We tune the penalty weights of MoleOOD with values from $\{1e-2,1e-1,1,10\}$ but did not observe much performance differences. Hence we stick the penalty weight as $1$ for all datasets.
          \begin{itemize}[leftmargin=*]
              \item Interpretable ratio: N/A;
              \item Penalty weight: $1$;
              \item Number of environments: same as others;
          \end{itemize}
    \item GIL~\citep{gil}: We follow the recommendations of the paper.
          \begin{itemize}[leftmargin=*]
              \item Interpretable ratio: same as others;
              \item Penalty weight: $\{1e-5,1e-3,1e-1\}$;
              \item Number of environments: same as others;
          \end{itemize}
    \item DisC~\citep{disc}: We tune only the $q$ weight from $\{0.9,0.7,0.5\}$ in the GCE loss as we did not observe performance differences by changing the weight of the other terms.
          \begin{itemize}[leftmargin=*]
              \item Interpretable ratio: same as others;
              \item $q$ weight: $\{0.9,0.7,0.5\}$;
              \item Number of environments: same as others;
          \end{itemize}
    \item CIGA~\citep{ciga}: We follow the recommendations of the released implementations by the authors..
          \begin{itemize}[leftmargin=*]
              \item Interpretable ratio: same as others;
              \item Penalty weight: $\{0.5,1,2,4,8,16,32\}$;
              \item Number of environments: N/A;
          \end{itemize}
\end{itemize}
\begin{itemize}[leftmargin=*]
    \item \ourst:
          \begin{itemize}[leftmargin=*]
              \item Interpretable ratio: same as others;
              \item Penalty weight: $\{0.5,1,2,4,8,16,32\}$;
              \item Environment assistant: $\{\texttt{vanilla GNN}, \texttt{XGNN}\}$;
              \item Sampling proxy: $\{\texttt{label predictions}, \texttt{cluster predictions}\}$;
              \item Number of environments: same as others;
          \end{itemize}
\end{itemize}

All of the graph learning methods adopt an interpretable GNN as the backbone by default. The only exception is MoleOOD, we follow the original implementation while using a shared GNN encoder for the variational losses to ensure the fairness of comparison. Besides, for DisC, we find the soft masking implementation in two-piece graphs will incur a severe performance degeneration hence we use a ratio of $25\%$ for the interpretable GNN backbone.

For environment inferring methods, we search the number of environments
\begin{itemize}[leftmargin=*]
    \item Two-piece graphs: fixed as $3$ (since there are $3$ spurious graphs);
    \item CMNIST-sp: $2$ (since there are $2$ environments);
    \item Graph-SST2: $\{2,3,4\}$ following previous practice~\citep{gil};
    \item DrugOOD datasets: $\{2,3,5,10,20\}$ following previous practice~\citep{moleood}.;
\end{itemize}

\textbf{Implementations of \ourst.}
For a fair comparison, \ours uses the same GNN architecture for GNN encoders as the baseline methods.
By default, we fix the temperature to be $1$ in the contrastive loss,
and merely search the penalty weight of the contrastive loss from $\{0.5,1,2,4,8,16,32\}$ according to the validation performances, following the \ciga implementations~\citep{ciga}.
By default, we implement the environment assistant as a ERM model, and adopt directly the environment assistant predictions to sample possible and negative graph pairs.
Nevertheless, as discussed in Sec.~\ref{sec:gala_sol} that there could be multiple implementation choices for the environment assistant and the use of its predictions. We hence also try with XGNN based environment assistant model and clustering based proxy predictions.
By default, the selection of the environment assistant model is performed via best training performance, as which encourages a better fit to the dominant subgraph patterns, while we also try the model selection with best validation performance in DrugOOD datasets and find it empirically sometimes leads to better performance.
All the options for the selection of the environment assistant models depend on the validation performance.
For Two piece graphs, EC50-Scaffold, EC50-Size, Ki-Assay, Ki-Scaffold, CMNIST-sp and Graph-SST2, we find implementing the environment assistant as a ERM model already yield impressive improvements.
While for the other DrugOOD datasets, we implement the environment assistant as an interpretable GNN trained with ERM and cluster the learned graph representations of the model to sample positive and negative pairs.

Since \ours imposes a strong regularization to the data that may hinder the learning of graph representations, we pre-train the model by $10$ epochs using ERM and then impose the \ours penalty implemented as one-side contrastive loss as discussed in Sec.~\ref{sec:gala_impl_appdx}.
When the numbers of positive and negative pairs are extremely imbalanced, we will upsample the minor groups by a factor of $\{2,3,4\}$, depending on the validation performance.

\subsection{Software and Hardware}
\label{sec:exp_software_appdx}
We implement our methods with PyTorch~\citep{pytorch} and PyTorch Geometric~\citep{pytorch_geometric}.
We ran our experiments on Linux Servers installed with V100 graphics cards and CUDA 10.2.

\subsection{Computational analysis}
\label{sec:time_appdx}
\begin{table}[H]
    \center\small
    \caption{Averaged total training time of different methods.}
    \label{table:time_analysis}
    \begin{tabular}{lcccc}
        \toprule
        \textbf{Datasets} & Two-piece graphs   & EC50-Assay        & CMNIST-sp          & Graph-SST2          \\\midrule
        ERM               & 435.85\std{2.14}   & 80.45\std{10.27}  & 315.84\std{5.55}   & 374.31\std{1.28}    \\
        XGNN              & 673.82\std{0.81}   & 126.65\std{17.57} & 591.09\std{11.48}  & 722.44\std{48.51}   \\
        GREA              & 1128.28\std{34.57} & 210.30\std{21.23} & 902.06\std{8.49}   & 979.15\std{18.3114} \\
        GSAT              & 1205.67\std{62.54} & 228.88\std{25.04} & 791.55\std{15.67}  & 949.57\std{97.68}   \\
        DisC              & 1244.68\std{4.76}  & 207.50\std{17.72} & 932.40\std{76.99}  & 1280.77\std{551.97} \\
        MoleOOD           & 714.06\std{6.53}   & 136.39\std{17.87} & 439.49\std{9.10}   & 712.31\std{81.62}   \\
        GIL               & 533.46\std{11.42}  & 279.30\std{25.39} & 919.53\std{14.15 } & 733.36\std{147.08}  \\
        CIGA              & 873.49\std{16.21}  & 167.63\std{1.10}  & 650.94\std{5.01}   & 792.10\std{59.12}   \\
        GALA-cluster      & 811.41\std{3.20}   & 147.97\std{2.05}  & 756.41\std{21.63}  & 765.32\std{20.86}   \\
        GALA-pred         & 793.27\std{8.58}   & 149.89\std{2.71}  & 644.78\std{53.58}  & 764.69\std{30.98}   \\
        \bottomrule
    \end{tabular}
\end{table}

We calculate the average total training time of different methods at various datasets in seconds. As shown in Table.~\ref{table:time_analysis}, the training of GALA (no matter with clustering based sampling or prediction based sampling) does not bring much additional overhead than its counterpart CIGA.
When considering the additional training time of the assistant model with ERM, GALA costs only a competitive training time as environment generation based methods such as GREA and DisC. Notably, some methods such as DisC and GIL sometimes may be slow to converge even with the same early stop setting, which will cost even more time than the time cost by GALA plus the ERM training.
Besides, the ERM training time (for a assistant model) is not much long and usually around 5mins (or 300seconds in the table).

\end{document}